\documentclass[twoside,11pt]{article}
\usepackage[preprint]{jmlr2e}
\usepackage{lastpage}
\jmlrheading{1}{2026}{1-\pageref{LastPage}}{5/26}{5/26}{26-0000}{Jian Qian and Shu Ge}
\ShortHeadings{$(\alpha,\beta)$-Stability for Boosting Vector-Valued Prediction}{Qian and Ge}
\firstpageno{1}

\title{\texorpdfstring{$(\alpha,\beta)$}{(alpha,beta)}-Stability for Boosting Vector-Valued Prediction}

\author{\name Jian Qian \email jianqian@hku.hk \\
       \addr The University of Hong Kong
       \AND
       \name Shu Ge \email geshu1026qq@gmail.com \\
        \addr Independent Researcher} 
\editor{}

\newcommand{\nonl}{\renewcommand{\nl}{\let\nl}}

\usepackage{aliascnt}

\makeatletter
\@ifundefined{lemma}{}{%

}
\makeatother

\newaliascnt{lemma}{theorem}
\newtheorem{lemma}[lemma]{Lemma}
\aliascntresetthe{lemma}

\newaliascnt{assumption}{theorem}

\aliascntresetthe{assumption}

\makeatletter
\@ifundefined{proposition}{}{%

}
\makeatother

\newaliascnt{proposition}{theorem}
\newtheorem{proposition}[proposition]{Proposition}
\aliascntresetthe{proposition}

\makeatletter
\@ifundefined{corollary}{}{%

}
\makeatother

\newaliascnt{corollary}{theorem}
\newtheorem{corollary}[corollary]{Corollary}
\aliascntresetthe{corollary}

\makeatletter
\@ifundefined{definition}{}{%

}
\makeatother

\newaliascnt{definition}{theorem}
\newtheorem{definition}[definition]{Definition}
\aliascntresetthe{definition}

\makeatletter
\@ifundefined{remark}{}{%

}
\makeatother

\newaliascnt{remark}{theorem}
\newtheorem{remark}[remark]{Remark}
\aliascntresetthe{remark}

\newcommand{\pfref}[1]{Proof of \Cref{#1}.}

\usepackage[utf8]{inputenc}
\usepackage{amsmath}
\usepackage{graphicx}
\usepackage{amsfonts}

\usepackage{mathtools}
\usepackage{mathrsfs}
\usepackage{xcolor}
\usepackage{booktabs}
\usepackage{bbold}
\usepackage{bm}
\usepackage{enumitem}
\usepackage{framed}
\usepackage{comment}
\usepackage{bbm}

\usepackage{algorithm}
\usepackage{algpseudocode}
\usepackage{cleveref}
\algrenewcommand\algorithmicrequire{\textbf{Input:}}

\crefname{lemma}{Lemma}{Lemmas}
\Crefname{lemma}{Lemma}{Lemmas}
\crefname{assumption}{Assumption}{Assumptions}
\Crefname{assumption}{Assumption}{Assumptions}
\crefname{proposition}{Proposition}{Propositions}
\Crefname{proposition}{Proposition}{Propositions}
\crefname{corollary}{Corollary}{Corollaries}
\Crefname{corollary}{Corollary}{Corollaries}
\crefname{definition}{Definition}{Definitions}
\Crefname{definition}{Definition}{Definitions}
\crefname{remark}{Remark}{Remarks}
\Crefname{remark}{Remark}{Remarks}

\makeatletter
\renewenvironment{proof}[1][Proof]{\par\noindent{\bfseries #1\ }}{\hfill\BlackBox\\[2mm]}
\makeatother

\newcommand{\bR}{\mathbb{R}}

\newcommand{\norm}[1]{\lVert {#1} \rVert}

\renewcommand{\leq}{\leqslant}
\renewcommand{\geq}{\geqslant}
\renewcommand{\le}{\leqslant}
\renewcommand{\ge}{\geqslant}
\newcommand{\argmin}{\mathop{\mathrm{arg}\,\mathrm{min}}}

\newcommand{\indic}{\mathbb{I}}

\newcommand{\eps}{\varepsilon}

\newcommand{\KL}{\mathrm{KL}}
\newcommand{\TV}{\mathrm{TV}}

\newcommand{\ldef}{\vcentcolon=}

\newcommand{\GKL}{\KL}
\newcommand{\Hels}{\Hel^2}

\newcommand{\cX}{\mathcal{X}}
\newcommand{\cY}{\mathcal{Y}}

\newcommand{\cH}{\mathcal{H}}

\usepackage{xspace}

\DeclarePairedDelimiter{\brk}{[}{]}

\DeclarePairedDelimiter{\prn}{(}{)}
\DeclarePairedDelimiter{\nrm}{\|}{\|}
\DeclarePairedDelimiter{\inner}{\langle}{\rangle}
\DeclarePairedDelimiter{\set}{\{}{\}}

\definecolor{myRed}{HTML}{D24D5C}
\definecolor{myGreen}{RGB}{46,139,87}

\usepackage{color-edits}
\addauthor{jq}{myRed}
\addauthor{sg}{myGreen}

\newcommand{\med}{\mathrm{med}}

\newcommand{\divv}{\mathrm{div}}
\newcommand{\Hel}{\mathrm{H}}
\newcommand{\R}{\mathbb{R}}
\newcommand{\convp}{\mathrm{co}_p}
\newcommand{\labelspace}[1][T]{\cY_{#1}}

\newcommand{\medboost}{\textsc{MedBoost}\xspace}
\newcommand{\adaboost}{\textsc{AdaBoost}\xspace}
\newcommand{\adaboostmh}{\textsc{AdaBoost.MH}\xspace}
\newcommand{\samme}{\textsc{SAMME}\xspace}
\newcommand{\geomedboost}{\textsc{GeoMedBoost}\xspace}
\newcommand{\WL}{\mathsf{WL}}

\newcommand{\cD}{\mathcal{D}}
\newcommand{\En}{\mathbb{E}}
\newcommand{\cS}{\mathcal{S}}
\newcommand{\sphere}{\ensuremath{\cS^{d-1}}}

\addtocontents{toc}{\protect\setcounter{tocdepth}{0}}

\begin{document}

\maketitle

\begin{abstract}
Despite the widespread use of boosting in structured prediction, a general theoretical understanding
of aggregation beyond scalar prediction remains incomplete.
We study vector-valued prediction under a target divergence and identify a
geometric stability property under which aggregation amplifies weak guarantees
into strong ones.
We formalize this property as $(\alpha,\beta)$-stability by geometric median and show how it
supports a boosting algorithm based on exponential reweighting and geometric-median aggregation.
For vector-valued prediction, we characterize this stability property under several natural
divergences: 
$\ell_1$ and $\ell_2$ distances for unconstrained vector-valued
prediction, and $\TV$, $\Hel$, and $\KL$ for density estimation over finite
probability vectors.
Building on these results, we propose a boosting algorithm \geomedboost.
Under a weak learner condition and $(\alpha,\beta)$-stability, we obtain exponential decay of the
empirical divergence error, which then yields population guarantees through a
generalization bound.
\end{abstract}

\begin{keywords}
boosting, vector-valued prediction, geometric median, generalization
\end{keywords}

\section{Introduction}

Boosting is an ensemble method that turns a sequence of weak predictors into a
stronger predictor, typically by repeatedly fitting new weak predictors to the
examples or residuals emphasized by the current ensemble. It was originally
introduced as a method for improving performance in binary $0$--$1$
classification, most notably through AdaBoost \citep{freund1997decision}. It
was subsequently extended to real-valued regression
\citep{friedman2001greedy,kegl2003robust,mason1999functional,buhlmann2003boosting,chen2016xgboost}
and multi-class classification
\citep{schapire1997using,schapire1998improved,allwein2000reducing,zhu2009multi,mukherjeeschapire2013multiclass,brukhim2021multiclasscost,brukhim2023improper},
among many others. These developments led to a rich body of theory connecting
weak learnability, loss minimization, and stagewise additive modeling
\citep{mason1999functional,friedman2001greedy,buhlmann2003boosting,mukherjeeschapire2013multiclass,brukhim2021multiclasscost,brukhim2023improper}.
Beyond these settings, a number of works have also proposed boosting
algorithms for more general prediction problems, including structured outputs
\citep{ratliff2006mmpboost,shen2014structboost}.
A detailed discussion of both theoretical and algorithmic lines of work is deferred to \cref{sec:related-work}.

In this paper, we take a further step and ask whether one can give a comparable
theoretical understanding for general vector-valued prediction. Our starting
point is the median-boosting analysis of
\citet{kegl2003robust,kegl2004medianconvergence}, which replaces a real-valued
regression error by a binary exceedance surrogate: for a tolerance
$\varepsilon>0$, a prediction is counted as wrong when its absolute difference 
exceeds $\varepsilon$. We extend this exceedance viewpoint from scalar
deviations to general divergences on vector-valued outputs. 
Thus, we propose a boosting algorithm, \geomedboost, that obtains exponential decay of empirical exceedance loss and
then transfer this empirical guarantee to population error under mild assumptions through a
generalization argument.

The core conceptual contribution is the identification of the geometric
property that makes this boosting guarantee possible. We call this property
$(\alpha,\beta)$-stability by geometric median. It states that under the target divergence if more than an
$\alpha$ fraction of the weighted ensemble predictions lie within an
$\varepsilon$-ball around the target point, then the geometric-median aggregate lies
within a $\beta\varepsilon$-ball around the same target point. Thus a binary
exceedance guarantee for the ensemble at tolerance $\varepsilon$ is converted
into a binary exceedance guarantee for the final aggregate at the larger
tolerance $\beta\varepsilon$. We will term $\alpha$ the concentration level and $\beta$ the enlargement factor.

We show that this stability property holds for several natural divergences.
For unconstrained vector-valued prediction, geometric-median aggregation yields
sharp stability guarantees under the $\ell_1$ and $\ell_2$ distances. We then
turn to conditional density estimation, viewed as constrained vector-valued
prediction on the probability simplex. In this setting, total variation and
Hellinger distance mirror the behavior of $\ell_1$ and $\ell_2$, respectively.
Finally, for KL
divergence, direct geometric-median aggregation is not stable enough. Instead,
we obtain KL exceedance guarantees indirectly by aggregating under Hellinger
distance and converting the resulting Hellinger control to KL control.

All in all, we present a boosting algorithm for vector-valued
prediction, enabled by the geometric property of
$(\alpha,\beta)$-stability. The algorithm comes with theoretical guarantees for
empirical exceedance loss and population error, and the stability property is
verified for the natural divergences $\ell_1$, $\ell_2$, $\TV$, $\Hel$, and
$\KL$. These results advance the theoretical understanding of boosting for
vector-valued prediction and open the door to broader structured prediction
settings.

\paragraph{Contributions.}
Our main contributions are summarized as follows:
\begin{itemize}
  \item \textbf{Boosting algorithm and theoretical guarantees.}
  \begin{itemize}
    \item We propose \geomedboost, a boosting algorithm for
    vector-valued prediction under a target divergence. The algorithm combines
    exponential reweighting, weak learners measured through an
    $\varepsilon$-exceedance loss, and final aggregation by a geometric median. It recovers  classical algorithms such as \textsc{MedBoost}, \textsc{AdaBoost}, and \textsc{SAMME} as special cases. \looseness=-1
    \item Under the weak learner condition and
    $(\alpha,\beta)$-stability, we prove exponential decay of the empirical
    exceedance loss. We also show how this empirical guarantee transfers to
    population exceedance, and to expected divergence under boundedness assumption, via
    a generalization bound.
  \end{itemize}
  \item \textbf{\texorpdfstring{$(\alpha,\beta)$}{alpha,beta} stability through geometric median.}
  \begin{itemize}
    \item For unconstrained vector-valued prediction, we give sharp
    characterizations of $(\alpha,\beta)$-stability under the $\ell_1$ and
    $\ell_2$ distances. 
    The $\ell_1$ is $(1/2,d)$-stable where $d$ is the ambient dimension, while 
    the $\ell_2$ result gives a
    dimension-free tradeoff between the concentration level $\alpha$ and the
    enlargement factor $\beta$.
    \item For conditional density estimation over the probability simplex, we
    establish stability guarantees for total variation and Hellinger distance,
    paralleling the behavior of $\ell_1$ and $\ell_2$. For KL divergence, we
    show that direct geometric-median aggregation is not stable enough, but KL
    exceedance guarantees can be obtained indirectly through Hellinger
    aggregation.\looseness=-1
  \end{itemize}  
\end{itemize}

\paragraph{Organization.}
The remainder of the paper is organized as follows.
\Cref{sec:related-work} discusses connections to median boosting, multiclass boosting, and structured prediction by boosting.
\Cref{sec:prelim} introduces the prediction tasks and the weighted geometric median used for aggregation.
\Cref{sec:geomedboost-algorithm} presents the boosting algorithm
\geomedboost.
\Cref{sec:boostability} introduces $(\alpha,\beta)$-stability by geometric median and states the main
boosting guarantee built on this stability principle, together with the
generalization guarantees. \Cref{sec:stability-results} gives the stability
characterizations for unconstrained vector-valued prediction and conditional
density estimation.

\section{Related Work}
\label{sec:related-work}

\paragraph{Regression by median boosting.}
Median-type aggregation is a standard tool in robust estimation: aggregating multiple noisy candidates via a median
can strengthen weak concentration properties into high-probability guarantees \citep{lugosi2019subgaussianmean}.
More generally, \citet{minsker2015geomedian} study geometric median aggregation in normed spaces and provide geometric arguments establishing robustness guarantees for vector estimation.
In contrast to this robust-estimation setting, our use of median aggregation arises within a boosting algorithm and is limited to the final aggregation step.
Specifically, the median is used to combine a collection of weak predictors produced during the boosting procedure into a single output predictor, rather than serving as a standalone estimator.
For regression, \citet{kegl2003robust,kegl2004medianconvergence} analyze boosting algorithms in which averaging is replaced by a median-based aggregation rule.
In this paper, we use a closely related geometric argument to \citet{minsker2015geomedian} in the $\ell_2$ setting to analyze boosting for vector-valued prediction.
We further establish analogous geometric properties for other divergences, which characterize when median-type aggregation yields a valid final-step aggregation rule in boosting. \looseness=-1

\paragraph{Multiclass boosting and weak-learning criteria.}
Multiclass classification can be viewed as a finite-label instance of structured output prediction.
Boosting for multiclass classification has been studied with multiple, non-equivalent weak-learning formulations
\citep{mukherjeeschapire2013multiclass,brukhim2021multiclasscost,bressan2025dice}.
Representative algorithms include AdaBoost.MH/MR \citep{schapiresinger1999confidence} and direct multiclass variants such as
SAMME \citep{zhu2009multi}.
A substantial body of theoretical work analyzes the weak-learning conditions and performance
guarantees of multiclass boosting algorithms.
In particular, \citet{mukherjeeschapire2013multiclass} characterizes necessary-and-sufficient weak-learning conditions
for a tailored loss.
More recent work examines the resource/class dependence of multiclass boosting
and how to improve it via alternative weak-learning interfaces
\citep{brukhim2021multiclasscost,brukhim2022multiclasslearnability,charikar2022listlearnability,brukhim2023weakcriteria,brukhim2023improper}.
Relatedly, \citet{zou2024factorizedmh} gives a convergence guarantee for a factorized AdaBoost.MH variant.
In our setting, multiclass classification corresponds to a finite-dimensional vector-valued
prediction problem.
Our results provide a geometric perspective on aggregation that applies to this setting, recovering
certain existing guarantees as special cases while extending to more general structured output spaces.

\paragraph{Structured prediction by boosting.}
Structured prediction has been studied in settings such as vector-valued regression in RKHSs \citep{micchelli2005vector}.
In contrast, boosting for structured outputs is largely application-driven—e.g.,
column-generation/functional-gradient methods \citep{ratliff2006mmpboost,shen2014structboost} and
boosting potentials in structured models \citep{dietterich2004crfboost}—with guarantees that are
typically loss- and model-specific. This leaves a gap in general guarantees for learning structured
vector-valued predictors under constraints such as nonnegativity, normalization, or other geometric
restrictions on the output space.

\section{Preliminaries}
\label{sec:prelim}

Let $d\geq 1$ be an integer. We consider supervised learning with covariates $x \in \cX$ and vector-valued responses $y \in \cY \subseteq \R^d$. A predictor is a function $f : \cX \to \cY$, and performance is measured by a divergence $\divv : \cY \times \cY \to \R_{\geq 0}$ where $\divv(y,y)=0$ for all $y\in \cY$ and $\divv(y,y')>0$ for any $y\neq y'\in \cY$. Given a sample $\{(x_i,y_i)\}_{i=1}^n$ with $(x_i,y_i) \in \cX \times \cY$, we assume the data are drawn i.i.d.\ from an unknown distribution $\cD$. Our goal is to control the population risk
\[
\En_{(X,Y)\sim \cD}\!\left[\divv\bigl(Y, f(X)\bigr)\right].
\]

The main examples in this paper are the $\ell_1$ and $\ell_2$ distances for
general vector-valued prediction as well as conditional density
estimation as a constrained vector-valued prediction problem: in this case
$\cY$ is the probability simplex
$\Delta_d\;\ldef\;\set{p\in \R_+^d:\sum_{j=1}^d p_j=1}$
so each response is a probability mass function over a finite label space. On
the simplex, the divergences of interest are total variation $\TV$, Hellinger
distance $\Hel$, and KL divergence $\KL$. We defer the definitions of the divergences to the \cref{sec:stability-results}. 

Rather than directly minimizing the loss on the training samples, we use a boosting algorithm that iteratively improves performance via two components. First, a reweighting scheme generates a sequence of predictors, each satisfying a weak guarantee with respect to adaptively chosen sample weights. Second, these predictors are aggregated into a single predictor whose accuracy is measured under the same divergence $\divv$. This combination converts weak, weighted guarantees into a strong final guarantee.

The weak guarantee is formalized through a surrogate loss that upper bounds the $\varepsilon$-exceedance event induced by the divergence. Specifically, for any divergence $\divv$ and tolerance level $\varepsilon>0$, we introduce a loss function $C_\varepsilon$ satisfying
\[
C_\varepsilon(y,y') \;\ge\; \indic\{\divv(y,y')>\varepsilon\}
\qquad \forall\,y,y',
\]
where $\indic\{\cdot\}$ denotes the indicator of the event in
braces.
This loss formulation follows the path of \medboost\ \citep{kegl2003robust},
which extends the binary $0$--$1$ loss used in AdaBoost-style analyses
\citep{schapire1998improved} to absolute deviations exceedance between scalar
predictions. Here we take the next step by replacing scalar deviations with
divergence-based exceedance for vector-valued, and potentially more structured,
predictions.
Concretely, a generic choice is $C_\varepsilon(y,y') = \divv(y,y')/\varepsilon$, whereas for the one-dimensional $\ell_2$ divergence one may take the tighter quadratic surrogate $(y-y')^2/\varepsilon^2$. We use the following weak-learner interface.
\begin{definition}[Weak learner with parameter $\alpha$]
\label{def:weak-learner}
Let $0\leq \alpha\leq 1$. For any distribution $w$ over
$\{x_1,\dots,x_n\}$, a weak learner with parameter $\alpha$ is a map $\WL$
that outputs the hypothesis $h=\WL(w,C_\varepsilon)$ satisfying
\[
\sum_{i=1}^n w(x_i)\,C_{\varepsilon}\bigl(y_i,h(x_i)\bigr)\;\le\;1-\alpha.
\]
\end{definition}

If we take $C_{\varepsilon}(y,y') = \indic\{\divv(y,y')>\varepsilon\}$, the guarantee reduces to a weighted error condition: under any reweighting $w$, 
at least an $\alpha$ fraction (in weight) of the samples are predicted within $\varepsilon$ under $\divv$.

The aggregation rule must combine a collection of potentially noisy predictions into a single stable output. In our setting, this is achieved by extending the notion of the geometric median to general divergences over vector-valued outputs. Let $T\geq 1$ be an integer. Given weighted predictions $(\labelspace,\eta)$, the geometric median selects a point that minimizes the average divergence to the ensemble, thereby balancing the influence of all candidates \citep{weiszfeld1937point}.
Specifically, for a collection of predictions
$\labelspace = (y_1, \dots, y_T)$ with $y_t \in \cY$ and corresponding positive weights $\eta=(\eta_1,\dots,\eta_T) \in \Delta_T$, where we also write $\eta(y_t)=\eta_t$. We define the weighted geometric median as follows.
\begin{definition}[Weighted geometric median]
\label{def:geomed}
The \emph{weighted geometric median} of $(\labelspace,\eta)$ under a divergence $\divv$ is
\[
\med(\labelspace,\eta;\divv)\;\ldef\;\argmin_{g\in \cY}\sum_{t=1}^T \eta_t\,\divv(y_t,g).
\]
When $\divv(y,z)=\|y-z\|_p$, we write $\med_p(\labelspace,\eta)$. For $\divv\in\{\TV,\Hel,\sqrt{\KL}\}$ we write
$\med_{\TV}(\labelspace,\eta)$, $\med_{\Hel}(\labelspace,\eta)$, and
$\med_{\sqrt{\KL}}(\labelspace,\eta)$, respectively.
\end{definition}

For any set $B\subset \bR^d$, the associated notation $\eta(B)$ quantifies how much weight lies in a region $B$, $\eta(B)\ldef \sum_{t:y_t\in B} \eta_t$.
For any divergence $\divv$ and $\eps>0$, define the ball
\[
B_{\varepsilon}(z;\divv)\;\ldef\;\{y:\divv(z,y)\le \varepsilon\}
\]
to capture neighborhoods of accuracy around a reference point.

From a statistical viewpoint, the optimization problem above is an M-estimator
\citep{huber1964robust}. We use the term geometric median to emphasize the
geometric and stability aspects of the aggregation rule. This choice is also
computationally convenient: the divergences considered in this paper lead to
convex median objectives, which can be handled by standard convex-optimization
methods such as gradient descent. Understanding whether other aggregation rules
yield comparable stability guarantees is an interesting direction for future
work.

Together, these notions will allow us to formalize stability we desire in \cref{sec:boostability}.

\section{Geometric Median Boosting Algorithm}
\label{sec:geomedboost-algorithm}

\subsection{\texorpdfstring{$\gamma$}{gamma}-Perturbed Geometric Median}

We now describe how geometric-median aggregation enters the boosting algorithm.
The weighted geometric median introduced in \Cref{def:geomed} provides a
natural way to combine the round-wise predictions produced by boosting.
More generally, it is useful to consider an extension in which a $\gamma$
fraction of the total aggregation weight may be reassigned arbitrarily within
the ensemble. This perspective highlights the robustness of geometric-median
aggregation and will later recover the classical aggregation viewpoints behind the 
\adaboost\ \citep{bartlett1998boosting,ratsch2001marginal,ratsch2005efficient} and \medboost\
\citep{kegl2003robust}. To facilitate this discussion, we first introduce the
corresponding notion of perturbation.

A convenient way to formalize this is through a $\gamma$-neighborhood of
weighted sets. Passing from $(\mathcal Y,\eta)$ to a neighboring weighted set
allows up to a $\gamma$ portion of the total weight to be redistributed to any other points. 
The next definition makes this precise.

\begin{definition}[$\gamma$-neighborhood]
Let $(\labelspace,\eta)$ and $(\labelspace[T']',\eta')$ be a weighted set.
Define the distance between weighted sets by
\[
D\bigl((\labelspace,\eta),(\labelspace[T']',\eta')\bigr)
\;\ldef\;
\frac12\sum_{y\in \labelspace\cup\labelspace[T']'} |\eta(y)-\eta'(y)|,
\]
where $\eta(y)=0$ for $y\notin \labelspace$ and $\eta'(y)=0$ for $y\notin \labelspace[T']'$.
For $\gamma\in[0,1)$, define the $\gamma$-neighborhood
\[
\mathcal B_\gamma(\labelspace,\eta)
\;\ldef\;
\{(\labelspace[T']',\eta'):\ D((\labelspace,\eta),(\labelspace[T']',\eta'))\le\gamma\}.
\]
\end{definition}

\begin{definition}[$\gamma$-perturbed geometric median]
Let $0\le\gamma<1$.
The \emph{$\gamma$-perturbed (weighted) geometric median} of $(\labelspace,\eta)$
under divergence $\divv$ is
\[
\med(\labelspace,\eta;\divv,\gamma)
\;\ldef\;
\bigcup_{(\labelspace[T']',\eta')\in\mathcal B_\gamma(\labelspace,\eta)}
\med(\labelspace[T']',\eta';\divv).
\]
\end{definition}

The $\gamma$-perturbed geometric median therefore collects all outputs that can
arise after perturbing at most a $\gamma$ fraction of the weighted ensemble.
In the binary case $\cY=\set{0,1}$ with $0$--$1$ divergence and weights
$\eta=(\eta_0,\eta_1)$, the $\gamma$-perturbed geometric median as a set is equal to $\{0\}$ if
$\eta_0>\tfrac12+\gamma$, $\{1\}$ if $\eta_1>\tfrac12+\gamma$, and
$\{0,1\}$ otherwise. Thus a unique prediction exists exactly when one class has
vote margin exceeding $2\gamma$, matching the classical margin viewpoint in
\adaboost\ \citep{bartlett1998boosting}; an analogous interpretation is also true for median-based regression, matching \medboost \citep{kegl2003robust}.
In \geomedboost, this general notion is instantiated pointwise with
$\labelspace=\{h_t(x)\}_{t=1}^T$ and normalized aggregation weights
$\{\eta_t'\}_{t=1}^T$.

\subsection{The Algorithm \geomedboost}
\label{sec:weak-learner-and-geomedboost}

The algorithm \geomedboost (\cref{alg:boosting-geo-med}) follows the standard
boosting procedure, so the reweighting mechanism is intentionally close to
classical boosting. It maintains sample weights $w_t$, repeatedly invokes the
weak learner under the current weighted sample to obtain $h_t$, and then
updates the sample weights by exponential reweighting so that examples with
larger surrogate loss receive more emphasis in later rounds.

Each round also outputs a boosting coefficient $\eta_t$, which is later
normalized into the aggregation weight $\eta_t'$. This coefficient is chosen by
minimizing a one-dimensional convex objective \eqref{alg:def:eta}. When the weak learner improves
strictly over the target level $1-\alpha$, the minimizing coefficient is
positive, and this positivity ensures that the surrogate loss decreases across
rounds, as shown in the analysis below (\cref{sec:main-boosting-theorem}).\looseness=-1

What changes in our setting is the final aggregation step. Classical boosting
algorithms aggregate round-wise hypotheses by averaging, taking signs, or
voting over labels, depending on the prediction space. For vector-valued outputs, these scalar-specific aggregation rules are no longer
canonical. \geomedboost keeps the weak-to-strong reweighting logic of
classical boosting, but replaces the final scalar aggregation rule with a
$\gamma$-perturbed geometric median under $\divv$. After $T$ rounds, the final
predictor is obtained by aggregating $\{h_t(x)\}_{t=1}^T$ in this way using the
normalized weights $\{\eta_t'\}_{t=1}^T$. The perturbation parameter $\gamma$
is used to state a robustness guarantee: even if a $\gamma$ portion of the
weighted ensemble is reassigned arbitrarily, 
the same analysis still controls
the aggregated prediction adaptively to $\gamma$. \looseness=-1

\begin{algorithm}
\caption{\geomedboost}
\label{alg:boosting-geo-med}
\begin{algorithmic}[1]
\Require Data $\{(x_i,y_i)\}_{i=1}^n$, weak learner $\WL$, loss $C_\varepsilon$, divergence $\divv$, rounds $T$, concentration level $\alpha$, perturbation level $\gamma$.
\State $w_1(x_i)=1/n$ for all $i$.
\For{$t=1,\dots,T$}
\State $h_t\leftarrow \WL(w_t,C_\varepsilon)$.
\State Compute the round boosting coefficient $\eta_t$ by
\Statex
\begin{equation}
\eta_t=\argmin_{\eta>0}\sum_i w_t(x_i)\exp\bigl(-\eta(1-\alpha-C_\varepsilon(y_i,h_t(x_i)))\bigr).
\label{alg:def:eta}
\end{equation}
\State $\displaystyle
w_{t+1}(x_i)\propto w_t(x_i)\exp\bigl(\eta_t C_\varepsilon(y_i,h_t(x_i))\bigr)$.
\EndFor
\State $\eta_t'\leftarrow \eta_t/\sum_{s=1}^T\eta_s$ for $t\in [T]$.
\State Output $f_T(x)\in \med(\{h_t(x)\}_{t=1}^T,\set{\eta_t'}_{t=1}^T;\divv,\gamma)$.
\end{algorithmic}
\end{algorithm}

\paragraph{Connection to Classical Boosting Methods}
This formulation makes the relationship with classical boosting explicit. 
In binary classification with the
$0$--$1$ divergence, take $\varepsilon=0$ and
$C_\varepsilon(y_i,h_t(x_i))=\indic\{h_t(x_i)\ne y_i\}$. Then the objective in
\eqref{alg:def:eta} becomes
\[
\sum_{i:h_t(x_i)\ne y_i} w_t(x_i)e^{\eta/2}
\;+\;
\sum_{i:h_t(x_i)=y_i} w_t(x_i)e^{-\eta/2}.
\]
Writing the weighted error as
$\epsilon_t=\sum_i w_t(x_i)\indic\{h_t(x_i)\ne y_i\}$, the minimizer is
$\eta_t=\log((1-\epsilon_t)/\epsilon_t)$. The final geometric median is then
weighted majority vote.
This is precisely the AdaBoost mechanism in geometric form
\citep{freund1997decision}. In multi-class classification, the same viewpoint
yields weighted class voting, and combined with the exponential reweighting
step above it specializes to \samme \citep{zhu2009multi} and \adaboostmh
\citep{schapire1998improved}. In one-dimensional regression with absolute loss, the geometric median coincides
with the weighted median, so the final aggregation step reduces to the rule
used by \medboost \citep{kegl2003robust}. Thus, \geomedboost 
preserves the classical boosting algorithms in their corresponding tasks.

\section{\texorpdfstring{$(\alpha,\beta)$}{(alpha,beta)}-Stability and Boosting Guarantee}
\label{sec:boostability}
\label{sec:main-boosting-theorem}

We now introduce the core stability property of any given divergence that makes \geomedboost effective.
In general, one could call a divergence stable if there exists an aggregation rule
with the following property: whenever an $\alpha$ fraction of the weighted ensemble lies within
an $\eps$-ball around a target point $z$, every admissible aggregate lies within a
$\beta\eps$-ball around the same target. The aggregation rule is therefore allowed to enlarge
the accuracy scale by a controlled factor $\beta$, but it is not allowed to move away from a
region that already contains enough ensemble weight.

In this paper we focus on the geometric median as the aggregation rule. This is the most natural
choice for high-dimensional vector-valued predictions: it is defined directly from the divergence,
does not rely on scalar ordering, and reduces to familiar median or voting rules in classical
special cases.

\begin{definition}[$(\alpha,\beta)$-stability by geometric median]
Let $\alpha\in (\tfrac12,1)$ and $\beta\geq 1$. We say that a divergence $\divv$ is
\emph{$(\alpha,\beta)$-stable by geometric median} if geometric-median aggregation has the
concentration-preservation property described above. Concretely, for every weighted ensemble
$(\labelspace,\eta)$, every target point $z$, and every accuracy level $\eps\geq 0$, if at least
an $\alpha$ fraction of the ensemble weight lies in the $\divv$-ball
$B_{\varepsilon}(z;\divv)$, then every geometric median of the ensemble lies in the enlarged ball
$B_{\beta\varepsilon}(z;\divv)$:
\[
\eta\!\left(B_{\varepsilon}(z;\divv)\right)\geq\alpha
\quad\Longrightarrow\quad
\med(\labelspace,\eta;\divv)\subseteq B_{\beta\varepsilon}(z;\divv).
\]
We refer to $\alpha$ as the concentration level and $\beta$ as the enlargement factor.
\end{definition}

This stability condition is the main conceptual contribution of the paper. It isolates the
geometric fact needed for boosting: once enough predictions concentrate around a point,
geometric-median aggregation converts that weighted concentration into accuracy of the final
prediction. As a simple consequence, if a single prediction carries weight exceeding $\alpha$,
then taking $\eps=0$ forces every geometric median to coincide with that prediction.

The definition is stated for arbitrary weighted ensembles rather than only for uniform distributions on two (or any fixed number of) points. This generality is likely necessary: the lower-bound constructions for
$\ell_1$ and $\ell_2$ stability in \cref{sec:boostability-for-vec-prediction} requires weighted ensemble with the number of points scaling with the dimension and is unclear how to simplify in terms of the weights or the number of points.

For a predictor $f$ and radius $r\geq 0$, define the empirical exceedance loss
\[
L_{\divv,r}(f)
\;\ldef\;
\frac1n\sum_{i=1}^n \indic\{\divv(y_i,f(x_i))>r\}.
\]
\begin{theorem}
\label{thm:error-bound}
Suppose $\WL$ is a weak learner with parameter $\alpha$, $\divv$ is
$(\alpha-\gamma,\beta)$-stable by geometric median, and $f_T$ is the output of
\geomedboost. Then
\[
L_{\divv,\beta\varepsilon}(f_T)
\;\le\;
\prod_{t=1}^{T}
\sum_{i=1}^n
w_t(x_i)\exp\!\Bigl(-\eta_t\bigl(1-\alpha-C_{\varepsilon}(y_i,h_t(x_i))\bigr)\Bigr).
\]
Moreover, if for some $\zeta>0$ with $\alpha+\zeta\leq 1$ the same weak learner
has uniform edge $\zeta$, meaning it is a weak learner with parameter
$\alpha+\zeta$, then, with
\[
M\;\ldef\;
\sup_{t,i} C_\varepsilon(y_i,h_t(x_i))
-
\inf_{t,i} C_\varepsilon(y_i,h_t(x_i)),
\]
we have
\[
L_{\divv,\beta\varepsilon}(f_T)
\leq
\exp\!\left(-\frac{2\zeta^2}{M^2}T\right).
\]
Consequently, the empirical exceedance loss becomes zero once
$T>M^2\log(n)/(2\zeta^2)$.
\end{theorem}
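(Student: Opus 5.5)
The plan follows the classical AdaBoost analysis, with the final-step geometry supplied by stability. Write $c_{t,i} = C_\varepsilon(y_i,h_t(x_i))$ and $Z_t = \sum_i w_t(x_i)\exp(-\eta_t(1-\alpha-c_{t,i}))$.

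\textbf{Step 1: pointwise reduction.} Fix $i$ with $\divv(y_i,f_T(x_i))>\beta\varepsilon$, and let $A_i=\{t: \divv(y_i,h_t(x_i))\le\varepsilon\}$. We claim that
\[
\sum_t \eta_t' \,\indic\{t\in A_i\} < \alpha.
\]
Suppose instead that $\eta'(B_\varepsilon(y_i;\divv))\ge\alpha$. Now $f_T(x_i)$ is a geometric median of some $(\mathcal Y',\eta'')$ with $D\le\gamma$ from $(\{h_t(x_i)\},\eta')$. Moving at most $\gamma$ of the weight removes at most $\gamma$ mass from the ball, so $\eta''(B_\varepsilon(y_i;\divv))\ge\alpha-\gamma$. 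By $(\alpha-\gamma,\beta)$-stability with $z=y_i$, we get $f_T(x_i)\in B_{\beta\varepsilon}(y_i;\divv)$, which is a contradiction.

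Since $c_{t,i}\ge \indic\{t\notin A_i\}$, we obtain
\[
\sum_t \eta_t' c_{t,i} \ge 1-\eta'(A_i) > 1-\alpha, \quad\text{that is,}\quad \sum_t\eta_t(c_{t,i}-(1-\alpha))>0.
\]
Here we use that $\eta_t>0$. The positivity is implicit in \eqref{alg:def:eta}; if the infimum is attained only as $\eta\to0$, I would treat that round as contributing weight $0$ and factor $1$, and note this degenerate case.

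\textbf{Step 2: telescoping.} Unroll the update to get
\[
w_{T+1}(x_i)=\frac{\exp\bigl(\sum_t\eta_t c_{t,i}\bigr)}{n\prod_t \tilde Z_t},
\qquad \tilde Z_t=\sum_i w_t(x_i)e^{\eta_t c_{t,i}}.
\]
Then $\prod_t Z_t = \prod_t \tilde Z_t\, e^{-\eta_t(1-\alpha)}$, so
\[
\frac1n\sum_i \exp\Bigl(\sum_t \eta_t(c_{t,i}-(1-\alpha))\Bigr)=\prod_t Z_t\sum_i w_{T+1}(x_i)=\prod_t Z_t.
\]
By Step 1, each exceeding index contributes an exponential greater than $1$, and the rest are nonnegative. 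This bounds $L_{\divv,\beta\varepsilon}(f_T)$ by $\prod_t Z_t$, which is the first claim.

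\textbf{Step 3: rate.} Since $\eta_t$ minimizes over $\eta>0$, for any fixed $\lambda>0$ we have
\[
Z_t\le \sum_i w_t(x_i)e^{\lambda(c_{t,i}-(1-\alpha))}.
\]
Under weak learning with parameter $\alpha+\zeta$, the random variable $c_{t,I}$ with $I\sim w_t$ has mean at most $1-\alpha-\zeta$. It also lies in an interval of length at most $M$. Hoeffding's lemma then gives
\[
Z_t\le \exp\bigl(-\lambda\zeta+\lambda^2M^2/8\bigr).
\]
Choosing $\lambda=4\zeta/M^2$ yields $Z_t\le e^{-2\zeta^2/M^2}$, and the product gives the exponential bound. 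Finally, $L$ takes values in $\{0,1/n,\dots\}$, so it vanishes once the bound drops below $1/n$, that is, once $T>M^2\log n/(2\zeta^2)$.

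The main obstacle is Step 1, specifically the bookkeeping of the $\gamma$-perturbation. The perturbed ensemble $\mathcal Y'$ may contain new points, some of which could lie in the ball. That only helps, because we need only a lower bound on the mass in the ball. The mass removed is at most $D\le\gamma$, since the total variation between the two weightings bounds the decrease on any set. This gives exactly the stability parameter $\alpha-\gamma$.
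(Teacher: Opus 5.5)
Your proof is correct and follows the paper's argument step for step. Step 1 is the paper's Lemma~\ref{lem:core}: apply stability to the perturbed ensemble, transfer the mass bound back through the $\gamma$-neighborhood, then use $C_\varepsilon\ge\indic\{\divv>\varepsilon\}$. Step 2 is the same telescoping of the exponential reweighting, and Step 3 is exactly Lemma~\ref{lem:hoeffding-round-factor} with $\lambda=4\zeta/M^2$. You are slightly more explicit than the paper in showing that a $\gamma$-perturbation removes at most $\gamma$ mass from the ball, and you obtain a strict inequality, but neither changes the substance.
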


The iteration bound is especially transparent when the range parameter $M$ is
known in advance. For example, if $C_\varepsilon$ is an indicator loss, then
$M\leq 1$, so the number of rounds needed to drive the empirical exceedance
loss to zero can be fixed before running the algorithm.

The theorem has the same multiplicative form as classical boosting guarantees. At round $t$, the boosting
coefficient $\eta_t$ is positive 
under strict weak improvement.
Thus each successful
round contributes a factor smaller than one to the empirical exceedance bound. 
The parameter
$\gamma$ makes the stability requirement stronger since the final aggregation must remain controlled
even after a $\gamma$ fraction of the aggregation weight is reassigned arbitrarily. With the
appropriate specialization of the divergence and aggregation rule, this recovers the usual
AdaBoost-type guarantees \citep{schapire1998improved}, the margin-based AdaBoost guarantees
\citep{bartlett1998boosting,ratsch2001marginal,ratsch2005efficient}, and the median-aggregation
guarantee of \medboost\ \citep{kegl2003robust}.

The proof is a direct generalization of the \medboost\ argument. The exponential reweighting
analysis is essentially unchanged; the new ingredient is the $(\alpha,\beta)$-stability
implication, which turns weighted concentration of the weak predictions into an exceedance bound
for the aggregated predictor. The next section verifies this stability property for the
divergences considered in the paper.

The theorem is an empirical statement about divergences on the training sample. Population and
generalization guarantees are deferred to \cref{sec:generalization}.

We first record the key consequence of stability needed for the exponential-potential argument.

\begin{lemma}
\label{lem:core}
Suppose $\WL$ is a weak
learner with parameter $\alpha$ and the divergence $\divv$ is $(\alpha-\gamma,\beta)$-stable by geometric median. If
$\divv(y_i, f_T(x_i)) > \beta\varepsilon$, then
\[
\sum_{t=1}^{T} \eta_t (1-\alpha - C_{\varepsilon}(y_i,h_t(x_i))) \leq 0.
\]
\end{lemma}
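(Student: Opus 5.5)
We argue by contraposition. Suppose the weighted sum is strictly positive,
\[
\sum_{t=1}^{T} \eta_t \bigl(1-\alpha - C_{\varepsilon}(y_i,h_t(x_i))\bigr) > 0,
\]
and aim to show $\divv(y_i,f_T(x_i))\le\beta\varepsilon$. The plan has three steps: turn this inequality into a statement about how much normalized weight $\eta_t'$ sits in the ball $B_\varepsilon(y_i;\divv)$, absorb the $\gamma$-perturbation, and then invoke $(\alpha-\gamma,\beta)$-stability with target point $z=y_i$.

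\textbf{Step 1 (weight in the ball).} Divide by $\sum_s\eta_s>0$; the $\eta_t$ are positive by the choice in \eqref{alg:def:eta}. This gives $\sum_t \eta_t' C_\varepsilon(y_i,h_t(x_i)) < 1-\alpha$. Since $C_\varepsilon(y,y')\ge \indic\{\divv(y,y')>\varepsilon\}$, we get $\sum_t \eta_t'\,\indic\{\divv(y_i,h_t(x_i))>\varepsilon\} < 1-\alpha$. Equivalently, the ensemble $(\{h_t(x_i)\}_t,\{\eta'_t\}_t)$ satisfies $\eta'(B_\varepsilon(y_i;\divv)) > \alpha$. One bookkeeping point: if several $h_t(x_i)$ coincide, their weights are merged as points of the weighted set, and this does not change ball masses.

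\textbf{Step 2 (absorbing the perturbation).} By definition, $f_T(x_i)$ is a geometric median of some weighted set $(\labelspace[T']',\eta'')\in\mathcal B_\gamma$ of the ensemble. A distance $D\le\gamma$ means total variation at most $\gamma$ between the two weightings. Hence for any set $B$, $|\eta''(B)-\eta'(B)|\le\gamma$. Therefore $\eta''(B_\varepsilon(y_i;\divv)) > \alpha-\gamma$.

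\textbf{Step 3 (stability).} Apply $(\alpha-\gamma,\beta)$-stability to the perturbed ensemble $(\labelspace[T']',\eta'')$ with $z=y_i$. Every element of $\med(\labelspace[T']',\eta'';\divv)$, in particular $f_T(x_i)$, lies in $B_{\beta\varepsilon}(y_i;\divv)$. So $\divv(y_i,f_T(x_i))\le\beta\varepsilon$, contradicting the hypothesis of the lemma.

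\textbf{Expected obstacles.} Each step is short, so the difficulty lies in bookkeeping rather than in any single estimate. First, the ball is defined as $B_\varepsilon(z;\divv)=\{y:\divv(z,y)\le\varepsilon\}$ while the loss is written $\divv(y_i,h)$; for asymmetric divergences such as $\KL$ the orientation must match the one used in the stability definition. Second, the stability definition needs its level $\alpha-\gamma$ to lie in $(\tfrac12,1)$, so this must be implicitly assumed. Third, we obtain a strict inequality $\eta''(B)>\alpha-\gamma$, which is more than enough, since the definition only asks for $\ge$. Fourth, positivity of every $\eta_t$ is needed both to normalize and to keep the weights in $\Delta_T$. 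If some round has no edge, one would take $\eta_t$ to be the infimum, which is $0$, and drop that hypothesis from the ensemble; this leaves the sums unchanged.
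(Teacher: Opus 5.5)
Your proposal is correct. It is essentially the paper's proof run as a contrapositive: the paper starts from $\divv(y_i,f_T(x_i))>\beta\varepsilon$, applies $(\alpha-\gamma,\beta)$-stability to the perturbed ensemble having $f_T(x_i)$ as a median to get ball mass at most $\alpha-\gamma$, transfers this to mass at most $\alpha$ under the original weights $\eta'_t$ via the $\gamma$ bound on reassigned weight, and finishes with $C_\varepsilon\ge\indic\{\divv>\varepsilon\}$ — the same three ingredients you use, in reverse order (and your orientation concern is moot, since $B_\varepsilon(y_i;\divv)=\{y:\divv(y_i,y)\le\varepsilon\}$ matches the argument order in both $C_\varepsilon(y_i,h_t(x_i))$ and $\divv(y_i,f_T(x_i))$).
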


\begin{proof}[\pfref{lem:core}]
By the definition of the $\gamma$-perturbed median, there exists a perturbed weighted ensemble
$(\set{u_s}_{s=1}^S,\set{\rho_s}_{s=1}^S)\in
\mathcal B_\gamma(\{h_t(x_i)\}_{t=1}^T,\set{\eta_t'}_{t=1}^T)$ such that
\[
f_T(x_i)\in
\med(\set{u_s}_{s=1}^S,\set{\rho_s}_{s=1}^S;\divv).
\]
The definition of $(\alpha-\gamma,\beta)$-stability and the assumption
$\divv(y_i,f_T(x_i))>\beta\varepsilon$ imply
\[
\sum_{s=1}^S \rho_s \indic\{\divv(y_i,u_s)\leq \varepsilon\}\leq \alpha-\gamma.
\]
Since the perturbed ensemble differs from the original ensemble by at most $\gamma$ reassigned
mass, the original normalized aggregation weights satisfy
\[
\sum_{t=1}^T \eta_t' \indic\{\divv(y_i,h_t(x_i))\leq \varepsilon\}\leq \alpha.
\]
Equivalently, at least a $(1-\alpha)$ fraction of the unnormalized aggregation weight lies outside
the $\varepsilon$-ball:
\[
\sum_{t=1}^{T} \eta_t \indic\{\divv(y_i,h_t(x_i))> \varepsilon\}
\geq
(1-\alpha)\sum_{t=1}^{T}\eta_t.
\]
Using $C_\varepsilon(y,y')\geq \indic\{\divv(y,y')>\varepsilon\}$, we get
\[
\sum_{t=1}^{T} \eta_t C_{\varepsilon}(y_i,h_t(x_i))
\geq
(1-\alpha)\sum_{t=1}^{T}\eta_t.
\]
Rearranging gives the claim.
\end{proof}

\begin{proof}[\pfref{thm:error-bound}]
By \cref{lem:core},
\begin{align*}
L_{\divv,\beta\varepsilon}(f_T)
&= \frac{1}{n} \sum_{i=1}^{n} \indic \{ \divv(y_i, f_T(x_i)) >\beta \varepsilon \} \\
&\leq \frac{1}{n} \sum_{i=1}^{n}
\indic \left\{ -\sum_{t=1}^{T} \eta_t
\prn*{1-\alpha - C_{\varepsilon}(y_i,h_t(x_i))} \geq 0 \right\} \\
&\leq \frac{1}{n} \sum_{i=1}^{n}
\exp \left( -\sum_{t=1}^{T} \eta_t
\prn*{1-\alpha - C_{\varepsilon}(y_i,h_t(x_i))} \right) \\
&= \prod_{t=1}^{T}
\left( \sum_{i=1}^{n} w_t(x_i)
\exp\prn*{-\eta_t\prn*{1-\alpha - C_{\varepsilon}(y_i,h_t(x_i))}} \right),
\end{align*}
where the last equality follows by telescoping the exponential reweighting
updates in \cref{alg:boosting-geo-med}.

For the exponential bound, fix a round $t$ and view
$Z_t(i)=1-\alpha-C_\varepsilon(y_i,h_t(x_i))$ as a random variable under
$w_t$. Since $\WL$ is a weak learner with parameter $\alpha+\zeta$,
$\En_{w_t} Z_t\geq \zeta$, and by definition its range length is at most $M$.
Since $\eta_t$ minimizes the one-dimensional objective in \eqref{alg:def:eta},
\Cref{lem:hoeffding-round-factor} gives
\[
\sum_i w_t(x_i)
\exp\!\Bigl(-\eta_t\bigl(1-\alpha-C_\varepsilon(y_i,h_t(x_i))\bigr)\Bigr)
\leq
\exp\!\left(-\frac{2\zeta^2}{M^2}\right).
\]
Multiplying over $t=1,\dots,T$ gives the stated exponential decay. Since
$L_{\divv,\beta\varepsilon}(f_T)$ is an empirical fraction, it must be
zero whenever the upper bound is smaller than $1/n$.
\end{proof}

\subsection{Generalization}
\label{sec:generalization}

The guarantee in \Cref{thm:error-bound} controls empirical exceedance loss on
the training sample. We now explain how the same geometric-median mechanism
leads to a population guarantee for finite base classes. The argument follows
the same route as the classical generalization analysis of median boosting
\citep{kegl2004medianconvergence} and margin-based boosting
\citep{bartlett1998boosting}: we
first measure whether the ensemble places enough aggregation weight near the
target, and then reduce the population bound to uniform convergence over a
finite class of sampled aggregates.

Let $(X,Y)\sim\cD$ be an independent test point. For a predictor $f$ and radius
$r\ge 0$, define the expected exceedance loss
\[
\bar L_{\divv,r}(f)
\;\ldef\;
\Pr_{(X,Y)\sim\cD}\{\divv(Y,f(X))>r\}.
\]
Recall that $L_{\divv,r}(f)$ denotes the corresponding empirical exceedance
loss on the training sample.

Throughout this section, whenever a geometric-median set is not a singleton, we
use one fixed measurable selector. This convention also applies to the
$\gamma$-perturbed geometric median, whose definition is itself a union of
geometric-median sets over perturbed ensembles. For concreteness, one may choose
an element of minimum norm, with a fixed deterministic
tie-breaking rule if needed. Thus each aggregation rule below is viewed as a
point-valued predictor.

We work with a known weak-learner class $\cH$, and each weak predictor produced
by \geomedboost\ belongs to this class. Let $h_1,\dots,h_T\in\cH$ be the weak
predictors produced by \geomedboost, and let
$\eta'_t=\eta_t/\sum_{s=1}^T\eta_s$ be the normalized aggregation weights. For
$\rho\in(0,1-\alpha)$, define the empirical concentration failure loss
\[
L^{(\rho)}_{\divv,\varepsilon}(f_T)
\;\ldef\;
\frac1n\sum_{i=1}^n
\indic\!\left\{
\sum_{t=1}^T
\eta'_t\indic\{\divv(y_i,h_t(x_i))\le \varepsilon\}
\le \alpha+\rho
\right\}.
\]
This quantity counts training points for which the ensemble fails to put an
$(\alpha+\rho)$ fraction of its aggregation weight inside the
$\varepsilon$-ball around the target. The stability implication itself only
needs the ensemble mass to be above the level at which the geometric median is
stable, while the extra $\rho$ is a buffer used by the finite-sampling argument
below. Thus the theorem uses $(\alpha-\gamma,\beta)$-stability and compares
population exceedance to this stronger empirical failure event with threshold
$\alpha+\rho$.

\begin{theorem}
\label{thm:finite-generalization}
Assume that $\cH$ is finite and that $\divv$ is
$(\alpha-\gamma,\beta)$-stable by geometric median. Fix
$\rho\in(0,1-\alpha)$. Then there exists a universal constant $c>0$
such that, with probability at least $1-\delta$ over the draw of the training
sample, every predictor $f_T$ output by \geomedboost satisfies
\[
\bar L_{\divv,\beta\varepsilon}(f_T)
\le
L^{(\rho)}_{\divv,\varepsilon}(f_T)
+
c\sqrt{\frac{\log n\log |\cH|}{\rho^2 n}}
+
c\sqrt{\frac{\log(1/\delta)}{n}}.
\]
\end{theorem}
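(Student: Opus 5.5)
We follow the Schapire–Freund–Bartlett–Lee margin argument and replace the voting threshold by the stability threshold. Fix a test point $(x,y)$ and write $\Phi(x,y)\ldef\sum_{t=1}^T\eta'_t\indic\{\divv(y,h_t(x))\le\varepsilon\}$ for the weighted concentration of the ensemble around $y$. The stability step is the deterministic part of \cref{lem:core}. Suppose $\Phi(x,y)>\alpha$. Any ensemble in $\mathcal B_\gamma$ of the original one still places more than $\alpha-\gamma$ of its weight in $B_\varepsilon(y;\divv)$. Hence, by $(\alpha-\gamma,\beta)$-stability, every element of the $\gamma$-perturbed median lies in $B_{\beta\varepsilon}(y;\divv)$, and in particular so does the selected $f_T(x)$. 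This gives
\[
\bar L_{\divv,\beta\varepsilon}(f_T)\le \Pr_{\cD}\{\Phi(X,Y)\le\alpha\}.
\]
What remains is to bound this population concentration-failure probability by its empirical counterpart at the inflated threshold $\alpha+\rho$.

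For that we use sparsification by sampling. Draw $N$ indices $t_1,\dots,t_N$ i.i.d.\ from $\eta'$ and set $\hat\Phi_N(x,y)=\frac1N\sum_{k=1}^N\indic\{\divv(y,h_{t_k}(x))\le\varepsilon\}$. Conditionally on $(x,y)$, $\hat\Phi_N$ is an average of bounded i.i.d.\ variables with mean $\Phi(x,y)$, so by Hoeffding any deviation of size $\rho/2$ has probability at most $e^{-N\rho^2/2}$. Taking expectation over the sampling,
\[
\Pr_{\cD}\{\Phi\le\alpha\}\le \Pr_{\cD,\mathrm{samp}}\{\hat\Phi_N\le\alpha+\rho/2\}+e^{-N\rho^2/2}
\]
and
\[
\Pr_{\mathrm{samp}}\{\hat\Phi_N\le\alpha+\rho/2\}\text{ averaged over the sample}\le L^{(\rho)}_{\divv,\varepsilon}(f_T)+e^{-N\rho^2/2}.
\]
The sampled object is determined by a multiset of $N$ elements of $\cH$, so it ranges over a finite class of size at most $|\cH|^N$. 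The threshold $\alpha+\rho/2$ is fixed, and $\hat\Phi_N$ takes only $N+1$ values, so no further union over thresholds is needed.

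Uniform convergence comes next. For each fixed multiset $g$, the event $\{\hat\Phi_N^g(x,y)\le\alpha+\rho/2\}$ is a fixed indicator. Hoeffding plus a union bound over the $|\cH|^N$ multisets gives, with probability $1-\delta_N$, a gap of at most $\sqrt{(N\log|\cH|+\log(1/\delta_N))/(2n)}$ between population and empirical frequencies, simultaneously for all multisets. Averaging over the sampling distribution preserves this bound for the actual output, and this is what lets the statement hold for every $f_T$ without depending on the data-dependent $\eta'$.

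The main obstacle is balancing the parameters so the rate comes out as stated. We take $N=\lceil 2\log n/\rho^2\rceil$, which makes $e^{-N\rho^2/2}\le1/n$. The complexity term then becomes $\sqrt{\log n\log|\cH|/(\rho^2 n)}$ and the confidence term becomes $\sqrt{\log(1/\delta)/n}$, while the $2/n$ slack is absorbed into the constant $c$ since $1/n\le\sqrt{1/n}$. One detail needs care: $N$ is fixed in advance, so only a single union bound over $|\cH|^N$ is needed, with no union over $N$ or $\rho$. We must also note that the measurable selector plays no role, because the stability implication constrains every element of the perturbed median.
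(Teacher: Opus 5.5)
Your proposal is correct and follows the paper's proof essentially step for step. Stability gives $\bar L_{\divv,\beta\varepsilon}(f_T)\le\Pr_{\cD}\{U_{\eta'}(X,Y)\le\alpha\}$; then come the sampling of $N$ indices from $\eta'$, the two one-sided Hoeffding bounds at the intermediate threshold $\alpha+\rho/2$, the union bound over at most $|\cH|^N$ sampled tuples (your multisets are an inessential variant), and the choice $N=\lceil 2\rho^{-2}\log n\rceil$. Your absorption of the $2/n$ slack into $c$ tacitly needs $n\ge 2$ and $|\cH|\ge 2$, since $1/\sqrt{n}$ must be dominated by one of the displayed terms; the paper's ``absorbing constants'' makes the same assumption.
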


This theorem controls population exceedance by the empirical concentration failure
loss.
The margin parameter $\rho$ is the price paid for replacing the
continuous aggregation weights $\eta'$ by finite empirical estimates in the
uniform convergence argument.

\begin{proof}[\pfref{thm:finite-generalization}]
Fix an output $f_T$ with weak predictors $h_1,\dots,h_T$ and aggregation
weights $\eta'=(\eta'_1,\dots,\eta'_T)$. For a point $(x,y)$, write
\[
U_{\eta'}(x,y)
\;\ldef\;
\sum_{t=1}^T
\eta'_t\indic\{\divv(y,h_t(x))\le \varepsilon\}.
\]
If $U_{\eta'}(x,y)>\alpha$, then every weighted ensemble in the
$\gamma$-neighborhood has more than an $\alpha-\gamma$ fraction of its mass in
$B_\varepsilon(y;\divv)$. By $(\alpha-\gamma,\beta)$-stability, every
admissible $\gamma$-perturbed geometric median lies in
$B_{\beta\varepsilon}(y;\divv)$. Hence
\[
\indic\{\divv(y,f_T(x))>\beta\varepsilon\}
\le
\indic\{U_{\eta'}(x,y)\le \alpha\}.
\]

Let $N$ be an integer to be chosen later. Sample
$J_1,\dots,J_N$ independently from $[T]$ according to $\eta'$, and define the
empirical estimate
\[
U_N(x,y)
\;\ldef\;
\frac1N\sum_{s=1}^N
\indic\{\divv(y,h_{J_s}(x))\le \varepsilon\}.
\]
For fixed $(x,y)$, this is an average of i.i.d.\ Bernoulli variables with mean
$U_{\eta'}(x,y)$. Hoeffding's inequality therefore gives
\[
\Pr_J\{U_N(x,y)>\alpha+\rho/2\mid U_{\eta'}(x,y)\le \alpha\}
\le
\exp(-N\rho^2/2),
\]
and
\[
\Pr_J\{U_N(x,y)\le \alpha+\rho/2\mid U_{\eta'}(x,y)>\alpha+\rho\}
\le
\exp(-N\rho^2/2).
\]
Averaging the first inequality over $(X,Y)\sim\cD$ yields
\[
\Pr_{\cD}\{U_{\eta'}(X,Y)\le \alpha\}
\le
\En_J\Pr_{\cD}\{U_N(X,Y)\le \alpha+\rho/2\}
+
\exp(-N\rho^2/2),
\]
while averaging the second over the training sample gives
\[
\En_J\frac1n\sum_{i=1}^n
\indic\{U_N(x_i,y_i)\le \alpha+\rho/2\}
\le
L^{(\rho)}_{\divv,\varepsilon}(f_T)
+
\exp(-N\rho^2/2).
\]

For an ordered tuple $(g_1,\dots,g_N)\in\cH^N$, define
\[
\phi_{g_1,\dots,g_N}(x,y)
\;\ldef\;
\indic\!\left\{
\frac1N\sum_{s=1}^N
\indic\{\divv(y,g_s(x))\le \varepsilon\}
\le \alpha+\rho/2
\right\}.
\]
There are at most $|\cH|^N$ such functions. A finite-class uniform convergence
bound implies that, with probability at least $1-\delta$, simultaneously for
all $(g_1,\dots,g_N)\in\cH^N$,
\begin{equation}
\label{ineq:uniform-conv-bound}
\Pr_{\cD}\{\phi_{g_1,\dots,g_N}(X,Y)=1\}
\le
\frac1n\sum_{i=1}^n \phi_{g_1,\dots,g_N}(x_i,y_i)
+
\sqrt{\frac{N\log|\cH|+\log(1/\delta)}{2n}}.
\end{equation}
Applying this bound to the random tuple $(h_{J_1},\dots,h_{J_N})$ and taking
expectation over $J$ gives
\[
\bar L_{\divv,\beta\varepsilon}(f_T)
\le
L^{(\rho)}_{\divv,\varepsilon}(f_T)
+2\exp(-N\rho^2/2)
+
\sqrt{\frac{N\log|\cH|+\log(1/\delta)}{2n}}.
\]
Choosing $N=\lceil 2\rho^{-2}\log n\rceil$ and absorbing constants into the
universal constant $c$ proves the claim.
\end{proof}

The same exponential-potential proof used for \Cref{thm:error-bound} also
controls the empirical concentration failure loss. If the boosting coefficients are chosen with the
shifted level $\alpha+\rho$,
then replacing $\alpha$ by $\alpha+\rho$ in the proof of
\Cref{thm:error-bound} gives exponential decay of
$L^{(\rho)}_{\divv,\varepsilon}(f_T)$. 
Thus the finite-class generalization
bound below has two parts: uniform convergence transfers the empirical
concentration failure loss to population exceedance
, and the empirical boosting can drive the empirical
concentration failure loss to zero in logarithmically many rounds.

\begin{corollary}
\label{cor:finite-generalization-margin}
Assume the conditions of \Cref{thm:finite-generalization}. Fix
$\zeta>0$ such that $\alpha+\rho+\zeta\le 1$. Suppose the boosting
coefficients are chosen from the shifted objective
\[
\eta_t
=\argmin_{\eta>0}
\sum_i w_t(x_i)
\exp\bigl(-\eta(1-\alpha-\rho-C_\varepsilon(y_i,h_t(x_i)))\bigr),
\]
and the weak learner has parameter $\alpha+\rho+\zeta$. Let
\[
M\;\ldef\;
\sup_{t,i} C_\varepsilon(y_i,h_t(x_i))
-
\inf_{t,i} C_\varepsilon(y_i,h_t(x_i)).
\]
Then
\[
L^{(\rho)}_{\divv,\varepsilon}(f_T)
\le
\exp\!\left(-\frac{2\zeta^2}{M^2}T\right).
\]
Consequently, if $T>M^2\log(n)/(2\zeta^2)$, then
$L^{(\rho)}_{\divv,\varepsilon}(f_T)=0$, and with probability at least
$1-\delta$,
\[
\bar L_{\divv,\beta\varepsilon}(f_T)
\le
c\sqrt{\frac{\log n\log |\cH|}{\rho^2 n}}
+
c\sqrt{\frac{\log(1/\delta)}{n}}.
\]
\end{corollary}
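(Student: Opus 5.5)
The corollary has two parts. The first is an exponential bound on the empirical concentration failure loss $L^{(\rho)}_{\divv,\varepsilon}(f_T)$, obtained by rerunning the exponential-potential argument of \Cref{thm:error-bound} at the shifted level $\alpha+\rho$. The second then plugs the resulting zero empirical loss into \Cref{thm:finite-generalization}. No stability property enters the first part: the indicator in $L^{(\rho)}$ already concerns the ensemble weights directly, not the aggregate $f_T$.

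\textbf{Step 1: replacement for \Cref{lem:core}.} Fix a training point $i$ with
\[
\sum_t \eta'_t\indic\{\divv(y_i,h_t(x_i))\le\varepsilon\}\le\alpha+\rho.
\]
Multiplying by $\sum_s\eta_s>0$ (positivity of the $\eta_t$ is guaranteed by the argmin over $\eta>0$) gives
\[
\sum_t\eta_t\indic\{\divv(y_i,h_t(x_i))>\varepsilon\}\ge(1-\alpha-\rho)\sum_t\eta_t .
\]
Using $C_\varepsilon\ge\indic\{\divv>\varepsilon\}$, this yields
\[
\sum_t\eta_t\bigl(1-\alpha-\rho-C_\varepsilon(y_i,h_t(x_i))\bigr)\le 0 .
\]

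\textbf{Step 2: potential bound.} Bound the failure indicator by
\[
\exp\Bigl(-\sum_t\eta_t\bigl(1-\alpha-\rho-C_\varepsilon(y_i,h_t(x_i))\bigr)\Bigr)
\]
and average over $i$. Because the reweighting $w_{t+1}\propto w_t\exp(\eta_tC_\varepsilon)$ differs from $w_t\exp(-\eta_t(1-\alpha-\rho-C_\varepsilon))$ only by a constant factor, the telescoping is identical to that in \Cref{thm:error-bound}. This gives the product bound
\[
L^{(\rho)}_{\divv,\varepsilon}(f_T)\le\prod_{t=1}^T\sum_i w_t(x_i)\exp\bigl(-\eta_t(1-\alpha-\rho-C_\varepsilon(y_i,h_t(x_i)))\bigr).
\]

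\textbf{Step 3: per-round factor.} Set $Z_t(i)=1-\alpha-\rho-C_\varepsilon(y_i,h_t(x_i))$. Since the weak learner has parameter $\alpha+\rho+\zeta$, we get $\En_{w_t}Z_t\ge\zeta$, and the range of $Z_t$ has length at most $M$. Because $\eta_t$ minimizes the shifted objective over $\eta>0$, \Cref{lem:hoeffding-round-factor} bounds each factor by $\exp(-2\zeta^2/M^2)$. Taking the product over $t$ gives the claimed decay.

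The only delicate point is checking that the lemma's hypotheses (nonnegative mean, bounded range) are met after the shift. They are, since the shift leaves the range of $Z_t$ unchanged and $\alpha+\rho+\zeta\le1$ keeps the weak-learner condition meaningful.

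\textbf{Step 4: conclusion.} If $T>M^2\log(n)/(2\zeta^2)$, the bound is strictly below $1/n$. Since $L^{(\rho)}$ is a multiple of $1/n$, it must equal $0$.

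Finally, \Cref{thm:finite-generalization} holds simultaneously for every output of \geomedboost with probability at least $1-\delta$. Its proof uses only that $f_T$ aggregates members of $\cH$ with some normalized weights, so it applies regardless of how the $\eta_t$ were chosen, including the shifted rule here. Substituting $L^{(\rho)}_{\divv,\varepsilon}(f_T)=0$ into its bound gives the stated population inequality.
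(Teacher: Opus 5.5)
Your proposal is correct and follows the paper's proof step for step. You replace \Cref{lem:core} with the ensemble-weight event directly, with no stability needed, run the same telescoping exponential potential at level $\alpha+\rho$, bound each round by \Cref{lem:hoeffding-round-factor}, and substitute $L^{(\rho)}_{\divv,\varepsilon}(f_T)=0$ into \Cref{thm:finite-generalization}. Your remark that \Cref{thm:finite-generalization} does not depend on how the $\eta_t$ are chosen, because its uniform convergence runs over $\cH^N$, is a small but welcome justification that the paper leaves implicit.
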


\begin{proof}[\pfref{cor:finite-generalization-margin}]
For a training point $(x_i,y_i)$, the event counted by
$L^{(\rho)}_{\divv,\varepsilon}(f_T)$ is
\[
\sum_{t=1}^T
\eta'_t\indic\{\divv(y_i,h_t(x_i))\le \varepsilon\}
\le \alpha+\rho.
\]
On this event, at least a $(1-\alpha-\rho)$ fraction of the aggregation weight
lies outside $B_\varepsilon(y_i;\divv)$. Since
$C_\varepsilon(y,y')\ge \indic\{\divv(y,y')>\varepsilon\}$, we have
\[
\sum_{t=1}^T
\eta_t(1-\alpha-\rho-C_\varepsilon(y_i,h_t(x_i)))\le 0.
\]
The rest is the exponential-potential proof of \Cref{thm:error-bound} with
$\alpha$ replaced by $\alpha+\rho$:
\[
L^{(\rho)}_{\divv,\varepsilon}(f_T)
\le
\prod_{t=1}^{T}
\sum_i w_t(x_i)
\exp\bigl(-\eta_t(1-\alpha-\rho-C_\varepsilon(y_i,h_t(x_i)))\bigr).
\]
Under the weak learner parameter $\alpha+\rho+\zeta$, the random variable
$1-\alpha-\rho-C_\varepsilon(y_i,h_t(x_i))$ has weighted mean at least
$\zeta$ and range length at most $M$. Applying
\Cref{lem:hoeffding-round-factor} to each round gives the exponential bound.
Once this bound is below $1/n$, the empirical fraction
$L^{(\rho)}_{\divv,\varepsilon}(f_T)$ must be zero. The final display follows
from \Cref{thm:finite-generalization}.
\end{proof}

Together with a boundedness assumption, we have the following bound on the expected divergence.

\begin{corollary}
\label{cor:finite-generalization-risk}
Assume the conditions of \Cref{cor:finite-generalization-margin}. Suppose, in
addition, that the divergence is uniformly bounded: there is a constant
$D_{\max}<\infty$ such that
\[
\divv(y,y')\le D_{\max}
\qquad \forall\,y,y'\in\cY.
\]
Then, with probability at least $1-\delta$,
\[
\En_{(X,Y)\sim\cD}\brk*{\divv(Y,f_T(X))}
\le
\beta\varepsilon
+
D_{\max}\left(
c\sqrt{\frac{\log n\log |\cH|}{\rho^2 n}}
+
c\sqrt{\frac{\log(1/\delta)}{n}}
\right).
\]
\end{corollary}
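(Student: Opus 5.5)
The plan is to split the expected divergence according to whether the exceedance event at radius $\beta\varepsilon$ occurs, and then reuse the population exceedance bound already obtained in \Cref{cor:finite-generalization-margin}. Nothing new is needed beyond a pointwise inequality followed by taking expectations.

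Fix the output $f_T$ and a test point $(X,Y)\sim\cD$. For every realization we bound
\[
\divv(Y,f_T(X))
\le
\beta\varepsilon\,\indic\{\divv(Y,f_T(X))\le\beta\varepsilon\}
+
D_{\max}\,\indic\{\divv(Y,f_T(X))>\beta\varepsilon\}.
\]
On the first event the divergence is at most $\beta\varepsilon$. On the complement it is at most $D_{\max}$, by the uniform boundedness assumption. Dropping the first indicator, which is at most $1$, and taking expectation over $\cD$ gives
\[
\En_{(X,Y)\sim\cD}\brk*{\divv(Y,f_T(X))}
\le
\beta\varepsilon + D_{\max}\,\bar L_{\divv,\beta\varepsilon}(f_T).
\]
The measurable-selector convention for the $\gamma$-perturbed geometric median makes $f_T$ a point-valued measurable predictor, so this expectation is well defined.

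The final step substitutes the high-probability bound on $\bar L_{\divv,\beta\varepsilon}(f_T)$ from \Cref{cor:finite-generalization-margin}. That corollary holds under the same conditions (shifted coefficients, weak learner parameter $\alpha+\rho+\zeta$, and $T>M^2\log(n)/(2\zeta^2)$), with the same universal constant $c$, and on the same event of probability at least $1-\delta$. Multiplying that bound by $D_{\max}$ yields the claim. The only place needing care is making sure the event and constants are inherited unchanged, which they are, since the argument above is deterministic given $f_T$.
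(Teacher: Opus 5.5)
Your proof is correct and is the intended argument. The paper states this corollary without a separate proof; it follows from the pointwise bound $\divv(Y,f_T(X))\le \beta\varepsilon+D_{\max}\indic\{\divv(Y,f_T(X))>\beta\varepsilon\}$ together with the high-probability bound on $\bar L_{\divv,\beta\varepsilon}(f_T)$ from \Cref{cor:finite-generalization-margin}. You are also right that the event and the constant $c$ carry over unchanged, since that bound holds simultaneously for every output $f_T$, including under the requirement $T>M^2\log(n)/(2\zeta^2)$ implicit in ``the conditions.''
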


For infinite weak-learner classes, the same proof strategy remains available
once this counting step is replaced by a covering-number step.

\section{Stability Results for Specific Divergences}
\label{sec:stability-results}

We now verify how the stability condition behaves for the divergences studied in this paper.
Identifying the corresponding $(\alpha,\beta)$-stability thresholds and properties is a technical contribution of this work.
We begin with vector-valued prediction under $\ell_1$ and $\ell_2$, and then turn to conditional
density estimation under $\TV$, $\Hel$, and $\KL$.

\subsection{Stability for Vector-Valued Prediction}
\label{sec:boostability-for-vec-prediction}

\begin{proposition}[$\ell_1$ stability]
\label{prop:l1-boostability}
Let $\ell_1(y,z)=\|y-z\|_1\ldef \sum_i |y_i-z_i|$ on $\R^d$.
\begin{enumerate}[label=(\roman*),leftmargin=1.6em]
\item For any $\alpha>1/2$, $\ell_1$ is $(\alpha,d)$-stable by geometric median:
if $\eta(B_\eps(z;\ell_1))>1/2$, then $\med_1(\labelspace,\eta)\subseteq B_{d\eps}(z;\ell_1)$.
\item For any $\alpha>1/2$ and any $\beta<d$, $\ell_1$ is not $(\alpha,\beta)$-stable by geometric median.
\end{enumerate}
\end{proposition}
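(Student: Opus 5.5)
The plan rests on one structural fact: the $\ell_1$ objective separates across coordinates. For $g\in\R^d$,
\[
\sum_t \eta_t\|y_t-g\|_1=\sum_{j=1}^d\sum_t\eta_t|y_{t,j}-g_j|,
\]
so $\med_1(\labelspace,\eta)$ is exactly the product of the one-dimensional weighted-median sets of the coordinates $\{y_{t,j}\}_t$. Both parts reduce to controlling one-dimensional weighted medians coordinate by coordinate.

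For part (i), fix $z$ and $\eps$, and suppose $\eta(B_\eps(z;\ell_1))>1/2$.
\begin{enumerate}[label=(\arabic*),leftmargin=1.6em]
\item Fix a coordinate $j$. Every $y_t$ in the ball satisfies $|y_{t,j}-z_j|\le\|y_t-z\|_1\le\eps$.
\item Hence more than half of the weight sits at values in $[z_j-\eps,z_j+\eps]$.
\item A one-dimensional weighted median $m$ needs weight at least $1/2$ on each side. If $m>z_j+\eps$, the weight at values $\ge m$ would be $<1/2$, which is impossible; the case $m<z_j-\eps$ is symmetric.
\item So every median satisfies $|g_j-z_j|\le\eps$ for each $j$. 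Summing over the $d$ coordinates gives $\|g-z\|_1\le d\eps$.
\end{enumerate}
This step is routine. The only care needed is with non-unique median intervals, and step (3) handles them because it bounds both endpoints.

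For part (ii), fix $\alpha>1/2$ and $\beta<d$. The plan is to build a weighted ensemble, with at least $\alpha$ of its weight in $B_\eps(0;\ell_1)$, whose coordinatewise median is close to $\eps\mathbf 1$, so that its $\ell_1$ distance to $0$ exceeds $\beta\eps$. The ensemble has two groups.
\begin{itemize}
\item Outliers at $R\mathbf 1$ with $R\gg\eps$, carrying total weight $1-\alpha$.
\item In-ball points chosen so that in every coordinate the weight at values $\ge(1-\delta)\eps$ is at least $\alpha-1/2$. This makes the weight at or above $(1-\delta)\eps$ reach at least $1/2$ in every coordinate, and the weighted median is pushed up to at least $(1-\delta)\eps$.
\end{itemize}
The first candidate for the in-ball group is the points $\eps e_j$ for $j=1,\dots,d$, or tilted versions $(1-\delta)\eps e_j+\tfrac{\delta\eps}{d-1}\sum_{k\ne j}e_k$, with weights tuned so that each coordinate gets its required share. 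Then $\delta\to0$ gives $\|g\|_1\to d\eps>\beta\eps$.

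The main obstacle is the weight accounting in this last step. Each in-ball point has coordinate sum at most $\eps$, so a single point cannot be large in many coordinates simultaneously. The requirement "weight $\ge\alpha-1/2$ at values near $\eps$ in every coordinate" must therefore be met by spreading the in-ball mass across $d$ directions. This is easy when $\alpha$ is close to $1/2$. Summing the requirement over coordinates gives roughly $d(\alpha-1/2)\eps\lesssim\alpha\eps$, so the construction forces
\[
\beta<\min\Bigl\{d,\ \tfrac{\alpha}{\alpha-1/2}\Bigr\}.
\]
The full range $\beta<d$ is reached only when $\alpha\le\tfrac{d}{2(d-1)}$. I therefore expect this construction to be where the argument for large $\alpha$ has to be scrutinized most carefully. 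I would first verify the small-$\alpha$ regime in full, and then check whether any alternative placement, such as asymmetric signs or outliers placed per coordinate, can beat the summation bound above.
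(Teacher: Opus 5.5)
\textbf{Part (i)} is the paper's argument. It uses separability of the $\ell_1$ objective, the one-dimensional median characterization applied to $[z_j-\eps,z_j+\eps]$, and summation over the $d$ coordinates.

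\textbf{Part (ii)} as you leave it is not a proof for $d\ge 3$ and large $\alpha$, and no proof is possible there. The paper uses your first candidate: $y_t=z+\eps e_t$ with weight $(1+2\delta)/(2d)$ for $t\le d$, plus an outlier $z+L\mathbf{1}$ with weight $(1-2\delta)/2$. It then asserts that $z_j+\eps$ is the coordinatewise median for every $\delta\in(0,\tfrac12)$. That assertion needs the mass strictly below $z_j+\eps$, namely $(d-1)(1+2\delta)/(2d)$, to be at most $\tfrac12$, i.e.\ $\delta\le\frac{1}{2(d-1)}$. So the paper's proof covers only the regime $\alpha\le\frac{d}{2(d-1)}$, exactly the limit you found. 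In that regime no tilt is needed: the points $z+\eps e_j$ with weight $\alpha/d$ each, plus outlier weight $1-\alpha$, already put $z+\eps\mathbf{1}$ in $\med_1(\labelspace,\eta)$.

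The alternative placement you plan to search for does not exist, because your summation heuristic is a rigorous bound. Let $W=\eta(B_\eps(z;\ell_1))>\tfrac12$ and $g\in\med_1(\labelspace,\eta)$.
\begin{itemize}
\item If $g_j>z_j$, the mass strictly below $g_j$ is at most $\tfrac12$. So the points with $y_{tj}\ge g_j$ carry mass at least $\tfrac12$, and the ball points among them, call them $S_j$, carry at least $W-\tfrac12$.
\item Define $S_j$ symmetrically if $g_j<z_j$, and set $S_j=\emptyset$ if $g_j=z_j$.
\item For $t\in S_j$ we have $|y_{tj}-z_j|\ge|g_j-z_j|$.
\end{itemize}
Therefore
\[
\bigl(W-\tfrac12\bigr)\|g-z\|_1\le\sum_{j}|g_j-z_j|\,\eta(S_j)\le\sum_j\sum_{t\in S_j}\eta_t|y_{tj}-z_j|\le\sum_{t:\,y_t\in B_\eps(z;\ell_1)}\eta_t\|y_t-z\|_1\le W\eps .
\]
This gives $\|g-z\|_1\le\frac{W}{W-1/2}\eps\le\frac{\alpha}{\alpha-1/2}\eps$ whenever $W\ge\alpha$.

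Hence for $d\ge 3$ and $\alpha>\frac{d}{2(d-1)}$, $\ell_1$ is $\bigl(\alpha,\frac{\alpha}{\alpha-1/2}\bigr)$-stable with $\frac{\alpha}{\alpha-1/2}<d$. For example, $d=3$, $\alpha=0.9$ gives stability with $\beta=2.25<3$. This contradicts (ii).

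You should close the proposal by stating the corrected version. Claim (ii) holds for $\alpha\le\frac{d}{2(d-1)}$, which covers every $\alpha$ when $d=2$, and it fails beyond that threshold. The factor $\frac{\alpha}{\alpha-1/2}$ is only an upper bound on the optimal enlargement in that range, but it already refutes the statement.
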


\begin{proof}[\pfref{prop:l1-boostability}]
We use the following standard characterization of weighted medians on the line.

\begin{lemma}[1-dimensional weighted median]
\label{lem:1d-median}
If an interval $I\subset \bR$ has $\eta(I)>1/2$, then every weighted median lies in $I$.
\end{lemma}

\begin{proof}[\pfref{lem:1d-median}]
In one dimension, $m$ minimizes $\sum_t\eta_t|u-y_t|$ iff the weight strictly to the left of $m$ and the weight strictly to the right of $m$ are both at most $1/2$. If $m$ lies outside an interval carrying weight $>1/2$, one of these two inequalities is violated.
\end{proof}

Let $y_t = (y_{t1},...,y_{td})$ for all $t\in [T]$.
The $\ell_1$ objective separates coordinate-wise:
\[
\sum_{t=1}^T\eta_t\|g-y_t\|_1
=\sum_{j=1}^d\sum_{t=1}^T\eta_t|g_j-y_{tj}|,
\]
so each coordinate of an $\ell_1$ geometric median is a one-dimensional weighted median of the corresponding coordinate sample. If $\eta(B_\eps(z;\ell_1))>1/2$, then, for every $j$,
\[
\eta(\set{y_t:y_{tj}\in[z_j-\eps,z_j+\eps]})
\ge \eta(B_\eps(z;\ell_1))>1/2.
\]
By \cref{lem:1d-median}, every median coordinate lies in $[z_j-\eps,z_j+\eps]$. Hence, for every $g^\star\in\med_1(\labelspace,\eta)$,
\[
\|g^\star-z\|_1=\sum_{j=1}^d |g^\star_j-z_j|\le d\varepsilon,
\]
which proves claim $(i)$.

For claim $(ii)$, fix $z\in\R^d$, $\varepsilon>0$,
$\delta\in(0,\tfrac12)$, and $L>\eps$. Let
$e_t$ be the $t$-th standard basis vector in $\R^d$, and let
$\indic=(1,\dots,1)\in\R^d$. Take $y_t=z+\varepsilon e_t$ with weight
$\eta_t=(1+2\delta)/(2d)$ for $t\le d$, and
$y_{d+1}=z+L\indic$ with weight $\eta_{d+1}=(1-2\delta)/2$.
Then
\[
\eta(B_\eps(z;\ell_1))=\sum_{t=1}^d \eta_t=\tfrac12+\delta>\tfrac12.
\]
For each coordinate $j$, the mass below $z_j+\varepsilon$ is $(d-1)(1+2\delta)/(2d)$, the mass at $z_j+\varepsilon$ is $(1+2\delta)/(2d)$, and the mass above it is $1/2-\delta$. Thus $z_j+\varepsilon$ is the unique weighted median in coordinate $j$, and hence
\[
\med_1(\labelspace,\eta)=\{z+\varepsilon\indic\}.
\]
The resulting median has distance $d\varepsilon$ from $z$, so no enlargement factor $\beta<d$ can guarantee inclusion in $B_{\beta\eps}(z;\ell_1)$.

\end{proof}

This $\ell_1$ guarantee is available as soon as the weighted predictions have
majority concentration after the $\gamma$ perturbation. 
The cost of the flexible $\ell_1$ stability
condition is the dimension-dependent enlargement factor $d$. The Euclidean case
has a different shape: it replaces this dimension dependence with a
dimension-free stability tradeoff between the required concentration level and
the enlargement factor.

\begin{proposition}[$\ell_2$ stability]
\label{prop:l2-boostability}
Let $d\ge 2$ and $\ell_2(y,z)=\|y-z\|_2\ldef \sqrt{\sum_i (y_i-z_i)^2}$ on $\R^d$. For any $\beta>1$, define
\[
\alpha_{2}(\beta)\;\ldef\;\frac{\beta}{\beta+\sqrt{\beta^2-1}}.
\]
\begin{enumerate}[label=(\roman*),leftmargin=1.6em]
\item If $\alpha>\alpha_{2}(\beta)$, then $\ell_2$ is $(\alpha,\beta)$-stable by geometric median.
\item If $\alpha<\alpha_{2}(\beta)$, then $\ell_2$ is not $(\alpha,\beta)$-stable by geometric median.
\end{enumerate}
\end{proposition}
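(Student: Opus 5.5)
For part (i) I will argue by the first-order optimality of the Euclidean geometric median, following Minsker's geometric argument. Normalize $z=0$. Let $g$ be any minimizer of $F(g)=\sum_t\eta_t\|g-y_t\|_2$, and suppose for contradiction that $\|g\|_2>\beta\eps$. Since $F$ is convex, the directional derivative of $F$ at $g$ in the direction $u=-g/\|g\|_2$ (toward $z$) must be nonnegative. Each point $y_t\neq g$ contributes $\eta_t\langle (g-y_t)/\|g-y_t\|_2,u\rangle$, and points with $y_t=g$ contribute at most $\eta_t$. A point outside the ball contributes at most $\eta_t$, since the inner product of unit vectors is at most $1$. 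For $y_t\in B_\eps(0)$ I will bound the cosine of the angle between $g-y_t$ and $g$ from below. The minimum over the ball is attained when $g-y_t$ is tangent to the sphere of radius $\eps$, which gives $\cos\theta\ge \sqrt{\|g\|^2-\eps^2}/\|g\|\ge\sqrt{1-1/\beta^2}$. Each such point therefore contributes at most $-\eta_t\sqrt{1-\beta^{-2}}$.

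Writing $a=\eta(B_\eps(0))$, the directional derivative is at most $(1-a)-a\sqrt{1-\beta^{-2}}$. This is strictly negative exactly when
\[
a\bigl(1+\sqrt{1-\beta^{-2}}\bigr)>1 ,
\]
that is, when $a>\beta/(\beta+\sqrt{\beta^2-1})=\alpha_2(\beta)$. This contradicts optimality, so every median lies in $B_{\beta\eps}(z)$. The case $\eps=0$ follows directly, because a single point with weight above $1/2$ is the median. One subtlety is that $F$ may be nondifferentiable at some $y_t$, but the one-sided directional derivative argument handles this. Another is the borderline case $\|g\|=\beta\eps$, which is harmless because we only need strict exclusion beyond $\beta\eps$.

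For part (ii), given $\alpha<\alpha_2(\beta)$, I will build an ensemble that makes the bound above tight while keeping a prescribed median $g=\beta'\eps e_1$ for some $\beta'>\beta$ close to $\beta$. I will place weight $a\in(\alpha,\alpha_2(\beta))$ on points of the sphere of radius $\eps$ that are tangent-touching as seen from $g$. These points form a ring, symmetric about the $e_1$ axis, whose unit vectors $(g-y)/\|g-y\|$ make angle $\arcsin(1/\beta')$ with $e_1$. This requires $d\ge2$. To keep the perpendicular components cancelling, I will use two points $\pm$ in the $e_2$ direction (or a symmetric configuration). The remaining weight $1-a$ goes to a point far out along $e_1$ beyond $g$, whose unit vector is exactly $-e_1$. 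The subgradient condition at $g$ is then $a\sqrt{1-\beta'^{-2}}=1-a$, so I will choose $\beta'$ to solve it; since $a<\alpha_2(\beta)$, this $\beta'$ exceeds $\beta$. Because $F$ is convex and the gradient vanishes at $g$ (no $y_t$ equals $g$), $g$ is a minimizer. I also need to check that the median is unique or at least that $g\in\med$; the latter suffices for violating stability.

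The main obstacle I expect is the sharp tangent-cone bound in (i), that is, minimizing the cosine over the whole ball. I will handle it by elementary plane geometry, reducing to the $2$-plane spanned by $g$ and $y_t$. A minor check is that $\eta(B_\eps)=a\ge\alpha$ in (ii).
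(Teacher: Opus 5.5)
Your proposal is correct and follows essentially the same route as the paper. Part (i) is the paper's first-order argument built on the tangent-cone cosine bound of \Cref{lem:direction-component}: you phrase it as a negative one-sided directional derivative toward $z$, which also cleanly handles the nondifferentiable points $g\in\labelspace$ that the paper covers only by ``similar arguments''. Part (ii) is exactly the construction of \Cref{lem:example-l2}: two tangent points placed symmetrically about the axis plus a far point on the axis, balanced by $a\cos\theta=1-a$ with $\sin\theta=1/\beta'$. Your remark that membership $g\in\med_2(\labelspace,\eta)$ suffices, rather than uniqueness, also avoids the paper's rather loose uniqueness argument.
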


\begin{proof}[\pfref{prop:l2-boostability}]
We first record two elementary facts used in the proof.

\begin{lemma}
\label{lem:direction-component}
For any $z,x,g\in \R^d$, $\eps>0$, and $\beta>1$, if $x\in B_{\eps}(z;\ell_2)$ and $g\notin B_{\beta\eps}(z;\ell_2)$, then
\[
\inner{g-x,\,g-z}
\;\ge\;
\frac{\sqrt{\beta^2-1}}{\beta}\,\norm{g-x}_2\,\norm{g-z}_2.
\]
\end{lemma}

\begin{proof}[\pfref{lem:direction-component}]
Let $a=g-z$ and $b=x-z$. If $b=0$, the normalized inner product below is
$1$, so assume $b\neq 0$. Set
$r=\|b\|_2/\|a\|_2<1/\beta$ and
$t=\langle a,b\rangle/(\|a\|_2\|b\|_2)\in[-1,1]$. Since $g-x=a-b$,
\[
\frac{\langle g-x,\, g-z\rangle}{\|g-x\|_2\,\|g-z\|_2}
= \frac{1 - r t}{\sqrt{1 + r^2 - 2 r t}}.
\]
For fixed $r\in(0,1)$, the right-hand side is minimized over $t\in[-1,1]$
at $t=r$, where it equals $\sqrt{1-r^2}$. Therefore
\[
\frac{\langle g-x,\, g-z\rangle}{\|g-x\|_2\,\|g-z\|_2}
\;\ge\; \sqrt{1 - r^2}
\;>\; \sqrt{1 - \frac{1}{\beta^2}}
= \frac{\sqrt{\beta^2 - 1}}{\beta}.
\]
\end{proof}

\begin{lemma}
\label{lem:example-l2}
For any $\eps>0$, $\alpha\in (1/2,1)$, let $\theta$ satisfy $\alpha\cos\theta = 1-\alpha$. Let
\[
p_\pm=(\varepsilon\sin\theta,\ \pm \varepsilon\cos\theta),
\qquad
q=\Big(\frac{2\varepsilon}{\sin\theta},\,0\Big),
\]
with weights $\eta(p_+)=\eta(p_-)=\alpha/2$ and $\eta(q)=1-\alpha$.  Then the weighted geometric median is
\[
g^\star = \Big(\frac{\varepsilon}{\sin\theta},\,0\Big).
\]
\end{lemma}

\begin{proof}[\pfref{lem:example-l2}]
Let $F(g)=\tfrac{\alpha}{2}\|g-p_+\|+\tfrac{\alpha}{2}\|g-p_-\|+(1-\alpha)\|g-q\|$.
Note that $p_+$ and $p_-$ are reflections of each other across the $x$-axis, while $q$ lies on the $x$-axis.
For any point $g=(x,y)$, let $\tilde g = (x,-y)$ and $\bar g = \tfrac12(g+\tilde g) = (x,0)$. Then, since $F$ is symmetric with respect to the x-axis and by the convexity of the norm and of $F$,
\[
F(\bar g) \le \tfrac12F(g)+\tfrac12F(\tilde g) = F(g).
\]
Thus, for every $g$, there exists a point $\bar g$ on the $x$-axis with $F(\bar g)\le F(g)$. Furthermore, if $F(\bar g) = F(g)$ for some $y\neq 0$, then there on this line, $0$ is always a subgradient, which contradicts the structure of the norm function. Hence, every minimizer of $F$ must lie on the $x$-axis.
Write such a minimizer as $g=(x,0)$.

\medskip

For $g\neq p_\pm,q$, the objective is differentiable and the first-order condition is
\[
0 = \frac{\alpha}{2}\frac{g-p_+}{\|g-p_+\|}
+ \frac{\alpha}{2}\frac{g-p_-}{\|g-p_-\|}
+ (1-\alpha)\frac{g-q}{\|g-q\|}.
\]
The $y$-components cancel, yielding
\[
\alpha \, u_x = 1-\alpha,
\]
where $u_x$ is the $x$-component of $(g-p_\pm)/\|g-p_\pm\|$.
Evaluating at $g_0=(\varepsilon/\sin\theta,0)$ gives $u_x=\cos\theta$, so the above condition holds exactly when $\alpha\cos\theta=1-\alpha$, which is our assumption.
Thus $g_0$ is a stationary point.
Since $F$ is strictly convex along the $x$-axis away from $p_\pm,q$, $g_0$ is the unique minimizer.
\end{proof}

Write $F(g)=\sum_{t=1}^T \eta_t\|g-y_t\|_2$. We show first that if $\eta(B_\eps(z;\ell_2))>\alpha$, then no point outside $B_{\beta\eps}(z;\ell_2)$ can minimize $F$.
For any $g\notin \labelspace$, $F$ is differentiable at $g$ with
\[
\nabla F(g)=\sum_{t=1}^T \eta_t\,\frac{g-y_t}{\|g-y_t\|_2}.
\]
Specifically, we show that for any $g\notin ( B_{\beta\eps}(z;\ell_2)\bigcup \labelspace)$, the inner product $\inner{\nabla F(g), \frac{g-z}{\norm{g-z}_2}} \neq 0$, which implies $\nabla F(g)\neq 0$ and $g\notin \med_2(\labelspace, \eta)$.
For any $y_t\in B_\eps(z;\ell_2)$, we have by \cref{lem:direction-component},
\begin{align}
\label{ineq:in-the-ball-product}
    \inner*{\frac{g-y_t}{\|g-y_t\|_2},\frac{g-z}{\norm{g-z}_2}} \geq  \frac{\sqrt{\beta^2-1}}{\beta}.
\end{align}
Moreover, for any $y_t\notin B_\eps(z;\ell_2)$, we have
\begin{align}
\label{ineq:out-of-the-ball-product}
    \inner*{\frac{g-y_t}{\|g-y_t\|_2},\frac{g-z}{\norm{g-z}_2}} \geq - 1.
\end{align}
Overall, combining \eqref{ineq:in-the-ball-product} and \eqref{ineq:out-of-the-ball-product}, for any $g\notin ( B_{\beta\eps}(z;\ell_2)\bigcup \labelspace)$ we have
\begin{align*}
    \inner*{\nabla F(g), \frac{g-z}{\norm{g-z}_2}}&=\sum_{t=1}^T \eta_t \inner*{\frac{g-y_t}{\|g-y_t\|_2},\frac{g-z}{\norm{g-z}_2}}\\
    &=\sum_{t:y_t\in B_\eps(z;\ell_2)} \eta_t \inner*{\frac{g-y_t}{\|g-y_t\|_2},\frac{g-z}{\norm{g-z}_2}} + \sum_{t:y_t\notin B_\eps(z;\ell_2)} \eta_t \inner*{\frac{g-y_t}{\|g-y_t\|_2},\frac{g-z}{\norm{g-z}_2}}\\
    &\geq \eta(B_\eps(z;\ell_2)) \cdot \frac{\sqrt{\beta^2-1}}{\beta} - (1 - \eta(B_\eps(z;\ell_2)))\\
    &= \eta(B_\eps(z;\ell_2)) \cdot \frac{\beta + \sqrt{\beta^2-1}}{\beta} - 1 >0,
\end{align*}
where the last inequality follows from $\eta(B_\eps(z;\ell_2))>\alpha>\beta/(\beta + \sqrt{\beta^2-1})$. Similar arguments can be made to show that for $g\in \labelspace\setminus B_{\beta\eps}(z;\ell_2)$, we have $0\notin \partial F(g)$.
These conclude the proof of $(i)$ by showing that $\med_2(\labelspace, \eta) \subseteq \set{g: \nabla F(g) = 0 ~\text{or}~0\in \partial F(g) } \subseteq B_{\beta\eps}(z;\ell_2)$.

For the proof of $(ii)$, we consider the example in \Cref{lem:example-l2}, which shows that the guarantee in Proposition~\ref{prop:l2-boostability} is sharp.
Indeed, observe that $\|p_\pm\|_2 = \eps$, so $p_\pm\in B_\eps(0)$.
Moreover,
\[
\eta(B_\eps(0)) = \eta(p_+)+\eta(p_-) = \alpha.
\]
However, since $\sin\theta < 1$, we have $\|g^\star\|_2 = \varepsilon/\sin\theta > \varepsilon$.
More generally, for any $\beta>1$, the condition
\[
\frac{\varepsilon}{\sin\theta} > \beta\varepsilon
\qquad\Longleftrightarrow\qquad
\sin\theta < \frac{1}{\beta}
\]
ensures that $g^\star\notin B_{\beta\varepsilon}(0)$.
Using $\alpha\cos\theta=1-\alpha$ and $\sin^2\theta+\cos^2\theta=1$, one checks that
\[
\sin\theta<\frac{1}{\beta}
\quad\Longleftrightarrow\quad
\alpha < \frac{\beta}{\beta + \sqrt{\beta^2-1}}.
\]
This concludes the proof of claim $(ii)$.
\end{proof}

Together, \cref{prop:l1-boostability,prop:l2-boostability} show how the choice
between $\ell_1$ and $\ell_2$ changes the resulting boosting guarantee. For
$\ell_1$, a majority concentration condition is enough: the stability threshold
is simply $\alpha-\gamma>1/2$. The cost is the dimension factor in the final
exceedance radius.
For $\ell_2$, the exceedance radius is instead
$\beta\varepsilon$, with no dependence on $d$, but this dimension-free radius
requires the stronger concentration condition
$\alpha-\gamma>\alpha_2(\beta)$. As $\beta$ approaches $1$, the enlarged radius
approaches the original weak radius $\varepsilon$, and the required
concentration level correspondingly increases. 

\subsection{Stability for Conditional Density Estimation}
\label{sec:boostability-for-density-estimation}

For conditional density estimation, 
our goal is more limited: 
we give sufficient
($\alpha$, $\beta$)-stability conditions rather than sharp characterizations of
the geometric-median threshold. We establish such positive results for total
variation and Hellinger distance. Square-root KL behaves differently: it is not
directly stable under its own geometric median, but KL guarantees can still be
obtained indirectly by aggregating in Hellinger distance and then converting the
resulting bound.

\begin{proposition}[Total variation]
\label{prop:TV-boostability}
Let $d\geq 2$. For any $\alpha>1/2$, the total variation distance $\TV(p,q)\ldef \frac{1}{2}\norm{p-q}_1$ on $\Delta_d$ is $(\alpha,2(d-1))$-stable by geometric median.
\end{proposition}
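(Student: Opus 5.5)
The plan is to reduce the simplex-constrained $\TV$ median to a separable convex program with one linear equality constraint, and read off coordinatewise information from its KKT conditions, mimicking the proof of \cref{prop:l1-boostability}. Since $\TV(p,q)=\frac12\|p-q\|_1$, any $g^\star\in\med_{\TV}(\labelspace,\eta)$ minimizes
\[
\Phi(g)=\sum_{j=1}^d \phi_j(g_j),\qquad \phi_j(u)\ldef\sum_{t=1}^T\eta_t|u-y_{tj}|,
\]
over $\Delta_d$. Unlike the unconstrained $\ell_1$ case, the coordinates are coupled through $\sum_j g_j=1$, so we cannot take coordinatewise medians directly. Instead we use the KKT conditions. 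These are necessary without any constraint qualification, because the constraints are polyhedral and $\Phi$ is convex and finite.

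The KKT conditions give a multiplier $\lambda\in\R$ and, for each $j$, some $s_j\in\partial\phi_j(g^\star_j)$ with $s_j=\lambda+\mu_j$, where $\mu_j\ge 0$ and $\mu_j g^\star_j=0$. Thus $s_j=\lambda$ whenever $g^\star_j>0$, and $s_j\ge\lambda$ whenever $g^\star_j=0$. The one-dimensional subdifferential is
\[
\partial\phi_j(u)=\eta(\{y_{tj}<u\})-\eta(\{y_{tj}>u\})+[-\eta(\{y_{tj}=u\}),\eta(\{y_{tj}=u\})].
\]
Now use the hypothesis $\eta(B_\eps(z;\TV))>1/2$. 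Every $y_t$ in this ball satisfies $|y_{tj}-z_j|\le\|y_t-z\|_1\le 2\eps$, so each interval $I_j=[z_j-2\eps,z_j+2\eps]$ carries $\eta$-mass $>1/2$. By the reasoning of \cref{lem:1d-median}, this gives a strict sign condition on subgradients. If $u>z_j+2\eps$, every element of $\partial\phi_j(u)$ is $\ge \eta(I_j)-(1-\eta(I_j))>0$. Symmetrically, if $u<z_j-2\eps$, every element of $\partial\phi_j(u)$ is $<0$.

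From this we get a one-sided confinement. Suppose some coordinate has $g^\star_j>z_j+2\eps$. Then $g^\star_j>0$, so $\lambda=s_j>0$. Now suppose another coordinate $k$ had $g^\star_k<z_k-2\eps$. Then $s_k<0$, but $s_k\ge\lambda>0$ holds whether $g^\star_k>0$ (equality) or $g^\star_k=0$ (inequality), which is a contradiction. Hence either $g^\star_j-z_j\ge -2\eps$ for all $j$, or $g^\star_j-z_j\le 2\eps$ for all $j$; if no coordinate leaves its interval, both hold.

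Finally, since $g^\star,z\in\Delta_d$, we have $\sum_j(g^\star_j-z_j)=0$, so
\[
\|g^\star-z\|_1=2\sum_j(g^\star_j-z_j)_-=2\sum_j(g^\star_j-z_j)_+.
\]
Consider the first case, $g^\star_j-z_j\ge-2\eps$ for all $j$. If $g^\star\ne z$, at least one coordinate has a strictly positive deviation, so at most $d-1$ coordinates carry negative part. Each of these negative parts is at most $2\eps$, hence $\|g^\star-z\|_1\le 4(d-1)\eps$ and $\TV(g^\star,z)\le 2(d-1)\eps$. The second case is symmetric, using the positive parts. This proves $(\alpha,2(d-1))$-stability for every $\alpha>1/2$.

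The step I expect to need the most care is the KKT step. Two points matter there. First, the boundary coordinates with $g^\star_j=0$ must be handled through the inequality $s_j\ge\lambda$, and this inequality is exactly what rules out a zero coordinate lying strictly below its interval. Second, the subgradient sign claims must be strict even when atoms of $\eta$ sit at interval endpoints, which is where the strict inequality $\eta(I_j)>1/2$ is used.
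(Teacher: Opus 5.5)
Your proof is correct and follows essentially the paper's route: your KKT multiplier $\lambda$ is the common cut proportion $\tau$ of the paper's equal-proportion lemma, and the sign of $\lambda$ plays the role of the paper's case split $\tau\ge\tfrac12$ versus $\tau\le\tfrac12$; this is followed by the same one-sided $2\varepsilon$ confinement and the same count of at most $d-1$ deviating coordinates. Your handling of zero coordinates through $s_j\ge\lambda$ is actually more careful than the paper's pairwise exchange argument, which tacitly assumes that $g+te_j-te_k$ is feasible for small $t$ of both signs (i.e., that $g_j,g_k>0$).
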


\paragraph{Equal proportion across coordinates.}
The following lemma formalizes the observation that, at any weighted TV (or $\ell_1$) geometric median under the simplex constraint, all active coordinates share a common imbalance proportion.

\begin{lemma}[Equal proportion of mass across coordinates]
\label{lem:equal-proportion}
Let $F(g)=\tfrac12\sum_{t=1}^T \eta_t\|y_t-g\|_1$ with $\sum_t \eta_t=1$, and suppose $g\in\Delta_d$ minimizes $F$ over the simplex $\Delta_d=\{g\in\R_+^d:\sum_j g_j=1\}$.
For each coordinate $j$, define the one–sided weights
\[
W_j^+(t)\coloneqq \sum_{t':\,y_{t'j}\leq t} \eta_{t'},
\qquad
W_j^-(t)\coloneqq \sum_{t':\,y_{t'j}<t} \eta_{t'}.
\]
Then 
 \[\bigcap_{j=1}^d [W_j^-(g_j),\,W_j^+(g_j)]\neq\emptyset.\]
\end{lemma}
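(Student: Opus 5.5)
This is a KKT/Lagrange-multiplier argument for the separable convex objective $F$ over the simplex. We write $F(g)=\sum_{j=1}^d \phi_j(g_j)$ with $\phi_j(u)=\tfrac12\sum_t \eta_t|u-y_{tj}|$, each convex and piecewise linear. We first compute the one-sided derivatives of $\phi_j$ at $u$:
\[
\phi_j'^{+}(u)=\tfrac12\bigl(W_j^+(u)-(1-W_j^+(u))\bigr)=W_j^+(u)-\tfrac12,
\qquad
\phi_j'^{-}(u)=W_j^-(u)-\tfrac12 .
\]
So the subdifferential is $\partial\phi_j(u)=[W_j^-(u)-\tfrac12,\,W_j^+(u)-\tfrac12]$. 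It therefore suffices to find a common $\lambda$ with $\lambda+\tfrac12\in[W_j^-(g_j),W_j^+(g_j)]$ for all $j$.

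Since $g$ minimizes a convex function over the polytope $\Delta_d$, the optimality condition is $0\in\partial F(g)+N_{\Delta_d}(g)$. The normal cone is $N_{\Delta_d}(g)=\{\mu\mathbf 1-\nu:\nu\ge0,\ \nu_j=0\text{ if }g_j>0\}$. Linear constraints make this KKT condition valid without a constraint qualification, and $\partial F(g)=\prod_j\partial\phi_j(g_j)$ by separability. We obtain a scalar $\mu$ and $\nu\ge 0$ supported on $\{j:g_j=0\}$ with $-\mu+\nu_j\in\partial\phi_j(g_j)$ for all $j$. For active coordinates ($g_j>0$), this gives $-\mu\in\partial\phi_j(g_j)$ directly.

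The main obstacle is the inactive coordinates, $g_j=0$, where KKT only yields $-\mu\le \phi_j'^{+}(0)$, not membership in the full interval. Here we use that the responses $y_t$ lie in $\Delta_d$, so $y_{tj}\ge 0$. Then $W_j^-(0)=\sum_{t:y_{tj}<0}\eta_t=0$, so the interval at $g_j=0$ is $[0,W_j^+(0)]$, whose lower endpoint is $0$. The inactive condition becomes $-\mu+\tfrac12\le W_j^+(0)$.

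It remains to check the lower bound $-\mu+\tfrac12\ge 0$, i.e. $\mu\le\tfrac12$. If some coordinate is active, then $-\mu\ge W_j^-(g_j)-\tfrac12\ge-\tfrac12$ from that coordinate. Some coordinate is always active, because $\sum_j g_j=1$. Therefore $\lambda+\tfrac12:=\tfrac12-\mu$ lies in every interval $[W_j^-(g_j),W_j^+(g_j)]$, proving the intersection is nonempty.

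Interpreted, this $\lambda+\tfrac12$ is the common imbalance proportion: the weight strictly below $g_j$ is at most it, and the weight at or below $g_j$ is at least it, in every coordinate. If the paper allows responses outside $\Delta_d$, the inactive case would need separate care; I would state the nonnegativity $y_{tj}\ge0$ as the key use of the simplex setting.
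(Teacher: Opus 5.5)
Your proof is correct, and it takes a different route from the paper's. The paper argues pairwise. For each $j\neq k$ it restricts $F$ to the exchange line $h_{jk}(t)=F(g+te_j-te_k)$ and asserts $0\in\partial f_j(g_j)-\partial f_k(g_k)$. From this it gets that the intervals $[W_j^-(g_j),W_j^+(g_j)]$ and $[W_k^-(g_k),W_k^+(g_k)]$ meet. It then passes to a single common $\tau$, which tacitly uses the one-dimensional Helly property: pairwise intersecting intervals share a point. You instead use the full condition $0\in\partial F(g)+N_{\Delta_d}(g)$, whose single multiplier $\mu$ gives the common point $\tfrac12-\mu$ at once, with no Helly step.

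More importantly, your route covers the boundary of the simplex, which the paper's does not. Along the exchange line the feasible parameters are $t\in[-g_j,g_k]$. So when $g_j=0$ or $g_k=0$, minimality at $t=0$ gives only a one-sided inequality, not $0\in\partial h_{jk}(0)$. Your treatment of inactive coordinates closes exactly this case: $W_j^-(0)=0$ because $y_{tj}\ge 0$, and $\tfrac12-\mu\ge 0$ comes from some active coordinate.

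Your remark that $y_{tj}\ge 0$ is the key use of the simplex setting is sharp: the hypothesis is necessary, not just convenient. Take $d=2$, $T=1$, $y_1=(-1,2)$. The unique minimizer over $\Delta_2$ is $g=(0,1)$, and the two intervals are $[1,1]$ and $[0,0]$, which are disjoint. So the lemma must be read with $y_t\in\Delta_d$, as in its only use, \Cref{prop:TV-boostability}. The paper's argument does not see this restriction because of its boundary gap.

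The paper's exchange argument does buy a more elementary, normal-cone-free picture, and it is complete when $g$ lies in the relative interior of $\Delta_d$. Yours is the complete argument. Your one-sided derivatives $W_j^{\pm}(u)-\tfrac12$ are also the correct ones; the paper's formulas are negated, which is harmless for its argument.
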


\begin{proof}[\pfref{lem:equal-proportion}]
Write $F(g)=\sum_{j=1}^d f_j(g_j)$ with
\(
f_j(u)=\tfrac12\sum_{t=1}^T \eta_t|u-y_{tj}|.
\)
Each $f_j$ is convex and piecewise linear.  A standard 1-D calculation gives the one-sided directional derivatives at $u$:
\[
f'_+(u)=\tfrac12\big(1-2W_j^+(u)\big),\qquad
f'_-(u)=\tfrac12\big(1-2W_j^-(u)\big).
\]
Equivalently, the subdifferential interval is
\begin{align}
\label{eq:partial-diff}
 \partial f_j(u)=[f'_+(u), f'_-(u)] =\tfrac12\,[\,1-2W_j^+(u),\;1-2W_j^-(u)\,].
\end{align}

Fix any two coordinates $j\neq k$ and consider the feasible exchange line
\(
h_{jk}(t)\coloneqq F(g+t e_j - t e_k)
\)
for small $t$ so that $g+t e_j-t e_k\in\Delta_d$.
Since $g$ minimizes $F$ over $\Delta_d$, it minimizes the convex function $h_{jk}$ at $t=0$.
Thus the first-order optimality for convex functions yields
\(
0\in \partial h_{jk}(0).
\)
By separability and the chain rule for subgradients along affine maps,
\[
\partial h_{jk}(0)=\partial f_j(g_j)-\partial f_k(g_k).
\]
Hence there exist $s_j\in\partial f_j(g_j)$ and $s_k\in\partial f_k(g_k)$ with
\(
s_j-s_k=0
\)
(i.e., \(s_j=s_k\)).  Using \eqref{eq:partial-diff}, any $s_j\in\partial f_j(g_j)$ can be written as
\[
s_j \;=\; \tfrac12\big(1-2\theta_j\big)
\quad\text{for some }\;
\theta_j\in[W_j^-(g_j), W_j^+(g_j)],
\]
and similarly \(s_k=\tfrac12(1-2\theta_k)\) with \(\theta_k\in[W_k^-(g_k),W_k^+(g_k)]\).
Since \(s_j=s_k\), we must have \(\theta_j=\theta_k\).  Denote this common value by \(\tau\).
As the choice of the pair \((j,k)\) was arbitrary, the same argument applied to any pair shows that the same \(\tau\) belongs to every interval \([W_\ell^-(g_\ell),\,W_\ell^+(g_\ell)]\), $\ell=1,\dots,d$.
Therefore
\[
\tau\in\bigcap_{\ell=1}^d [W_\ell^-(g_\ell),\,W_\ell^+(g_\ell)],
\]
proving the intersection is nonempty.
\end{proof}

Intuitively, this lemma says that the weighted geometric median $g$ balances all coordinates at a common proportion of cumulative weight—each coordinate's one-dimensional distribution is “cut” at a point $g_j$ where the fraction of total mass on either side is the same across all $j$.

\begin{proof}[\pfref{prop:TV-boostability}]
Let $F(g)\coloneqq \sum_{t=1}^T \eta_t\,\TV(y_t,g)$ and suppose $g\in\med_\TV(\labelspace,\eta)$.
Assume $\eta(B_\eps(z;\TV))>\tfrac12$ and invoke \Cref{lem:equal-proportion} to obtain
\[
\tau \in \bigcap_{j=1}^d [\,W_j^-(g_j),\,W_j^+(g_j)\,].
\]
Without loss of generality (by symmetry of the “left/right” roles), assume $\tau\ge\tfrac12$.
Then for every coordinate $j$ we have
\[
W_j^+(g_j)\ \ge\ \tau\ \ge\ \tfrac12.
\tag{1}\label{eq:right-mass-half}
\]

\begin{lemma}
\label{lem:TV-coord-within-proof}
If $g_j<z_j$, then $|g_j-z_j|\leq 2\varepsilon$.
\end{lemma}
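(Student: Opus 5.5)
The plan is a pigeonhole argument on the $j$-th coordinate, using only the one-sided mass bound \eqref{eq:right-mass-half} together with the hypothesis $\eta(B_\eps(z;\TV))>\tfrac12$. The idea is to exhibit a single ensemble point $y_t$ that lies in the TV ball around $z$ and also has $j$-th coordinate at most $g_j$. Such a point forces $g_j$ to be close to $z_j$ from below.

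\textbf{Step 1: two heavy sets of indices.} Fix a coordinate $j$ with $g_j<z_j$ and introduce
\[
A\ldef\{t: y_{tj}\le g_j\},\qquad B\ldef\{t: \TV(z,y_t)\le \eps\}.
\]
By definition $\eta(A)=W_j^+(g_j)$, so \eqref{eq:right-mass-half} gives $\eta(A)\ge\tfrac12$. By hypothesis $\eta(B)>\tfrac12$. Since $\sum_t\eta_t=1$,
\[
\eta(A\cap B)\ \ge\ \eta(A)+\eta(B)-1\ >\ 0,
\]
so $A\cap B$ is nonempty. Pick any $t\in A\cap B$.

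\textbf{Step 2: convert the TV bound to a coordinate bound.} Because $t\in B$, we have $\|y_t-z\|_1=2\,\TV(y_t,z)\le 2\eps$. In particular $|y_{tj}-z_j|\le 2\eps$.

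\textbf{Step 3: conclude.} Because $t\in A$, we have $y_{tj}\le g_j<z_j$. Therefore
\[
|g_j-z_j| \;=\; z_j-g_j \;\le\; z_j-y_{tj} \;\le\; 2\eps,
\]
which is the claim.

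There is no real obstacle here. The one point that needs care is that the intersection argument requires one of the two inequalities to be strict, and it is: the ball mass is strictly greater than $\tfrac12$. The other point is remembering the factor $2$ that converts TV distance into an $\ell_1$ bound. The harder work in the proof of \Cref{prop:TV-boostability} comes afterwards. There one must control the coordinates with $g_j\ge z_j$ using the simplex constraint $\sum_j g_j=\sum_j z_j$. That constraint turns the total downward deviation, at most $2\eps$ per coordinate, into a bound on the total upward deviation, which is the source of the enlargement factor $2(d-1)$.
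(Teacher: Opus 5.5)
Your proof is correct and is essentially the paper's argument. The paper argues by contradiction: if $z_j-g_j>2\varepsilon$, then every point of the ball has $y_{tj}>g_j$, which forces $W_j^+(g_j)\le 1-\eta(B_\varepsilon(z;\TV))<\tfrac12$ and contradicts \eqref{eq:right-mass-half}; you run the same mass count directly by picking a point in both heavy sets, which also avoids the paper's slip of writing strict inequalities where only non-strict ones are justified.
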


\begin{proof}[\pfref{lem:TV-coord-within-proof}]
Suppose, for contradiction, that for some coordinate $j$ we have $g_j<z_j$ and $z_j-g_j>2\varepsilon$.
Consider the $\TV$–ball $B_\varepsilon(z;\TV)=\{x:\tfrac12\|x-z\|_1\le\varepsilon\}.$
For any $y_t\in B_\varepsilon(z;\TV)$, we must have $|y_{tj}-z_j|\le 2\varepsilon$, hence $y_{tj}\ge z_j-2\varepsilon>g_j$.
That is, every such $y_t$ satisfies $y_{tj}>g_j$, so all points within $B_\varepsilon(z;\TV)$ (whose total weight exceeds $\tfrac12$) project into the interval $(g_j,1]$ along coordinate $j$.
Therefore,
\[
W_j^+(g_j)
=\sum_{t:y_{tj}\le g_j} \eta_t
<1-\eta(B_\varepsilon(z;\TV))
<\tfrac12,
\]
contradicting \eqref{eq:right-mass-half}.
Hence, $z_j-g_j<2\varepsilon$ whenever $g_j<z_j$.
\end{proof}

Because $g,z\in\Delta_d$, we have $\sum_{j=1}^d(g_j-z_j)=0$, so
\[
\TV(g,z)=\tfrac12\sum_{j=1}^d|g_j-z_j|
=\sum_{j:\,g_j<z_j}(z_j-g_j).
\]
By Lemma~\ref{lem:TV-coord-within-proof}, each summand on the right is $\leq 2\varepsilon$ and there are in total at most $d-1$ summands, hence
\[
\TV(g,z)
=\sum_{j:\,g_j<z_j}(z_j-g_j)
\leq 2(d-1)\varepsilon.
\]
Therefore $\med_\TV(\labelspace,\eta)\subseteq B_{2(d-1)\varepsilon}(z;\TV)$, i.e., TV is $(\alpha,2(d-1))$-stable by geometric median for any $\alpha>\tfrac12$.
\end{proof}

As with $\ell_1$, total variation only requires majority concentration, but
the final radius grows with the simplex dimension. Similar to $\ell_2$ through the square-root embedding, Hellinger distance gives a dimension-free tradeoff. For an enlarged radius of $\beta\varepsilon$, the concentration threshold increases compared to $\ell_2$.\looseness=-1

\begin{proposition}[Hellinger]
\label{prop:hel-boostability}
Let $d\geq 2$. For $\beta>1$ define
\[
\alpha_{\Hel}(\beta,d)\;\ldef\;
\begin{cases}
\frac{2}{3}, & d=2,\\[0.5em]
\frac{2\beta^2}{3\beta^2-1}, & d\ge 3.
\end{cases}
\]
If $\alpha>\alpha_{\Hel}(\beta,d)$, then the Hellinger distance $\Hel(p,q)\ldef \sqrt{\frac{1}{2} \sum_{j=1}^d (\sqrt{p_j}-\sqrt{q_j})^2 } $ on $\Delta_d$ is $(\alpha,\beta)$-stable by geometric median.
\end{proposition}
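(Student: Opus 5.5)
The plan is to pass to the square-root embedding and adapt the directional-derivative argument from the proof of \Cref{prop:l2-boostability}. The extra work is that the median is now constrained to a curved set. Write $\phi(p)=\sqrt{p}$ (coordinatewise), so that $\phi(\Delta_d)=\cS_+\ldef\{u\in\R^d_{\ge0}:\|u\|_2=1\}$ and $\Hel(p,q)=\tfrac{1}{\sqrt2}\|\phi(p)-\phi(q)\|_2$. Then $\med_\Hel(\labelspace,\eta)$ corresponds to minimizers over $\cS_+$ of
\[
F(u)=\sum_{t=1}^T \eta_t\|u-s_t\|_2,\qquad s_t=\phi(y_t),
\]
and Hellinger balls become chordal balls of radius $\sqrt2\eps$ on $\cS_+$. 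Set $w=\phi(z)$ and $r=\sqrt2\eps$. It suffices to show the following: if $u\in\cS_+$ satisfies $\|u-w\|_2>\beta r$, then some feasible direction at $u$ strictly decreases $F$.

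The descent direction I will use is the great-circle arc from $u$ toward $w$. Its initial tangent is $v=(w-\inner{w,u}u)/\|w-\inner{w,u}u\|_2$. The normalized geodesic interpolation of two nonnegative unit vectors stays nonnegative, so the arc stays in $\cS_+$ and the direction is feasible even on the boundary of the orthant. Because $\inner{u,v}=0$, the one-sided directional derivative of each term is
\[
\frac{\inner{u-s_t,v}}{\|u-s_t\|_2}=-\frac{\inner{s_t,v}}{\|u-s_t\|_2}
\]
for $s_t\ne u$. At $s_t=u$ the derivative is $+1$, which is still bounded by $1$. Hence every term contributes at most $1$. The whole argument then reduces to a uniform bound of the form
\[
\frac{\inner{s,v}}{\|u-s\|_2}\;\ge\;c\qquad\text{for all } s\in\cS_+ \text{ with } \|s-w\|_2\le r,
\]
where $c$ is the constant determined below. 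Granting this bound, the derivative of $F$ along the arc is at most $-\eta(B)\,c+(1-\eta(B))<0$ whenever $\eta(B)>1/(1+c)$. Here $\eta(B)$ is the weight in the Hellinger ball, and the ball terms enter with sign $-c$ as in the $\ell_2$ proof. Matching $1/(1+c)$ with $\alpha_\Hel$ requires $c=1/2$ when $d=2$ and $c=(\beta^2-1)/(2\beta^2)$ when $d\ge3$; these are the targets.

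The main obstacle is establishing this spherical analogue of \Cref{lem:direction-component}. I would reduce it to a low-dimensional extremal problem as follows. Decompose $s=au+bv+s_\perp$ with $a^2+b^2+\|s_\perp\|^2=1$. Then $\inner{s,v}=b$ and $\|u-s\|_2^2=2-2a$, and the constraint reads $\|s-w\|_2\le r<\|u-w\|/\beta$. Writing $w=\cos\psi\,u+\sin\psi\,v$, the ratio $b/\sqrt{2-2a}$ becomes a function of the angular position of $s$ within a spherical cap around $w$, which I would minimize explicitly. The obstruction compared with the flat case is that the cap can reach "behind" $u$ along the sphere, which drives the ratio down. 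Here the orthant constraint must be used: all pairwise angles in $\cS_+$ are at most $\pi/2$, so $\inner{s,u}\ge0$ and $\inner{w,u}\ge0$, which bounds the geodesic distances $\psi\le\pi/2$.

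For $d=2$, $\cS_+$ is a quarter circle. Everything is one-dimensional: $s$ lies on the arc strictly between $u$ and $w$ or beyond $w$. The ratio is then $\cos$ of half the angle from $u$ to $s$ or similar, which is at least $\cos(\pi/4)$-type bounds. I expect a direct computation to give $c=1/2$ independently of $\beta$. For $d\ge3$, the worst case should place $s$ on the boundary of the cap, in the plane spanned by $u$ and $w$ when $\psi=\pi/2$. There I would compute the minimum as $(\beta^2-1)/(2\beta^2)$ using $\|u-w\|_2\le\sqrt2$ together with $r<\|u-w\|_2/\beta$. If the exact extremal computation resists, I would first verify the bound at the extreme configuration $\psi=\pi/2$. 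I would then argue monotonicity in $\psi$, which also covers points $u$ that are minimizers only on a face of the orthant.
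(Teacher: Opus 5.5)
\textbf{The gap is the $d\ge3$ case of the key inequality.} Your reduction is correct and is the paper's. You work with $u=\sqrt g$ on the sphere and move along the tangent direction toward $w=\sqrt z$; your $v$ is the paper's $-p_z$, normalized by $\norm{p_z}_2$ rather than $\norm{u-w}_2$. You then bound the in-ball terms below by a constant $c$ and all other terms by $1$. Your geodesic-arc, one-sided-derivative formulation handles the orthant boundary and the case $u=\sqrt{y_t}$ more cleanly than the paper does. Your $d=2$ sketch essentially works: since $\norm{s-w}_2<\norm{u-w}_2$, the point $s$ lies on $w$'s side of $u$ on the quarter circle. Hence your ratio equals $\cos(a/2)\ge 1/\sqrt2>1/2$, where $a$ is the angle between $u$ and $s$. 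For $d\ge3$, however, the key inequality is the entire technical content of the proposition, and you only announce it. The heuristic you would follow to prove it is wrong.

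\textbf{Why the heuristic fails.} Write $s=\cos a\,u+\sin a\,e$ with $e\perp u$ a unit vector; then $\inner{s,v}/\norm{u-s}_2=\cos(a/2)\inner{e,v}$. Any cap point in the plane spanned by $u$ and $w$ has $e=v$. There the ratio is $\cos(a/2)\ge 1/\sqrt2$ for every $\psi$ and every $\beta$. So the configuration you single out cannot produce $(\beta^2-1)/(2\beta^2)$. The $\beta$-dependence comes entirely from cap points off that plane, where the angle at $u$ between $e$ and $v$ is positive; it can approach $\pi/2$ as $\beta\downarrow 1$. Moreover, $(\beta^2-1)/(2\beta^2)$ is not an extremum to be computed but only a lower bound. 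Your ``monotonicity in $\psi$'' is also not substantiated.

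\textbf{The missing step.} As in \Cref{lem:hellinger-direction-component}, you need an explicit inequality in the Gram entries $\cos a=\inner{s,u}$, $\cos b=\inner{w,u}$, $\cos c=\inner{s,w}$.
\begin{itemize}
\item One has $\inner{s,v}=(\cos c-\cos a\cos b)/\sin b$, and this is positive because $\cos c>\cos b\ge 0$.
\item Using $\sin b\le\sqrt{2(1-\cos b)}$ and $\norm{u-s}_2=\sqrt{2(1-\cos a)}$, your ratio is at least $\frac{\cos c-\cos a\cos b}{2\sqrt{(1-\cos a)(1-\cos b)}}$.
\item Now minimize $x\mapsto(\cos c-x\cos b)/\sqrt{1-x}$ over $x=\cos a\in[0,1)$. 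Drop the spherical triangle constraint and keep only $\cos a\ge 0$ from the orthant.
\item The minimum is at $x=0$ if $\cos c\ge 2\cos b$, and at $x=2-\cos c/\cos b$ otherwise. In each case an elementary inequality shows the value is at least $\sqrt{1-\cos b}\,\frac{\cos c-\cos b}{1-\cos b}$.
\item Since $\frac{\cos c-\cos b}{1-\cos b}=1-\norm{s-w}_2^2/\norm{u-w}_2^2>1-\beta^{-2}$, this gives $c=(\beta^2-1)/(2\beta^2)$.
\end{itemize}
Without this, or an equivalent explicit bound, your $d\ge3$ case is not proved.
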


\begin{lemma}
\label{lem:hellinger-direction-component}
Let $g,y,z \in \Delta^d$ be probability distributions and let $\sqrt{\cdot}$ denote element-wise square root.
For any $\beta>1$,  suppose $\Hel(z,g)>\beta \Hel(z,y)$. Let
\begin{align*}
    p_y = \sqrt{g}-\sqrt{y} -  \inner{\sqrt{g}-\sqrt{y},\sqrt{g}} \sqrt{g} \text{~~and~~}
    p_z = \sqrt{g}-\sqrt{z} -  \inner{\sqrt{g}-\sqrt{z},\sqrt{g}} \sqrt{g}
\end{align*}
be the tangent projection of $\sqrt{g}-\sqrt{y}$ and $\sqrt{g}-\sqrt{z}$ at point $\sqrt{g}$. Then we have
\[
\inner*{p_y ,p_z}
\;\ge\;
\begin{cases}
\displaystyle \Hel(y,g)\,\Hel(z,g), & d=2,\\[1.2em]
\displaystyle \frac{\beta^2-1}{\beta^2}\,\Hel(y,g)\,\Hel(z,g), & d\ge 3.
\end{cases}
\]
\end{lemma}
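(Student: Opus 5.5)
Throughout, write $u=\sqrt g$, $a=\sqrt y$, $b=\sqrt z$. These are unit vectors in the nonnegative orthant of $\R^d$, so $\inner{u,a},\inner{u,b},\inner{a,b}\in[0,1]$.

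\textbf{Step 1 (reduction to Hellinger distances).} Since $\inner{u,u}=1$, the tangent projections simplify to $p_y=\inner{a,u}u-a$ and $p_z=\inner{b,u}u-b$. Hence
\[
\inner{p_y,p_z}=\inner{a,b}-\inner{a,u}\inner{b,u}.
\]
Hellinger distance is half the squared chordal distance, which gives $1-\inner{a,u}=\Hel(y,g)^2$, $1-\inner{b,u}=\Hel(z,g)^2$ and $1-\inner{a,b}=\Hel(y,z)^2$. Abbreviate $H_y=\Hel(y,g)$, $H_z=\Hel(z,g)$, $H_{yz}=\Hel(y,z)$. Substituting yields the exact identity
\[
\inner{p_y,p_z}=H_y^2+H_z^2-H_{yz}^2-H_y^2H_z^2 .
\]
The hypothesis reads $H_{yz}<H_z/\beta$. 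All three quantities lie in $[0,1]$ because of the orthant constraint. The lemma is then a purely scalar inequality in $(H_y,H_z,H_{yz})$, subject to the hypothesis and the triangle inequality for $\Hel$.

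\textbf{Step 2 (case $d\ge 3$).} Insert the hypothesis to get
\[
\inner{p_y,p_z}> H_y^2(1-H_z^2)+\kappa H_z^2,\qquad \kappa=\tfrac{\beta^2-1}{\beta^2}.
\]
I would then split on which of $H_y$ and $H_z$ is larger.
\begin{itemize}
\item If $H_y\le H_z$, drop the first term: $\kappa H_z^2\ge\kappa H_yH_z$, which is the claim.
\item If $H_y>H_z$, the triangle inequality $H_y\le H_z+H_{yz}<(1+1/\beta)H_z$ keeps $H_y$ comparable to $H_z$. Rather than dropping $H_y^2(1-H_z^2)$, I would write $H_y^2(1-H_z^2)=H_y^2-H_y^2H_z^2$ and use $H_y^2\ge H_yH_z$.
\end{itemize}
In the second case the goal becomes $H_y(1-H_yH_z)+\kappa H_z\ge\kappa H_y$. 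This should follow from $H_y-H_z<H_z/\beta$ together with $H_y,H_z\le1$. I expect this to be the main obstacle: the term $-H_y^2H_z^2$ is dangerous when both distances are near $1$, that is, near-disjoint supports. If the crude bound fails there, I would instead use the unexpanded form $\inner{a,b}-\inner{a,u}\inner{b,u}$ and exploit $\inner{a,b}\ge0$ directly.

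\textbf{Step 3 (case $d=2$, geometric alternative).} I would use spherical coordinates: $a=\cos A\,u+\sin A\,v$ and $b=\cos B\,u+\sin B\,w$ with $v,w\perp u$ unit vectors and $A,B\in[0,\pi/2]$. Then
\[
\inner{p_y,p_z}=\sin A\sin B\,\inner{v,w}=4\sin\tfrac A2\sin\tfrac B2\cos\tfrac A2\cos\tfrac B2\,\inner{v,w},
\]
while $H_yH_z=2\sin\tfrac A2\sin\tfrac B2$. So the claim is equivalent to
\[
2\cos\tfrac A2\cos\tfrac B2\,\inner{v,w}\ge\text{const}.
\]
Since $A,B\le\pi/2$, we have $2\cos\tfrac A2\cos\tfrac B2\ge1$. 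For $d=2$ the tangent space at $u$ is one-dimensional, so $\inner{v,w}=\pm1$. The hypothesis $H_{yz}<H_z$ says $a$ is closer to $b$ than $u$ is along the quarter-circle arc. This forces $a$ and $b$ onto the same side of $u$, so $\inner{v,w}=1$ and the constant $1$ follows.

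For $d\ge3$ the same representation offers a fallback if Step 2's algebra stalls. The spherical law of cosines expresses $\inner{v,w}$ through $A$, $B$, $C$, and the same $\cos\tfrac A2\cos\tfrac B2\ge\tfrac12$ slack, combined with the analogue of \Cref{lem:direction-component} on the sphere, should give $\inner{v,w}\ge\kappa$ and hence the bound with constant $\kappa$.
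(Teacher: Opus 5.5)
Your proposal is correct, and the step you flagged as the main obstacle closes immediately: for $H_z\le H_y\le 1$,
\[
H_y(1-H_yH_z)+\kappa H_z-\kappa H_y \;=\; H_z(1-H_y^2)+\frac{H_y-H_z}{\beta^2}\;\ge\;0,
\]
so you need neither the triangle inequality nor the spherical fallback.

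Your $d=2$ argument is essentially the paper's. The paper also shows that $\sqrt y$ and $\sqrt z$ lie on the same side of $\sqrt g$ on the quarter circle, so $\inner{\sqrt y,\sqrt z}=\cos(a-b)$, where $\cos a=\inner{\sqrt g,\sqrt y}$ and $\cos b=\inner{\sqrt g,\sqrt z}$. It then uses $\sin x/\sqrt{1-\cos x}=\sqrt2\cos(x/2)\ge 1$, which is your half-angle computation.

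For $d\ge 3$ the routes differ.
\begin{itemize}
\item The paper keeps $\cos c=\inner{\sqrt y,\sqrt z}$ as a parameter and treats the ratio $\inner{p_y,p_z}/(\Hel(y,g)\Hel(z,g))$ as a function of $x=\cos a$ alone. It minimizes this by calculus in two cases, $\cos c\ge 2\cos b$ or not. The minimum is at least $(\cos c-\cos b)/(1-\cos b)=1-\Hel(y,z)^2/\Hel(z,g)^2$, and the hypothesis enters only at the end.
\item You substitute the hypothesis first. This reduces everything to the polynomial inequality $H_y^2(1-H_z^2)+\kappa H_z^2\ge\kappa H_yH_z$ on $[0,1]^2$, which the split $H_y\le H_z$ versus $H_y>H_z$ settles.
\end{itemize}

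The displayed identity holds with $\kappa$ replaced by any $\lambda\in[0,1]$; the last term becomes $(1-\lambda)(H_y-H_z)$. Taking $\lambda=1-H_{yz}^2/H_z^2$ makes $\inner{p_y,p_z}=H_y^2(1-H_z^2)+\lambda H_z^2$ exact, so your argument recovers the paper's intermediate bound. The two approaches therefore give the same strength.

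Yours avoids the one-variable minimization and its bookkeeping. In the paper's Case II, the factor $1/\sqrt{1-\cos b}$ must be kept for the stated equivalence with $5\cos b-4\cos^2 b\ge\cos c$ to hold. Your route is shorter and entirely elementary.
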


\begin{proof}[\pfref{lem:hellinger-direction-component}]
let
\[\langle \sqrt{g}, \sqrt{y}\rangle=\cos a,\quad\langle \sqrt{g}, \sqrt{z}\rangle=\cos b,\quad\text{and}\quad\langle \sqrt{y}, \sqrt{z}\rangle=\cos c,\]
where $0\leq a, b, c\leq \frac{\pi}{2}$.
Then we have
\begin{align*}
    \Hel(y,g)=\sqrt{1-\cos a},\quad \Hel(z, g)=\sqrt{1-\cos b}\quad\text{and}\quad\Hel(z, y)=\sqrt{1-\cos c}.
\end{align*}
Thus, we have
\begin{align*}
    \inner{p_y,p_z}  &= \inner{\sqrt{g}-\sqrt{y},\sqrt{g}-\sqrt{z}} - \inner{\sqrt{g}-\sqrt{y},\sqrt{g}}\inner{\sqrt{g}-\sqrt{z},\sqrt{g}}\\
    &= 1-\cos a-\cos b + \cos c - (1-\cos a)(1-\cos b)\\
    &= \cos c - \cos a\cos b.
\end{align*}
Thus, our ratio of interest is
\begin{align*}
    \frac{\langle p_y, p_z\rangle}{\Hel(y,g)\Hel(z,g)}=\frac{\cos c - \cos a\cos b}{\sqrt{(1-\cos a)(1-\cos b)}}
\end{align*}

When $d=2$,
\begin{align*}
    \inner{\sqrt{y},\sqrt{z}} = \cos (a-b) \text{~~or~~} \cos (a+b).
\end{align*}
But since we have
\begin{align*}
     \frac{\Hel(z,g)}{\Hel(z,y)} = \sqrt{\frac{1-\cos b}{1- \inner{\sqrt{y},\sqrt{z}} }} >  \beta > 1,
\end{align*}
we further have $\inner{\sqrt{y},\sqrt{z}}>\cos b$. Thus $ \inner{\sqrt{y},\sqrt{z}} = \cos (a-b)$.
Then when $d=2$, we have
\begin{align*}
\frac{\langle p_y, p_z\rangle}{\Hel(y,g)\Hel(z,g)}&=\frac{\cos(a-b) - \cos a\cos b}{\sqrt{(1-\cos a)(1-\cos b)}} \\
&=\frac{\sin a \cdot \sin b }{\sqrt{1-\cos a }\sqrt{1-\cos b }}.
\end{align*}
Since $\frac{\sin x }{\sqrt{1-\cos x }} = \sqrt{2}\cos(\frac{x}{2})\geq \sqrt{2}\cos(\frac{\pi}{4}) = 1$ for $x\in [0, \frac{\pi}{2}]$.
Thus for $d=2$, we have
\begin{align*}
    \inner*{p_y ,p_z} \geq  \Hel(y,g)\Hel(z,g).
\end{align*}

When $d\geq3$,
let
\begin{align*}
    u = \frac{\Hel(z,g)}{\Hel(z,y)} = \sqrt{\frac{1-\cos b}{1-\cos c}} >  \beta > 1.
\end{align*}
This implies $\cos c> \cos b$ and $\cos b < 1$.
Let $h(x) = \frac{x^2-1}{x^2}$. Since $h(x)$ is monotonically increasing, we have $h(u)>h(\beta)$. We are going to show our target inequality as the first inequality below that
\begin{align}
\label{ineq:target}
    \frac{\langle p_y, p_z\rangle}{\Hel(y,g)\Hel(z,g)}=\frac{\cos c - \cos a\cos b}{\sqrt{(1-\cos a)(1-\cos b)}} \geq h(u) = \frac{\cos c-\cos b}{(1-\cos b)} > \frac{\beta^2-1}{\beta^2}.
\end{align}
This will imply our desired result for $d\geq 3$. Let
\begin{align*}
    f(x) = \frac{\cos c - x\cos b}{\sqrt{1-x}}.
\end{align*}
The derivative is
\begin{align*}
    f'(x) &= - \frac{\cos b}{\sqrt{1-x}}  + \frac{\cos c - x\cos b}{2(1-x)^{3/2}} \\
    &= \frac{x\cos b + \cos c - 2\cos b}{2(1-x)^{3/2}} .
\end{align*}

\noindent\textbf{Case I}: If $\cos c \geq 2 \cos b$, then $f(x)$ is monotonically increasing on $x\geq 0$. This implies
\begin{align*}
   \frac{\cos c - \cos a\cos b}{\sqrt{1-\cos a}} =  f(\cos a) \geq f(0) = \cos c .
\end{align*}
In this case, our target \eqref{ineq:target} 
can be proven by the last inequality below
\begin{align*}
    \frac{\langle p_y, p_z\rangle}{\Hel(y,g)\Hel(z,g)}&=\frac{\cos c - \cos a\cos b}{\sqrt{(1-\cos a)(1-\cos b)}} \\
    &= \frac{f(\cos a)}{\sqrt{1-\cos b}} \\
    &\geq \frac{f(0)}{\sqrt{1-\cos b}}\\
    &=\frac{\cos c}{\sqrt{1-\cos b}} \\
    &\geq \frac{\cos c-\cos b}{1-\cos b}.
\end{align*}
Here the inequality $f(\cos a)\ge f(0)$ uses the monotonicity of $f$ in Case I,
and the final inequality is 
equivalent to
\begin{align*}
    \cos^2 c(1-\cos b) \geq (\cos c - \cos b)^2.
\end{align*}
This is again equivalent to
\begin{align*}
    2\cos b \cos c \geq \cos^2 c \cos b +\cos^2 b.
\end{align*}
This inequality is true because $1\geq \cos c> \cos b$. This concludes the proof in Case I.

\bigskip

\noindent\textbf{Case II}: If $\cos c < 2\cos b$, then $f(x)$ obtains its minimum at the point $\frac{2\cos b-\cos c}{\cos b}$. Thus our target \eqref{ineq:target} can be proven by the last inequality below
\begin{align*}
    \frac{\langle p_y, p_z\rangle}{\Hel(y,g)\Hel(z,g)}&=\frac{\cos c - \cos a\cos b}{\sqrt{(1-\cos a)(1-\cos b)}} \\
    &= \frac{f(\cos a)}{\sqrt{1-\cos b}} \\
    &\geq f\prn*{\frac{2\cos b-\cos c}{\cos b}}\cdot \frac{1}{\sqrt{1-\cos b}}\\
    &=2\sqrt{(\cos c-\cos b)\cos b}  \\
    &\geq \frac{\cos c-\cos b}{(1-\cos b)}.
\end{align*}
Here the first inequality uses the minimum of $f$ in Case II, and the final
inequality is equivalent to
\begin{align*}
    5\cos b - 4\cos^2 b \geq \cos c .
\end{align*}
This inequality holds because
\begin{align*}
    5\cos b - 4\cos^2 b \geq  2\cos b\cdot \indic( \cos b \leq 3/4) + \indic(\cos b > 3/4) \geq \cos c.
\end{align*}
This concludes the proof in Case II. Thus concludes our proof.
\end{proof}

\begin{proof}[\pfref{prop:hel-boostability}]
Let $\sphere\subset \bR^d$ be the $d-1$ dimensional sphere. Let $F(\sqrt{g})=\sum_{t=1}^T \eta_t \Hel(y_t,g) = \frac{\sqrt{2}}{2} \sum_{t=1}^T \eta_t \norm{\sqrt{g}-\sqrt{y_t}}_2$ be the target function on the sphere \sphere. We are going to show that if $\eta(B_\eps(z;\Hel))>\alpha$, then for any $g\notin ( B_{\beta\eps}(z;\Hel)\bigcup \labelspace)$, we have $\nabla F(\sqrt{g}) \not\parallel \sqrt{g}$ and for $g\in \labelspace\setminus B_{\beta\eps}(z;\Hel)$, we have $t\sqrt{g}\notin \partial F(\sqrt{g})$ for any $t\in \bR$.
For any $\sqrt{g}\notin \labelspace$, $F$ is differentiable at $\sqrt{g}$ with
\[
\nabla F(\sqrt{g})= \frac{\sqrt{2}}{2} \sum_{t=1}^T \eta_t\,\frac{\sqrt{g}-\sqrt{y_t}}{\norm{\sqrt{g}-\sqrt{y_t}}_2}.
\]
Then we can consider the projection onto the tangent space at $\sqrt{g}$, which is
\begin{align*}
    (\nabla F(\sqrt{g}))_{\parallel}
    &=\nabla F(\sqrt{g}) - \inner{\nabla F(\sqrt{g}), \sqrt{g}} \sqrt{g} \\
    &=\sum_{t=1}^T \frac{\sqrt{2}\eta_t}{2\norm{\sqrt{g}-\sqrt{y_t}}_2}
    \prn*{\sqrt{g}-\sqrt{y_t} - \inner{\sqrt{g}-\sqrt{y_t},\sqrt{g}}\sqrt{g}}\\
    &= \sum_{t=1}^T \frac{\sqrt{2}\eta_t}{2\norm{\sqrt{g}-\sqrt{y_t}}_2}  p_{y_t},
\end{align*}
where $p_{y_t} = \sqrt{g}-\sqrt{y_t} - \inner{\sqrt{g}-\sqrt{y_t},\sqrt{g}}\sqrt{g}$. Let $p_z = \sqrt{g}-\sqrt{z} -  \inner{\sqrt{g}-\sqrt{z},\sqrt{g}} \sqrt{g}$.
Specifically, we show that for any $g\notin ( B_{\beta\eps}(z;\Hel)\bigcup \labelspace)$, the inner product $\inner*{ (\nabla F(\sqrt{g}))_{\parallel} , \frac{p_z}{\norm{\sqrt{g}-\sqrt{z}}_2}} \neq 0$, which implies $\sqrt{g}\not\parallel \nabla F(\sqrt{g})$ and $g\notin \med_\Hel(\labelspace, \eta)$.
For any $y_t$, we have by \cref{lem:hellinger-direction-component},
\begin{align}
\label{ineq:hellinger-in-the-ball-product}
    \inner*{p_{y_t},p_z} \geq
    \begin{cases}
    \displaystyle
    \frac{1}{2}\,
    \norm{\sqrt{g}-\sqrt{y_t}}_2\,\norm{\sqrt{g}-\sqrt{z}}_2, & d=2,\\[1.2em]
    \displaystyle
    \frac{\beta^2-1}{2\beta^2}\,
    \norm{\sqrt{g}-\sqrt{y_t}}_2\,\norm{\sqrt{g}-\sqrt{z}}_2, & d\ge 3.
    \end{cases}
\end{align}
Moreover, for any $y_t\notin B_\eps(z;\Hel)$, we have
\begin{align}
\label{ineq:hellinger-out-of-the-ball-product}
    \inner*{p_{y_t},p_z} \geq - 1.
\end{align}
Overall, combining \eqref{ineq:hellinger-in-the-ball-product} and \eqref{ineq:hellinger-out-of-the-ball-product}, for any $g\notin ( B_{\beta\eps}(z;\Hel)\bigcup \labelspace)$ we have
\begin{align*}
    &\sqrt{2}\inner*{(\nabla F(\sqrt{g}))_{\parallel}, \frac{p_z}{\norm{\sqrt{g}-\sqrt{z}}_2}}\\
    &=\sum_{t=1}^T \frac{\eta_t}{\norm{\sqrt{g}-\sqrt{y_t}}_2} \inner*{p_{y_t},\frac{p_z}{\norm{\sqrt{g}-\sqrt{z}}_2}}\\
    &=\sum_{t:y_t\in B_\eps(z;\Hel)} \eta_t \inner*{\frac{p_{y_t}}{\norm{\sqrt{g}-\sqrt{y_t}}_2},\frac{p_z}{\norm{\sqrt{g}-\sqrt{z}}_2}} + \sum_{t:y_t\notin B_\eps(z;\Hel)} \eta_t \inner*{\frac{p_{y_t}}{\norm{\sqrt{g}-\sqrt{y_t}}_2},\frac{p_z}{\norm{\sqrt{g}-\sqrt{z}}_2}}\\
    &\geq
    \begin{cases}
    \displaystyle
     \eta(B_\eps(z;\Hel)) \cdot \frac{1}{2} - (1 - \eta(B_\eps(z;\Hel))), & d=2,\\[1.2em]
     \displaystyle
     \eta(B_\eps(z;\Hel)) \cdot \frac{\beta^2-1}{2\beta^2} - (1 - \eta(B_\eps(z;\Hel))),  & d\ge 3,
    \end{cases}\\
    &>0.
\end{align*}
where the last inequality is by the assumption that $\eta(B_\eps(z;\Hel)) = \alpha >\alpha_{\Hel}(\beta,d) $. Similar arguments can be made to show that for $g\in \labelspace\setminus B_{\beta\eps}(z;\Hel)$, we have $ t\sqrt{g}\notin \partial F(\sqrt{g})$ for all $t\in \bR$.
These conclude the proof by showing that $\med_\Hel(\labelspace, \eta) \subseteq \set{g: \nabla F(\sqrt{g}) \parallel \sqrt{g} ~\text{or}~ t\sqrt{g}\in \partial F(\sqrt{g}) \text{~for~some~} t\in \bR } \subseteq B_{\beta\eps}(z;\Hel)$.
\end{proof}

The Hellinger result gives a clean concentration--enlargement tradeoff for
density-valued prediction. For KL divergence defined as $\KL(p,q)\ldef \sum_{j=1}^d p_j\log (p_j/q_j)$, however, if one aggregates directly using the geometric median induced by $\sqrt{\KL}$, then
any guarantee with a constant enlargement $\beta$ can require a concentration
level $\alpha$ that is essentially too close to one. The following two-point
example makes this quantitative obstruction visible. We then return to
Hellinger aggregation and use it indirectly to obtain KL exceedance control.

\begin{proposition}[KL impossibility]
\label{prop:kl-impossibility}
For any $\delta\in(0,1/25)$,
letting
$y_1=(\delta,1-\delta)$, $y_2=(1-\delta,\delta),$
and
$
\eta(y_2)=3\sqrt{\delta\log(1/\delta)}$, $\eta(y_1)=1-\eta(y_2)$,
we have
$\med_{\sqrt{\KL}}(\{y_1,y_2\},\eta)\neq\{y_1\}$.
\end{proposition}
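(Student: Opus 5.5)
The plan is to exhibit an explicit competitor $g\neq y_1$ in $\Delta_2$ with $F(g)<F(y_1)$. Here
\[
F(g)\;=\;\eta(y_1)\sqrt{\KL(y_1,g)}+\eta(y_2)\sqrt{\KL(y_2,g)},
\]
and the divergence is applied as $\divv(y_t,g)$, as in \Cref{def:geomed}. Since $\KL(y_1,y_1)=0$, we have
\[
F(y_1)=\eta(y_2)\sqrt{\KL(y_2,y_1)},\qquad \KL(y_2,y_1)=(1-2\delta)\log\tfrac{1-\delta}{\delta}\;\approx\;\log(1/\delta).
\]
So $F(y_1)\approx 3\sqrt{\delta}\,\log(1/\delta)$. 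The natural competitor slides the first coordinate slightly toward $y_2$: take $g=(c\delta,1-c\delta)$ with a fixed constant $c>1$. I will commit to $c=e$, which is admissible because $e\delta<1$ for $\delta<1/25$.

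First I bound the cost on the $y_1$ term. We have
\[
\KL(y_1,g)=\delta\log(1/c)+(1-\delta)\log\frac{1-\delta}{1-c\delta}.
\]
Using $\log(1+x)\le x$ on the second term gives
\[
\KL(y_1,g)\le \delta\Bigl(-\log c+\frac{(1-\delta)(c-1)}{1-c\delta}\Bigr)=\delta\,(c-1-\log c)\,(1+O(\delta)).
\]
With $c=e$ the leading constant is $e-2$, so the increase is at most $\eta(y_1)\sqrt{\KL(y_1,g)}\le \sqrt{\delta}\,\sqrt{(e-2)(1+O(\delta))}$.

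Next I bound the gain on the $y_2$ term. Write $a=\KL(y_2,y_1)$ and $b=\KL(y_2,g)$. The difference is exact:
\[
a-b=(1-\delta)\log c+\delta\log\frac{1-c\delta}{1-\delta}.
\]
For $c=e$ this is $1-O(\delta)$. Concavity of the square root gives $\sqrt a-\sqrt b\ge (a-b)/(2\sqrt a)$, and $a\le\log(1/\delta)$. Hence the decrease is at least
\[
\eta(y_2)\,\frac{1-O(\delta)}{2\sqrt{\log(1/\delta)}}=\tfrac32\sqrt{\delta}\,(1-O(\delta)).
\]
The net change is therefore
\[
F(g)-F(y_1)\le\sqrt{\delta}\Bigl(\sqrt{e-2}\,(1+O(\delta))-\tfrac32(1-O(\delta))\Bigr).
\]
Since $\sqrt{e-2}\approx 0.85<1.5$, this is negative, so $y_1$ is not a minimizer and $\med_{\sqrt{\KL}}(\{y_1,y_2\},\eta)\neq\{y_1\}$.

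The step I expect to be the main obstacle is keeping the $O(\delta)$ error terms explicit, so that the inequality holds for every $\delta<1/25$ and not only asymptotically. The margin between $0.85$ and $1.5$ is comfortable, so crude bounds should suffice: $1/(1-e\delta)\le 1/(1-e/25)$ in the first term, and $\log\frac{1-e\delta}{1-\delta}\ge -\frac{(e-1)\delta}{1-e\delta}$ in the second. The proposition also needs $\eta(y_2)<1$. This holds only for $\delta$ small enough; for instance, at $\delta=1/25$ with the natural logarithm one gets $3\sqrt{\delta\log(1/\delta)}\approx1.08>1$. So I would either restrict $\delta$ to the range where $\eta(y_2)\le1$, or reread the constant in the statement. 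If needed, $c$ can be retuned, for example to $c=2$, to improve the numerical slack.
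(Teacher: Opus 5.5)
Your proposal is correct and follows essentially the paper's argument: the paper uses the competitor $g'=(2\delta,1-2\delta)$ (your $c=2$), bounds $\sqrt{\KL(y_1,g')}\le\sqrt{2\delta/5}$ via $\log(1+x)\le x$, and handles the $y_2$ term with the same tangent-line bound for $\sqrt{\cdot}$, using $\KL(y_2,y_1)-\KL(y_2,g')\ge 0.95\log 2$ and $\KL(y_2,y_1)\le\log(1/\delta)$, which gives a net decrease of at least $0.35\sqrt{\delta}$; your choice $c=e$ also works uniformly on $(0,1/25)$, with increase about $0.92\sqrt{\delta}$ against decrease about $1.43\sqrt{\delta}$. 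Your feasibility remark is also right: the paper asserts $3\sqrt{\log 25/25}<1$, but this quantity is about $1.08$, so the statement needs $\delta$ restricted to $9\delta\log(1/\delta)\le 1$ (about $\delta\le 0.032$), which is harmless for its use in \Cref{cor:kl-direct-alpha}, where $\eta(y_2)<1/2$ in every invocation.
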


The use of $\sqrt{\KL}$ rather than $\KL$ is deliberate. The square-root scale
places KL at the same order as Hellinger distance, and this is the scale on
which median-type stability is meaningful. This mirrors the familiar distinction
between $\ell_2$ and $\ell_2^2$: the squared distance does not have the same
median stability properties as the distance itself.

\begin{proof}[\pfref{prop:kl-impossibility}]
We first verify that the weight assignment is feasible, e.g., $\eta(y_2)\in (0,1)$. For $\delta\in (0, 1/25)$, the function $\delta \to 3\sqrt{\delta\log(1/\delta)}$ is monotonically increasing and we have $3\sqrt{0_+\log(1/0_+)} = 0_+$ and $3\sqrt{\log 25 /25} < 1$. This proves $\eta(y_2)\in (0,1)$.

Define
\[
F(g)\;=\;\eta_1\sqrt{\KL(y_1,g)}+\eta_2\sqrt{\KL(y_2,g)},\qquad g\in\Delta_2.
\]
Since $\KL(y_1,y_1)=0$, we have $F(y_1)=\eta_2\sqrt{\KL(y_2,y_1)}$.
Let $g':=(2\delta,1-2\delta)\in\Delta_2$. We will show $F(g')<F(y_1)$, which proves our claim.

\paragraph{Step 1: bound $\sqrt{\KL(y_1,g')}$.}
A direct calculation yields
\[
\KL(y_1,g')
=
\delta\log\frac{\delta}{2\delta}
+(1-\delta)\log\frac{1-\delta}{1-2\delta}
=
-\delta\log 2+(1-\delta)\log\!\Big(1+\frac{\delta}{1-2\delta}\Big).
\]
Using $\log(1+x)\le x$ and $\delta\le 1/25$, we get
\[
\KL(y_1,g')
\le
-\delta\log 2+\frac{(1-\delta)\delta}{1-2\delta}
\le
\delta\Big(\frac{24}{23}-\log 2\Big)
\le
\frac{2}{5}\,\delta,
\]
hence
\begin{equation}
\sqrt{\KL(y_1,g')}\le \sqrt{\tfrac{2}{5}\delta}.
\label{eq:y1-gprime}
\end{equation}

\paragraph{Step 2: bound $\sqrt{\KL(y_2,g')}$.}
For any $a>b>0$, concavity implies
\[
\sqrt{b}\le \sqrt{a}-\frac{a-b}{2\sqrt{a}}.
\]
Apply this with $a=\KL(y_2,y_1)$ and $b=\KL(y_2,g')$ to obtain
\begin{equation}
\sqrt{\KL(y_2,g')}
\le
\sqrt{\KL(y_2,y_1)}
-\frac{\KL(y_2,y_1)-\KL(y_2,g')}{2\sqrt{\KL(y_2,y_1)}}.
\label{eq:concave-step}
\end{equation}

\paragraph{Step 3: lower bound on $\KL(y_2,y_1)-\KL(y_2,g')$.}
Compute
\begin{align*}
\KL(y_2,y_1)-\KL(y_2,g')
&=
(1-\delta)\log 2
+\delta\log\frac{1-2\delta}{1-\delta}.
\end{align*}
Using $\log(1-x)\ge -x-x^2$ for $x\in[0,1/2]$, we obtain
\[
\delta\log\frac{1-2\delta}{1-\delta}
\ge
-\frac{\delta^2}{1-\delta}-\frac{\delta^3}{(1-\delta)^2}
\ge
-2\delta^2,
\]
where we used $\delta\le 1/25$.
Therefore, for all $\delta\in(0,1/25)$,
\begin{align}
\label{eq:diff-in-kl}
  \KL(y_2,y_1)-\KL(y_2,g')
\ge
(1-\delta)\log 2-2\delta^2
\ge
\Big(\frac{24}{25}-\frac{2}{25^2\log 2}\Big)\log 2
\ge
0.95\,\log 2.
\end{align}

\paragraph{Step 4: upper bound $\KL(y_2,y_1)$.}
Since the density ratio is upper bounded by $1/\delta$,
\begin{align}
\label{eq:density-ratio}
    \KL(y_2,y_1) \leq \log (1/\delta)
\end{align}

\paragraph{Step 5: combine all the bounds.}
Using $\eta_1\le 1$, combining \eqref{eq:y1-gprime} -- \eqref{eq:density-ratio} gives
\begin{align*}
F(g')
&=
\eta_2\sqrt{\KL(y_2,g')}+\eta_1\sqrt{\KL(y_1,g')}
\\
&\le
\eta_2\sqrt{\KL(y_2,y_1)}
-\;\eta_2\cdot \frac{0.95\log 2}{2\sqrt{\KL(y_2,y_1)}}
\;+\;\sqrt{\tfrac{2}{5}\delta}
\\
&\le
F(y_1)
-\;\eta_2\cdot \frac{0.95\log 2}{2\sqrt{\log(1/\delta)}}
\;+\;\sqrt{\tfrac{2}{5}\delta}.
\end{align*}
Substituting $\eta_2=3\sqrt{\delta\log(1/\delta)}$ yields
\begin{align*}
    F(g')
&\le
F(y_1)
-\Big(\frac{3\cdot 0.95\log 2}{2}-\sqrt{\tfrac{2}{5}}\Big)\sqrt{\delta}\\
&\leq F(y_1)-0.35\,\sqrt{\delta}<F(y_1).
\end{align*}
Therefore $y_1$ is not a minimizer of $F$, and consequently
$\med_{\sqrt{\KL}}(\set{y_1,y_2},\eta)\neq\{y_1\}$.
\end{proof}

We can now state the direct KL concentration requirement implied by this construction.

\begin{corollary}[Direct KL stability requires full concentration]
\label{cor:kl-direct-alpha}
For $\beta\ge 1$, let
$\alpha_{\sqrt{\KL},\mathrm{direct}}(\beta)$ denote the smallest concentration
level, with $1$ allowed as the limiting value, at which $\sqrt{\KL}$ on $\Delta_2$ is
$(\alpha,\beta)$-stable by its own geometric median. Then
\[
\alpha_{\sqrt{\KL},\mathrm{direct}}(\beta)=1.
\]
Moreover, even if we only consider finite ensembles $\set{y_i}_{i\in [T]}$ and target $z$ satisfying the
density-ratio bound
\[
c_r\;\ldef\;\sup_{i\in[T],\,j\in[d]}\frac{z_j}{y_{ij}}<\infty,
\]
the direct KL requirement remains large. Namely, for $c_r>24$, let
$\alpha_{\sqrt{\KL},\mathrm{direct}}(\beta,c_r)$ denote the corresponding
smallest concentration level under this ratio constraint. Then
\[
\alpha_{\sqrt{\KL},\mathrm{direct}}(\beta,c_r)
\;\ge\;
\max\left\{
\frac12,\,
1-3\sqrt{\frac{\log(c_r+1)}{c_r+1}}
\right\}.
\]
Thus, even under a density-ratio bound, direct $\sqrt{\KL}$ aggregation can
require concentration close to one when $c_r$ is large.
\end{corollary}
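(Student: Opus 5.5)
We derive both claims from \Cref{prop:kl-impossibility}, using the two-point ensemble $y_1=(\delta,1-\delta)$, $y_2=(1-\delta,\delta)$ with target $z=y_1$ and radius $\eps=0$.

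\textbf{Reduction to a stability failure.} With $\eps=0$, the ball $B_0(y_1;\sqrt{\KL})$ is $\{y_1\}$, since $\KL(y_1,g)=0$ forces $g=y_1$. So the ball carries weight $\eta(y_1)=1-3\sqrt{\delta\log(1/\delta)}$. The enlarged ball $B_{\beta\cdot 0}(y_1;\sqrt{\KL})$ is again $\{y_1\}$ for every $\beta$. Stability would therefore force every geometric median to equal $y_1$. \Cref{prop:kl-impossibility} says $\med_{\sqrt{\KL}}(\{y_1,y_2\},\eta)\neq\{y_1\}$. We still need a median different from $y_1$, not just a non-singleton set. The proof of that proposition exhibits $g'$ with $F(g')<F(y_1)$, so $y_1$ is not a minimizer at all. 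Hence some minimizer $g\neq y_1$ exists. Existence follows because $F$ is continuous on the compact set $\Delta_2$, with value $+\infty$ only where the support degenerates, which is harmless. That minimizer lies outside $B_0(y_1)$, so the stability implication fails at level $\alpha=1-3\sqrt{\delta\log(1/\delta)}$ for every $\beta$.

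\textbf{First claim.} Stability at level $\alpha$ implies stability at every $\alpha'\ge\alpha$, because the hypothesis only becomes stronger. So a failure at $\alpha_\delta:=1-3\sqrt{\delta\log(1/\delta)}$ rules out every $\alpha\le\alpha_\delta$. As $\delta\to0$ we have $\alpha_\delta\to1$, so no $\alpha<1$ works, and the smallest admissible level is the limiting value $1$.

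\textbf{Second claim.} Here we track the density ratio of the same example. With $z=y_1$, the ratio is
\[
\max\Big\{\tfrac{\delta}{\delta},\ \tfrac{1-\delta}{1-\delta},\ \tfrac{\delta}{1-\delta},\ \tfrac{1-\delta}{\delta}\Big\}=\tfrac{1-\delta}{\delta}.
\]
Given $c_r>24$, choose $\delta=1/(c_r+1)$. Then $(1-\delta)/\delta=c_r$, so the example satisfies the ratio constraint, and $c_r>24$ is exactly $\delta<1/25$, so \Cref{prop:kl-impossibility} applies. It gives failure at level
\[
1-3\sqrt{\delta\log(1/\delta)}=1-3\sqrt{\log(c_r+1)/(c_r+1)},
\]
hence the stated lower bound. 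The $\tfrac12$ term is immediate, since the definition of stability restricts to $\alpha>\tfrac12$; one could also use the symmetric swap of the two points.

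\textbf{Main obstacle.} The only delicate step is the bookkeeping. First, the proposition only yields a median $\neq y_1$, so we must confirm a minimizer exists and is distinct from $y_1$, as argued above. Second, the definition requires weight at least $\alpha$ in the ball, while the example gives weight exactly $\alpha_\delta$. This means failure at $\alpha_\delta$ itself, so the smallest stable level is at least $\alpha_\delta$, which is exactly what the infimum statement needs.
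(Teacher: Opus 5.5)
Your proposal is correct and follows the paper's proof essentially step for step. It uses the same two-point example from \Cref{prop:kl-impossibility} with $z=y_1$ and $\varepsilon=0$, and the same choice $\delta=1/(c_r+1)$ for the ratio-constrained bound. The differences are cosmetic: you show failure exactly at level $1-3\sqrt{\delta\log(1/\delta)}$ and then use monotonicity of stability in $\alpha$, whereas the paper picks $\delta$ so that $\eta(y_1)>\alpha$ for each fixed $\alpha$; and you justify nonemptiness of the median set via lower semicontinuity of $F$ on $\Delta_2$, which the paper simply asserts.
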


\begin{proof}[\pfref{cor:kl-direct-alpha}]
We first show that no concentration level $\alpha<1$ can guarantee direct
stability without the ratio constraint. Fix any $\alpha\in(\tfrac12,1)$ and
$\beta\ge 1$. Choose
$\delta\in(0,1/25)$ so small that
\[
3\sqrt{\delta\log(1/\delta)}<1-\alpha,
\]
which is possible since $\delta\log(1/\delta)\to0$ as $\delta\downarrow0$.
Let $y_1,y_2$ and $\eta$ be as in \Cref{prop:kl-impossibility}, and set
$z=y_1$ and $\varepsilon=0$. Then
\[
B_0(y_1;\sqrt{\KL})=\{y_1\},
\]
and hence
\[
\eta\!\left(B_0(y_1;\sqrt{\KL})\right)
=
\eta(y_1)
=
1-3\sqrt{\delta\log(1/\delta)}
>
\alpha.
\]
If $\sqrt{\KL}$ were $(\alpha,\beta)$-stable by its own geometric median, this
would imply
\[
\med_{\sqrt{\KL}}(\{y_1,y_2\},\eta)
\subseteq
B_{\beta\cdot 0}(y_1;\sqrt{\KL})
=
\{y_1\}.
\]
Since the geometric median set is nonempty, this would force
$\med_{\sqrt{\KL}}(\{y_1,y_2\},\eta)=\{y_1\}$, contradicting
\Cref{prop:kl-impossibility}. Thus no $\alpha<1$ can work, proving
$\alpha_{\sqrt{\KL},\mathrm{direct}}(\beta)=1$.

For the density-ratio statement, define
\[
a(c_r)\;\ldef\;1-3\sqrt{\frac{\log(c_r+1)}{c_r+1}}.
\]
The lower bound by $1/2$ is built into the definition of the threshold, so it
remains to prove the bound by $a(c_r)$. If $a(c_r)\le 1/2$, there is nothing
to show. Otherwise, fix any $\alpha\in(1/2,a(c_r))$ and set
$\delta=(c_r+1)^{-1}$. Since $c_r>24$, we have $\delta\in(0,1/25)$. Let
$y_1=(\delta,1-\delta)$, $y_2=(1-\delta,\delta)$, set $z=y_1$, and use the
weights from \Cref{prop:kl-impossibility}. The density-ratio constraint holds
because
\[
\sup_{i\in\{1,2\},\,j\in\{1,2\}}\frac{z_j}{y_{ij}}
=
\max\left\{1,\frac{1-\delta}{\delta},\frac{\delta}{1-\delta}\right\}
=
\frac{1-\delta}{\delta}
=
c_r.
\]
Taking $\varepsilon=0$ again gives
\[
\eta\!\left(B_0(z;\sqrt{\KL})\right)
=
\eta(y_1)
=
1-3\sqrt{\delta\log(1/\delta)}
=
a(c_r)
>
\alpha.
\]
If direct $\sqrt{\KL}$ geometric-median aggregation were $(\alpha,\beta)$-stable
under the density-ratio constraint, stability would force
\[
\med_{\sqrt{\KL}}(\{y_1,y_2\},\eta)
\subseteq
B_{\beta\cdot0}(z;\sqrt{\KL})
=
\{y_1\},
\]
again contradicting \Cref{prop:kl-impossibility}. Hence no
$\alpha<a(c_r)$ can guarantee direct stability under the ratio constraint,
which proves the stated lower bound.
\end{proof}

Although direct $\sqrt{\KL}$ aggregation is unstable, the bounded-ratio
condition still permits KL control through Hellinger geometry. When each
candidate density dominates the target up to a factor $c_r$, KL is controlled
by squared Hellinger distance with only logarithmic dependence on $c_r$. We
therefore aggregate with the Hellinger geometric median and then convert the
resulting Hellinger guarantee into a KL guarantee.

\begin{proposition}[KL through Hellinger]
\label{prop:KL-boostability}
For $z\in\Delta_d$ and $\varepsilon>0$, define the KL ball
$B_{\varepsilon}(z;\sqrt{\KL}) \ldef \{y\in\Delta_d : \KL(z, y)\le \varepsilon^2\}$.
Suppose for every $i$ we have the ratio bound $z\le c_r y_i$ coordinate-wise
for some $c_r\ge 1$. Fix $\beta>\sqrt{(2+\log c_r)/3}$.
If $\eta(B_\eps(z;\sqrt{\KL}))>\alpha_{\KL}(\beta,d,c_r)$, where
\[
\alpha_{\KL}(\beta,d,c_r)\;\ldef\;
\begin{cases}
\frac{2}{3}, & d=2,\\[0.5em]
\frac{2\beta^2}{3\beta^2-(2+\log c_r)}, & d\ge 3,
\end{cases}
\]
then every Hellinger geometric median of $(\labelspace,\eta)$ lies in the enlarged KL ball:
\[
\med_{\Hel}(\labelspace,\eta)\subseteq B_{\beta\varepsilon}(z;\sqrt{\KL}).
\]
\end{proposition}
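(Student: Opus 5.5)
The plan is to reduce everything to \Cref{prop:hel-boostability}, applied at a rescaled enlargement factor, and to use two comparison inequalities between $\KL$ and Hellinger. The first is the elementary lower comparison. Writing $\mathrm{BC}(p,q)=\sum_j\sqrt{p_jq_j}$, the paper's normalization gives $\Hel^2=1-\mathrm{BC}$. Then
\[
\KL(p,q)\ \ge\ -2\log \mathrm{BC}(p,q)\ \ge\ 2\bigl(1-\mathrm{BC}(p,q)\bigr)=2\Hel^2(p,q).
\]
In particular $\KL(z,y)\le\varepsilon^2$ implies $\Hel(z,y)\le\varepsilon$, so $B_\varepsilon(z;\sqrt{\KL})\subseteq B_\varepsilon(z;\Hel)$. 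Consequently $\eta(B_\varepsilon(z;\Hel))\ge \eta(B_\varepsilon(z;\sqrt{\KL}))>\alpha_{\KL}(\beta,d,c_r)$.

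The second ingredient is the reverse, ratio-dependent comparison in the style of Yang--Barron:
\[
\KL(p,q)\ \le\ \bigl(2+\log \textstyle\sup_j p_j/q_j\bigr)\,\Hel^2(p,q).
\]
It is proved by splitting $\sum_j p_j\log(p_j/q_j)$ according to whether the ratio $r_j=p_j/q_j$ is at most or exceeds a threshold, and bounding $r\log r-r+1\le(2+\log c)(\sqrt r-1)^2$ for $r\le c$ by a one-variable calculus check. With this in hand, set $\beta'=\beta/\sqrt{2+\log c_r}$ and apply \Cref{prop:hel-boostability} with enlargement $\beta'$. Its threshold is
\[
\frac{2\beta'^2}{3\beta'^2-1}=\frac{2\beta^2}{3\beta^2-(2+\log c_r)}=\alpha_{\KL}(\beta,d,c_r)
\]
for $d\ge3$, and $2/3$ for $d=2$, exactly as in the statement. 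This yields $\Hel(z,g)\le\beta'\varepsilon$ for every $g\in\med_{\Hel}(\labelspace,\eta)$. The reverse comparison then gives $\KL(z,g)\le(2+\log c_r)\beta'^2\varepsilon^2=\beta^2\varepsilon^2$, i.e.\ $g\in B_{\beta\varepsilon}(z;\sqrt{\KL})$. One should check that the positivity condition $\beta>\sqrt{(2+\log c_r)/3}$ is what keeps $\alpha_{\KL}$ finite and below $1$. The Hellinger argument (via \Cref{lem:hellinger-direction-component}) only used $\beta'$ through the sign computation in the proof of \Cref{prop:hel-boostability}, so I would recheck that the case $\beta'\le 1$ is still covered there, or else state the needed range explicitly.

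The main obstacle is the ratio bound for the median itself. The hypothesis controls $z/y_i$ coordinate-wise, but the reverse comparison must be applied to the pair $(z,g)$, which requires $z\le c_r g$. I would first try to show that the Hellinger median satisfies $\sqrt{g_j}\ge\min_i\sqrt{y_{ij}}$ coordinate-wise, which would give $z_j\le c_r y_{ij}$ for the minimizing $i$ and hence $z_j\le c_r g_j$. The argument would use the tangent first-order condition on the sphere, as in the proof of \Cref{prop:hel-boostability}. If $\sqrt{g_j}$ were below every $\sqrt{y_{ij}}$, then moving $\sqrt g$ along the tangent direction that increases coordinate $j$ should strictly decrease every term $\|\sqrt g-\sqrt{y_i}\|_2$, contradicting optimality; one must verify this carefully with the normalization constraint. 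If that fails, the fallback is to apply the reverse comparison to the pairs $(z,y_i)$ only, and to route the final step through the triangle inequality for $\Hel$ combined with the lower comparison. That route would likely worsen constants, so I expect the direct ratio inheritance to be the intended step.
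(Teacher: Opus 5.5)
Your skeleton is the paper's: pass from the KL ball to a Hellinger ball, apply \Cref{prop:hel-boostability} with $\beta'=\beta/\sqrt{2+\log c_r}$, and convert back to KL using a ratio bound inherited by the median. The step that fails as written is the reverse comparison. With the paper's normalization $\Hel^2=1-\mathrm{BC}=\tfrac12\sum_j(\sqrt{p_j}-\sqrt{q_j})^2$, the inequality $\KL(p,q)\le(2+\log c)\Hel^2(p,q)$ is false. Near $p=q$ one has $\KL\approx 4\Hel^2$ while $2+\log c\approx 2$. Concretely, $q=(\tfrac12,\tfrac12)$, $p=(0.51,0.49)$, $c=1.02$ give $\KL(p,q)\approx 2.0\cdot 10^{-4}$ but $(2+\log c)\Hel^2(p,q)\approx 1.01\cdot 10^{-4}$. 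Your own calculus route shows why: $\sum_j q_j(\sqrt{r_j}-1)^2=\sum_j(\sqrt{p_j}-\sqrt{q_j})^2=2\Hel^2$, and the pointwise constant in $r\log r-r+1\le K(\sqrt r-1)^2$ must be at least $2$ near $r=1$. What that route actually proves is $\KL\le 2(2+\log c)\Hel^2$. The printed \Cref{lem:relation-kl-hels} has the same normalization slip, so following it verbatim does not help. With the correct constant and your inclusion $B_\varepsilon(z;\sqrt{\KL})\subseteq B_\varepsilon(z;\Hel)$, you only get $\KL(z,g)\le 2\beta^2\varepsilon^2$. The repair is already in your first step: $\KL\ge 2\Hel^2$ gives the sharper inclusion $B_\varepsilon(z;\sqrt{\KL})\subseteq B_{\varepsilon/\sqrt2}(z;\Hel)$. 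Run \Cref{prop:hel-boostability} at radius $\varepsilon/\sqrt2$ to get $\Hel(z,g)\le\beta'\varepsilon/\sqrt2$. Then $\KL(z,g)\le 2(2+\log c_r)\beta'^2\varepsilon^2/2=\beta^2\varepsilon^2$, so the two factors of $2$ cancel and the stated threshold is unchanged.

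Your worry about $\beta'\le 1$ is justified and needs to be settled, not just rechecked. The hypothesis $\beta>\sqrt{(2+\log c_r)/3}$ only makes the $d\ge 3$ threshold positive. That threshold is below $1$ iff $\beta'>1$, so for $d\ge3$ the range $\beta'\le 1$ is vacuous. For $d=2$ the threshold is $2/3$ for every $\beta$, while \Cref{prop:hel-boostability} (through \Cref{lem:hellinger-direction-component}) needs $\beta'>1$. The statement genuinely fails there. Take a single prediction $y=(0.3,0.7)$ with weight $1$, $z=(\tfrac12,\tfrac12)$, $c_r=5/3$, $\varepsilon^2=\KL(z,y)$, and $\beta=0.95>\sqrt{(2+\log(5/3))/3}\approx 0.915$. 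The concentration is $1>2/3$ and the median is $y$, yet $\KL(z,y)=\varepsilon^2>\beta^2\varepsilon^2$. So for $d=2$ you must assume $\beta>\sqrt{2+\log c_r}$; the paper's proof also uses this implicitly.

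Finally, your ratio-inheritance step is valid and differs from the paper's. Write $v=\sqrt g$, $u_i=\sqrt{y_i}$, and take the tangent vector $w=e_j-v_jv$. If $v_j<\min_i u_{ij}$, then $v\neq u_i$ and $\inner{w,u_i}=u_{ij}-v_j\inner{v,u_i}\ge u_{ij}-v_j>0$ for all $i$. Along $(v+tw)/\norm{v+tw}_2$, every $\inner{\cdot,u_i}$ therefore increases to first order, so every $\norm{v-u_i}_2=\sqrt{2-2\inner{v,u_i}}$ strictly decreases. Meanwhile the coordinates $v_k(1-tv_j)$, $k\neq j$, stay nonnegative for small $t$. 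Hence $g_j\ge\min_i y_{ij}\ge z_j/c_r$. The paper's \Cref{lem:med-preserve-ratio-cover} instead shows $g\in\convp(\labelspace)$, via a spherical projection first-order condition plus a degenerate orthogonal case. That is a stronger structural fact, but your coordinatewise lower envelope is all the KL conversion needs and is more elementary.
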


Numerically, the two KL routes have very different concentration requirements.
The direct $\sqrt{\KL}$ median can require 
$
\alpha_{\sqrt{\KL},\mathrm{direct}}(\beta,c_r)
\ge
1-3\sqrt{\frac{\log(c_r+1)}{c_r+1}},
$
so for large $c_r$ the weak guarantee must concentrate almost all aggregation
weight near the target. In contrast, the Hellinger route keeps the required
concentration at a constant level if the KL-radius enlargement is allowed to
scale logarithmically. For example, when $d\ge3$ and
$\beta^2=\lambda(2+\log c_r)$ with $\lambda>1$,
\[
\alpha_{\KL}(\beta,d,c_r)=\frac{2\lambda}{3\lambda-1}.
\] 
Taking $\lambda=2$ gives $4/5$, and
$\lambda=3$ gives $3/4$.

\begin{lemma}[Lemma 4 of \citep{yang1998asymptotic}]
\label{lem:relation-kl-hels}
For any $p,q\in \bR_{>0}^d$, if $p\le c_r q$ coordinate-wise for some
$c_r\ge 1$, then
\[
    \Hels(p,q) \leq \GKL(p, q) \leq \prn*{2+\log c_r}\cdot\Hels(p,q).
\]
\end{lemma}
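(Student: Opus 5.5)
The plan is to reduce the lemma to a one-variable inequality for each coordinate and then check that inequality by elementary calculus. I will use the normalizations under which the cited Lemma~4 of \citet{yang1998asymptotic} is stated for positive (not necessarily normalized) vectors. The divergence is the generalized KL
\[
\GKL(p,q)=\sum_j\bigl(p_j\log(p_j/q_j)-p_j+q_j\bigr),
\]
which reduces to $\KL$ when $\sum_j p_j=\sum_j q_j$. The squared Hellinger term is $\Hels(p,q)=\sum_j(\sqrt{p_j}-\sqrt{q_j})^2$.

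The convention matters. On $\Delta_d$, with the $\tfrac12$ normalization of \Cref{prop:hel-boostability}, the lower bound only gets stronger, since it becomes $\KL\ge 2\Hels$. The upper constant, however, would have to be doubled: at $q_j/p_j=1/c_r$ the per-coordinate ratio is about $\log c_r$, whereas $(2+\log c_r)/2$ is smaller. So I will prove the lemma exactly in the cited normalization.

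Both sides are sums over coordinates of $p_j$ times a function of $r_j=q_j/p_j$. The hypothesis $p\le c_r q$ says exactly that $r_j\ge 1/c_r$. I substitute $s=\sqrt{r_j}\in[c_r^{-1/2},\infty)$. Then the $j$-th KL term is $p_j\,\psi(s)$ with $\psi(s)=s^2-1-2\log s$, and the $j$-th Hellinger term is $p_j(1-s)^2$. Since $p_j>0$, it suffices to show
\[
(1-s)^2\;\le\;\psi(s)\;\le\;(2+\log c_r)(1-s)^2\qquad\text{for } s\ge c_r^{-1/2}.
\]

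\emph{Lower bound.} We have $\psi(s)-(1-s)^2=2(s-1-\log s)\ge 0$, by $\log s\le s-1$.

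\emph{Upper bound, region $s\ge 1$.} Here I only need the constant $2\le 2+\log c_r$. Let $k(s)=2(1-s)^2-\psi(s)=s^2-4s+3+2\log s$. Then $k(1)=0$ and $k'(s)=2(s-1)^2/s\ge0$, so $k\ge 0$ on $[1,\infty)$.

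\emph{Upper bound, region $s\in[c_r^{-1/2},1)$.} This is the main step. I will prove the $c_r$-free inequality
\[
\psi(s)\le(2-2\log s)(1-s)^2\qquad\text{on }(0,1],
\]
and then use $-2\log s\le\log c_r$ to conclude. To prove it, set $h(s)=(2-2\log s)(1-s)^2-\psi(s)$. Then $h(1)=0$. Differentiating gives
\[
h'(s)=-\tfrac{2}{s}(1-s)^2-4(1-\log s)(1-s)-2s+\tfrac{2}{s}.
\]
I expect this to simplify to $-2(1-s)\bigl(2-2\log s-(1+s)/s+(1-s)/s\bigr)$, that is, $h'(s)=-4(1-s)(1-\log s-1)=4(1-s)\log s\le 0$ on $(0,1]$. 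Hence $h$ is nonincreasing with $h(1)=0$, so $h\ge0$. The only real obstacle is carrying out this derivative computation carefully. If it does not collapse as cleanly as expected, the fallback is to compare Taylor expansions at $s=1$ and check the sign of $h''$ instead.

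Finally, summing the per-coordinate inequalities weighted by $p_j$ gives both sides of the lemma.
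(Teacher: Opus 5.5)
Your proof is correct and complete. The derivative collapses as you hoped: $-\tfrac{2}{s}(1-s)^2-2s+\tfrac{2}{s}=\tfrac{2(1-s)}{s}\bigl[(1+s)-(1-s)\bigr]=4(1-s)$, hence $h'(s)=4(1-s)-4(1-\log s)(1-s)=4(1-s)\log s\le 0$ on $(0,1]$, so no fallback is needed.

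The paper gives no argument of its own here; it simply cites Lemma~4 of Yang and Barron. Your per-coordinate reduction is the standard pointwise route to that lemma. In the variable $r=p_j/q_j=s^{-2}$, your two one-variable inequalities say $(r\log r-r+1)/(\sqrt r-1)^2\le 2+\log\max(r,1)$, and the hypothesis enters only through $r\le c_r$. What you add over the paper is a self-contained elementary proof together with an explicit choice of conventions.

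That choice is not cosmetic, and your remark about it is right. Read literally with the paper's macros ($\GKL=\KL=\sum_j p_j\log(p_j/q_j)$, and $\Hels$ carrying the factor $\tfrac12$), the statement is false:
\begin{itemize}
\item For unnormalized vectors the lower bound fails: $p=q/2$, $c_r=1$ gives $\KL(p,q)<0<\Hels(p,q)$.
\item Even on $\Delta_2$ the upper bound fails: $p=(\tfrac12+\epsilon,\tfrac12-\epsilon)$, $q=(\tfrac12,\tfrac12)$, $c_r=1+2\epsilon$ give $\KL(p,q)=2\epsilon^2+O(\epsilon^4)$, while $(2+\log c_r)\Hels(p,q)=\epsilon^2+O(\epsilon^3)$.
\end{itemize}
This local example is a cleaner witness than your per-coordinate heuristic at $q_j/p_j=1/c_r$. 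That heuristic only becomes a counterexample on the simplex once that coordinate carries small $p$-mass and the normalization is restored elsewhere at second order.

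Generalized KL with the unnormalized $\sum_j(\sqrt{p_j}-\sqrt{q_j})^2$ is therefore the version that is actually true. It is also all the paper needs: the proof of \Cref{prop:KL-boostability} goes through unchanged with $\Hels$ read in your normalization. Rescaling the Hellinger distance by $\sqrt2$ changes neither the Hellinger geometric median nor $(\alpha,\beta)$-stability, and the factor $2$ gained in the lower bound offsets the factor $2$ lost in the upper bound.
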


\begin{lemma}
\label{lem:med-preserve-ratio-cover}
Let $z, y_1,\dots,y_T\in \Delta^d$. Suppose for every $t$, $z\le c_r y_t$
coordinate-wise for some $c_r\ge 1$. Then for any
$g\in \med_\Hel(\labelspace,\eta)$, we have $z\leq c_r g$.
\end{lemma}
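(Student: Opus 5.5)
The plan is to show the stronger coordinatewise statement $g_j\ge \min_t y_{tj}$ for every $j$ and every $g\in\med_\Hel(\labelspace,\eta)$. The lemma then follows at once, since the hypothesis gives $y_{tj}\ge z_j/c_r$ for all $t$, hence $g_j\ge z_j/c_r$, i.e.\ $z\le c_r g$.

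\emph{Reduction to the sphere.} As in the proof of \Cref{prop:hel-boostability}, I would work in square-root coordinates. Write $u=\sqrt{g}$ and $v_t=\sqrt{y_t}$; these are unit vectors in the nonnegative orthant of $\sphere$. Then
\[
\Hel(y_t,g)=\tfrac{1}{\sqrt2}\norm{u-v_t}_2=\sqrt{1-\inner{u,v_t}},
\]
so each term of the median objective is a strictly decreasing function of the inner product $\inner{u,v_t}$. Consequently, if some feasible $u'$ (unit norm, nonnegative) satisfies $\inner{u',v_t}>\inner{u,v_t}$ for every $t$, then $u'^2$ has strictly smaller objective than $g$. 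That would contradict minimality.

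\emph{Key step.} Suppose for contradiction that $u_j<m_j\ldef\min_t v_{tj}$ for some coordinate $j$. Perturb $u$ toward $e_j$: set
\[
u'(s)=\frac{u+se_j}{\norm{u+se_j}_2}, \qquad s>0 \text{ small}.
\]
This stays nonnegative with unit norm, so $u'(s)^2\in\Delta_d$. One computes
\[
\inner{u'(s),v_t}=\frac{\inner{u,v_t}+s\,v_{tj}}{\sqrt{1+2su_j+s^2}},
\]
whose derivative at $s=0$ is $v_{tj}-u_j\inner{u,v_t}$. Since $0\le\inner{u,v_t}\le 1$ and $u_j\ge 0$, this derivative is at least $v_{tj}-u_j\ge m_j-u_j>0$ for every $t$. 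There are finitely many $t$, so a single small $s>0$ strictly increases all inner products simultaneously. Hence $F(u'(s)^2)<F(g)$, contradicting $g\in\med_\Hel$. Therefore $g_j\ge \min_t y_{tj}$ for all $j$.

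\emph{Main obstacle.} The only delicate points are the degenerate cases. First, some $v_t$ might coincide with $u$; this is impossible here, since it would force $u_j=v_{tj}\ge m_j$. Second, the square-root map is nondifferentiable at $\inner{u,v_t}=1$; this does not matter, because the argument uses only monotonicity of $x\mapsto\sqrt{1-x}$ and never differentiates it. If nonsmoothness caused trouble elsewhere, the fallback would be the same pure comparison argument, which needs no differentiability of $F$.
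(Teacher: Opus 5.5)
Your proof is correct, and it takes a different route from the paper's.

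The paper argues globally. It introduces the projected spherical convex hull $\convp(\labelspace)$ and checks that every point of this hull satisfies $z\le c_r y$, using $\norm{\sum_t a_t\sqrt{y_t}}_2\le 1$. It then shows $g\in\convp(\labelspace)$ by projecting $\sqrt g$ onto the hull. That step invokes a first-order condition for projection onto a spherically convex set, which is stated without derivation. It also needs a separate equality-case analysis, including the degenerate case where $\sqrt{y_g}$ is orthogonal to every $\sqrt{y_t}$.

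You argue locally. If $g_j<\min_t y_{tj}$, one small normalized step of $\sqrt g$ toward $e_j$ strictly increases every inner product $\inner{\sqrt g,\sqrt{y_t}}$. Each Hellinger term equals $\sqrt{1-\inner{\sqrt g,\sqrt{y_t}}}$ and the weights are positive, so the objective strictly drops. Because the perturbation improves all terms strictly, you need neither the projection inequality nor any equality-case bookkeeping. Your argument uses only the monotonicity of $x\mapsto\sqrt{1-x}$ and an explicit derivative computation, so it is shorter and fully self-contained.

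What the paper's route buys in exchange is a stronger structural conclusion: every Hellinger median lies in $\convp(\labelspace)$. This implies your envelope bound, since $\sqrt{g_j}=\sum_t a_t\sqrt{y_{tj}}/\norm{\sum_t a_t\sqrt{y_t}}_2\ge\min_t\sqrt{y_{tj}}$, but your bound does not imply hull membership. For the lemma as stated, the coordinatewise bound $g_j\ge\min_t y_{tj}$ you prove is all that is needed.
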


\begin{proof}[\pfref{lem:med-preserve-ratio-cover}]
We first define the projected spherical convex hull of the set
$\labelspace=\set{y_1,\dots,y_T}$ by
\[
    \convp(\labelspace) \ldef \set*{y =
    \frac{\prn*{\sum_t a_t\sqrt{y_t}}^2}
    {\norm{\sum_t a_t\sqrt{y_t}}_2^2} \mid{} a\in \Delta^T},
\]
where $\sqrt{\cdot}$ and $(\cdot)^2$ are both element-wise. For any
$y\in \convp(\labelspace)$, the ratio assumption gives
\[
    \sqrt{z}\leq  \sum_t a_t\sqrt{c_r y_t}.
\]
Since $\norm{\sum_t a_t\sqrt{y_t}}_2 \leq \sum_t a_t \norm{\sqrt{y_t}}_2\leq 1$,
we have for all $y\in \convp(\labelspace)$,
\begin{align}
\label{ineq:convp-is-good-enough}
    z\leq \prn*{\sum_t a_t\sqrt{c_r y_t}}^2
    \leq c_r\frac{\prn*{\sum_t a_t\sqrt{y_t}}^2}
    {\norm{\sum_t a_t\sqrt{y_t}}_2^2}= c_r y.
\end{align}
It remains to show that $g\in \convp(\labelspace)$. We prove this by
contradiction. Assume $g\notin \convp(\labelspace)$, and let
$y_g = \argmin_{y\in\convp(\labelspace)} \norm{\sqrt{y}-\sqrt{g}}_2$.
Since $\convp(\labelspace)$ is spherically convex, the first-order condition
for projecting $\sqrt g$ onto $\convp(\labelspace)$ implies, for every $t$,
\begin{align}
\label{ineq:i}
    \inner{\sqrt{y_t}, \sqrt{g}}\leq
    \inner{\sqrt{y_t}, \sqrt{y_g}}\inner{\sqrt{g},\sqrt{y_g}}
    \leq \inner{\sqrt{y_t}, \sqrt{y_g}}.
\end{align}
Thus
\[
    \nrm{\sqrt{g} -\sqrt{y_t}}_2^2
    \geq
    \nrm{\sqrt{y_g} -\sqrt{y_t}}_2^2
    \qquad\text{for all }t.
\]
By the optimality of $g\in\med_\Hel(\labelspace,\eta)$, equality must hold for
all $t$. If $\inner{\sqrt{y_t},\sqrt{y_g}}>0$ for some $t$, equality in
\eqref{ineq:i} forces $\inner{\sqrt g,\sqrt{y_g}}=1$, hence $g=y_g\in
\convp(\labelspace)$, a contradiction. Otherwise
$\inner{\sqrt{y_t},\sqrt{y_g}}=0$ for all $t$, and then
$\inner{\sqrt{y_t},\sqrt g}=0$ for all $t$ as well. In this case
$\Hel(y_t,g)=1\ge \Hel(y_t,x)$ for every $x\in\Delta^d$, and $y_1$ gives a
strictly smaller weighted Hellinger objective:
\[
    \sum_t \eta_t \nrm{\sqrt{y_1}-\sqrt{y_t}}_2
    =
    \sum_{t=2}^T \eta_t \nrm{\sqrt{y_1}-\sqrt{y_t}}_2
    <
    \sum_{t=1}^T \eta_t \nrm{\sqrt g-\sqrt{y_t}}_2.
\]
This again contradicts $g\in\med_\Hel(\labelspace,\eta)$.

Therefore $g\in \convp(\labelspace)$, and \eqref{ineq:convp-is-good-enough}
implies $z\le c_r g$.
\end{proof}

\begin{proof}[\pfref{prop:KL-boostability}]
By \Cref{lem:relation-kl-hels}, $\Hels\le\KL$, hence
\[
\eta(B_\varepsilon(z;\Hel))
\ge
\eta(B_\varepsilon(z;\sqrt{\KL}))
>
\alpha_{\KL}(\beta,d,c_r).
\]
With $\beta'=\beta/\sqrt{2+\log c_r}$, the threshold matches the Hellinger
stability threshold in \Cref{prop:hel-boostability}, implying
$\med_{\Hel}(\labelspace,\eta)\subseteq B_{\beta'\varepsilon}(z;\Hel)$.
The ratio-cover assumption and \Cref{lem:med-preserve-ratio-cover} then give
$\KL(z,g)\le (2+\log c_r)\Hels(z,g)\le (\beta\varepsilon)^2$ for any
$g\in\med_{\Hel}(\labelspace,\eta)$.
\end{proof}

\begin{corollary}[KL exceedance via Hellinger aggregation]
\label{cor:kl-via-hel-aggregation}
Suppose $\alpha-\gamma>\alpha_{\KL}(\beta,d,c_r)$ for some
$\beta>\sqrt{(2+\log c_r)/3}$, $\WL$ is a weak learner with parameter
$\alpha$, and the ratio condition $y_i\le c_r h_t(x_i)$ holds coordinate-wise
for every sample $i$ and round $t$. Let $f_T$ be the output of \geomedboost\
whose final aggregation step uses the Hellinger geometric median,
\[
f_T(x)\in \med\bigl(\{h_t(x)\}_{t=1}^T,\set{\eta_t'}_{t=1}^T;\Hel,\gamma\bigr),
\]
and whose exceedance loss is evaluated using $\sqrt{\KL}$. Then
\[
L_{\sqrt{\KL},\beta\varepsilon}(f_T)
\;\le\;
\prod_{t=1}^{T}
\sum_{i=1}^n
w_t(x_i)\exp\!\Bigl(-\eta_t\bigl(1-\alpha-C_{\varepsilon}(y_i,h_t(x_i))\bigr)\Bigr).
\]
Moreover, take
$C_\varepsilon(y,y')=\indic\{\sqrt{\KL(y,y')}>\varepsilon\}$. If, in
addition, for some $\zeta>0$ with $\alpha+\zeta\leq 1$ the same weak learner
has parameter $\alpha+\zeta$, then
\[
L_{\sqrt{\KL},\beta\varepsilon}(f_T)\leq \exp(-2\zeta^2T).
\]
Consequently, the empirical exceedance loss becomes zero once
$T>\log(n)/(2\zeta^2)$.
\end{corollary}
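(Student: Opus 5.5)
The plan is to rerun the argument behind \Cref{thm:error-bound}, with the stability step replaced by \Cref{prop:KL-boostability} applied pointwise at each training example. Fix a training point $i$ and suppose $\sqrt{\KL(y_i,f_T(x_i))}>\beta\varepsilon$. By definition of the $\gamma$-perturbed Hellinger median, there is a perturbed ensemble $(\{u_s\},\{\rho_s\})\in\mathcal B_\gamma(\{h_t(x_i)\},\{\eta'_t\})$ with $f_T(x_i)\in\med_\Hel(\{u_s\},\{\rho_s\})$.

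The first thing to check is the ratio condition $y_i\le c_r u_s$ for every point of the perturbed ensemble, since this is what \Cref{prop:KL-boostability} needs. It holds for the original points $h_t(x_i)$ by assumption, but reassigned mass may sit at arbitrary points. To handle this, I would interpret the perturbation as redistributing mass within the ensemble, i.e.\ onto points of $\{h_t(x_i)\}$, which is the phrasing the algorithm section uses ("reassigned arbitrarily within the ensemble"). Under that reading the ratio condition is inherited. This is the main obstacle. If the perturbation is allowed to place mass elsewhere, I would instead restrict the $\gamma$-neighborhood to points satisfying the ratio bound and note that \Cref{lem:med-preserve-ratio-cover} then applies unchanged.

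Granting the ratio condition, apply \Cref{prop:KL-boostability} at level $\alpha-\gamma>\alpha_{\KL}(\beta,d,c_r)$ with target $z=y_i$. In contrapositive form it gives
\[
\sum_s\rho_s\indic\{\sqrt{\KL(y_i,u_s)}\le\varepsilon\}\le\alpha-\gamma .
\]
Undoing the perturbation, which changes this mass by at most $\gamma$, yields $\sum_t\eta'_t\indic\{\sqrt{\KL(y_i,h_t(x_i))}\le\varepsilon\}\le\alpha$. From here, exactly as in \Cref{lem:core}, using $C_\varepsilon\ge\indic\{\sqrt{\KL}>\varepsilon\}$ gives
\[
\sum_t\eta_t\bigl(1-\alpha-C_\varepsilon(y_i,h_t(x_i))\bigr)\le 0 .
\]

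The remaining steps are verbatim from the proof of \Cref{thm:error-bound}. Bound the indicator by the exponential, telescope the exponential reweighting to obtain the product bound, and then, with the indicator surrogate (so $M\le1$) and edge $\zeta$, apply \Cref{lem:hoeffding-round-factor} to each round. This gives $\exp(-2\zeta^2T)$, and the bound falls below $1/n$ once $T>\log(n)/(2\zeta^2)$, which forces the empirical fraction to zero.
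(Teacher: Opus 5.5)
Your argument is the route the paper intends. The corollary is stated without a separate proof, and it is meant to be \Cref{lem:core} plus the potential argument of \Cref{thm:error-bound}, with \Cref{prop:KL-boostability} supplying the stability step. Your contrapositive, the mass-transfer bound $|\eta(A)-\eta'(A)|\le\gamma$, the telescoping, and the Hoeffding step with $M\le 1$ are all fine.

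The obstacle you flag is genuine, and the paper never addresses it. It is also not a matter of interpretation. Under the formal definition of $\mathcal B_\gamma$, mass may be moved to arbitrary points of $\Delta_d$, and then the per-example implication you need is false.

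Here is a concrete instance. Take $d=3$, $z=(\tfrac{1-\tau}{2},\tfrac{1-\tau}{2},\tau)$, and $y_\pm=z\pm\Delta(1,-1,0)$ with weights $\tfrac12,\tfrac12$, and set $\varepsilon^2=\KL(z,y_\pm)$. Then all mass lies in $B_\varepsilon(z;\sqrt{\KL})$ and $z\le 2y_\pm$. Now move $\gamma$ mass to $u=(\tfrac12,\tfrac12,0)$. Write $a_\pm=\sqrt{y_\pm}$, $b=\sqrt{u}$, $e_\pm=(b-a_\pm)/\|b-a_\pm\|_2$, and $F(s)=\tfrac{1-\gamma}{2}(\|s-a_+\|_2+\|s-a_-\|_2)+\gamma\|s-b\|_2$. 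Convexity on $\R^3$ gives $F(s)-F(b)\ge\bigl(\gamma-\tfrac{1-\gamma}{2}\|e_++e_-\|_2\bigr)\|s-b\|_2$ for every $s$.

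With $\gamma=0.1$, $\Delta=0.05$, $\tau=10^{-5}$ one gets $\|e_++e_-\|_2\approx 0.14$. So $u$ is the unique Hellinger median of the perturbed ensemble, yet $\KL(z,u)=+\infty$ because $u_3=0<z_3$. With $c_r=2$, $\beta=10$, $\alpha=0.9$, we have $\alpha-\gamma=0.8>\alpha_{\KL}(10,3,2)\approx 0.67$, so the parameter and ratio hypotheses all hold.

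Since $f_T(x_i)$ may be any element of the perturbed median set, this produces exceedance at an example where every $C_\varepsilon(y_i,h_t(x_i))=0$. It embeds into an actual run that violates the exponential bound: take all labels equal to $z$, and a weak learner that outputs a balanced mix of $y_\pm$ everywhere except a far ratio-respecting point on the currently lightest example. Therefore your restriction (perturbed mass kept on $\set{h_t(x_i)}_t$, or on points obeying the ratio bound so that \Cref{lem:med-preserve-ratio-cover} applies), or simply $\gamma=0$, is a necessary amendment of the statement, not a convenience. With that amendment your proof is complete.

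There is one further caveat, inherited from \Cref{prop:KL-boostability} itself rather than from your argument. For $d=2$, invoking \Cref{prop:hel-boostability} with $\beta'=\beta/\sqrt{2+\log c_r}$ requires $\beta'>1$, which the hypothesis $\beta>\sqrt{(2+\log c_r)/3}$ does not guarantee. Indeed, if $c_r<e$ and $\beta<1$, a single-point ensemble at KL-distance exactly $\varepsilon$ from $z$ already violates the conclusion.
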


\section{Conclusion}
\label{sec:conclusion}

We introduced $(\alpha,\beta)$-stability by geometric median as the geometric
property of a target divergence that explains when aggregation can turn weak guarantees into strong
ones for vector-valued prediction and conditional density estimation. 
With this property, our boosting algorithm \geomedboost achieves exponential decay
of the empirical exceedance loss, and the empirical guarantee then transfers to population error bounds through the generalization analysis.
We also characterized this stability property for the natural divergences
studied in the paper. For unconstrained vector-valued prediction, we found sharp $(\alpha,\beta)$-stability for $\ell_1$ and $\ell_2$ respectively. For conditional
density estimation, we identify similar $(\alpha,\beta)$-stability for $\TV$ and $\Hel$. The KL divergence, however,
requires an indirect route through Hellinger aggregation.
Together, these findings takes a steady step towards clarifying both the
statistical and geometric mechanisms underlying boosting beyond scalar prediction.

{
\bibliography{refs}
}

\appendix
\section{Technical Lemma}

\label{app:technical-lemmas}

\begin{lemma}[Hoeffding bound for one round]
\label{lem:hoeffding-round-factor}
Let $Z$ be a random variable with range contained in an interval of length
$M>0$ and $\En Z\geq \zeta>0$. Then
\[
\inf_{\eta>0}\En e^{-\eta Z}\leq \exp\!\left(-\frac{2\zeta^2}{M^2}\right).
\]
\end{lemma}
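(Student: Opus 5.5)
This is the standard Hoeffding lemma combined with the choice of $\eta$. Let $Z$ take values in $[a,a+M]$, and note that $\En Z\ge\zeta$.

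Hoeffding's lemma for a bounded random variable states that for every real $\lambda$,
\[
\En e^{\lambda(Z-\En Z)}\le \exp\!\left(\frac{\lambda^2M^2}{8}\right).
\]
We apply it with $\lambda=-\eta$ and $\eta>0$. This gives
\[
\En e^{-\eta Z}=e^{-\eta\En Z}\,\En e^{-\eta(Z-\En Z)}\le \exp\!\left(-\eta\zeta+\frac{\eta^2M^2}{8}\right),
\]
where we used $\En Z\ge\zeta$ and $\eta>0$.

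Next we choose $\eta=4\zeta/M^2$, which is positive because $\zeta>0$ and $M>0$. This choice minimizes the quadratic exponent, and it gives
\[
-\frac{4\zeta^2}{M^2}+\frac{16\zeta^2}{8M^2}=-\frac{2\zeta^2}{M^2}.
\]
The infimum over $\eta>0$ is at most the value at this particular $\eta$, so the claim follows.

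If Hoeffding's lemma is to be proved rather than cited, the plan is the usual one. Center $Z$ and set $\psi(\lambda)=\log\En e^{\lambda(Z-\En Z)}$. Then $\psi(0)=\psi'(0)=0$. The second derivative $\psi''(\lambda)$ is the variance of $Z$ under the exponentially tilted distribution, which is still supported on an interval of length $M$. Hence $\psi''(\lambda)\le M^2/4$, and Taylor's theorem gives $\psi(\lambda)\le\lambda^2M^2/8$.

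The only point needing care is that the variance bound for a variable supported on an interval of length $M$ is $M^2/4$. This holds because $\mathrm{Var}(W)\le\En(W-c)^2\le (M/2)^2$, where $c$ is the midpoint of the interval.
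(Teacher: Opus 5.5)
Your proof is correct and follows the paper's argument: apply Hoeffding's lemma at $\lambda=-\eta$, use $\En Z\ge\zeta$ to bound the exponent by $-\eta\zeta+\eta^2M^2/8$, and evaluate at $\eta=4\zeta/M^2$. Your optional sketch of Hoeffding's lemma, which bounds the tilted variance by $M^2/4$ and then applies Taylor's theorem, is also sound; the paper simply cites that lemma.
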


\begin{proof}[\pfref{lem:hoeffding-round-factor}]
Hoeffding's lemma gives, for every $\eta>0$,
\[
\En e^{-\eta Z}
\leq
\exp\!\left(-\eta \En Z+\frac{\eta^2M^2}{8}\right)
\leq
\exp\!\left(-\eta\zeta+\frac{\eta^2M^2}{8}\right).
\]
Taking infimum on the left hand side and $\eta=4\zeta/M^2$ on the right hand side gives the claim.
\end{proof}

\end{document}